\setlist[itemize]{leftmargin=3mm}
\setlist[enumerate]{leftmargin=10mm, label=\alph*)}
\theoremstyle{plain}
\newtheorem{theorem}{Theorem}
\newtheorem{corollary}{Corollary}
\newtheorem{assumption}{Assumption}
\newtheorem{proposition}{Proposition}
\newtheorem{lemma}{Lemma}
\theoremstyle{definition}
\theoremstyle{remark}
\definecolor{mygreen}{RGB}{30, 180, 50}
\def\ifcomments{\iffalse}     
\definecolor{ytcolor}{rgb}{1.0, 0.49, 0.0}
\definecolor{fhcolor}{rgb}{0.523, 0.235, 0.625}
\newcommand{\svmin}[1]{\sigma_{\rm min}\left(#1\right)}
\def\eqref#1{equation~\ref{#1}}
\def\1{\bm{1}}
\def\vtheta{{\bm{\theta}}}
\def\va{{\bm{a}}}
\def\ve{{\bm{e}}}
\def\vf{{\bm{f}}}
\def\vh{{\bm{h}}}
\def\vu{{\bm{u}}}
\def\vv{{\bm{v}}}
\def\vw{{\bm{w}}}
\def\vx{{\bm{x}}}
\def\vy{{\bm{y}}}
\def\vz{{\bm{z}}}
\def\vbeta{{\bm{\beta}}}
\def\mA{{\bm{A}}}
\def\mB{{\bm{B}}}
\def\mC{{\bm{C}}}
\def\mF{{\bm{F}}}
\def\mH{{\bm{H}}}
\def\mI{{\bm{I}}}
\def\mK{{\bm{K}}}
\def\mU{{\bm{U}}}
\def\mW{{\bm{W}}}
\def\mX{{\bm{X}}}
\def\mZ{{\bm{Z}}}
\DeclareMathAlphabet{\mathsfit}{\encodingdefault}{\sfdefault}{m}{sl}
\SetMathAlphabet{\mathsfit}{bold}{\encodingdefault}{\sfdefault}{bx}{n}
\newcommand{\E}{\mathbb{E}}
\newcommand{\R}{\mathbb{R}}
\newcommand{\norm}[1]{\left\lVert#1\right\rVert}
\newcommand{\distas}[1]{\mathbin{\overset{#1}{\sim}}}
\newcommand{\evmin}[1]{\lambda_{\rm min}\left(#1\right)}
\newcommand{\dqk}{d_m}
\newcommand{\dout}{d_m}
\newcommand{\wall}{\vtheta}
\crefname{ineq}{inequality}{inequality}
\crefname{equation}{Equation}{Equations.}
\crefname{equation}{Eq.}{Eqs.}
\crefname{theorem}{Theorem}{Theorems}
\crefname{proposition}{Proposition.}{Propositions}
\crefname{claim}{Claim}{Claims}
\crefname{lemma}{Lemma}{Lemmas}
\crefname{appendix}{Appendix}{Appendix}
\crefname{figure}{Figure}{Figures}
\crefname{algorithm}{Algorithm}{Algorithms}
\crefname{table}{Table}{Tables}
\crefname{section}{Section}{Sections}
\crefname{assumption}{Assumption}{Assumptions}
\newcommand{\san}{Transformer}
\title{On the Convergence of 
Encoder-only 
\\ Shallow Transformers}
\author{%
  David S.~Hippocampus\thanks{Use footnote for providing further information
    about author (webpage, alternative address)---\emph{not} for acknowledging
    funding agencies.} \\
  Department of Computer Science\\
  Cranberry-Lemon University\\
  Pittsburgh, PA 15213 \\
  \texttt{hippo@cs.cranberry-lemon.edu} \\
}
\author{%
  \qquad Yongtao Wu \\
   \qquad  LIONS, EPFL\\
   \qquad  \texttt{yongtao.wu@epfl.ch} \\
  \And
  \qquad  Fanghui Liu\thanks{Work done at LIONS, EPFL.} \\
  \qquad University of Warwick  \\
  \qquad  \texttt{fanghui.liu@warwick.ac.uk} \\
  \AND
 Grigorios G Chrysos\footnotemark[1] \\
  LIONS, EPFL\\
     University of Wisconsin-Madison \\
  \texttt{chrysos@wisc.edu}
  \And
  Volkan Cevher \\
  LIONS, EPFL \\
  \texttt{volkan.cevher@epfl.ch} \\
}
\begin{document}
\maketitle


\begin{abstract}
In this paper, we aim to build the global convergence theory of encoder-only shallow Transformers under a \emph{realistic} setting from the perspective of architectures, initialization, and scaling under a finite width regime. The difficulty lies in how to tackle the softmax in self-attention mechanism, the core ingredient of Transformer. In particular, we diagnose the scaling scheme, carefully tackle the input/output of softmax, and prove that quadratic overparameterization is sufficient for global convergence of our shallow Transformers under commonly-used He/LeCun initialization in practice. Besides, neural tangent kernel (NTK) based analysis is also given, which facilitates a comprehensive comparison. Our theory demonstrates the \emph{separation} on the importance of different scaling schemes and initialization. We believe our results can pave the way for a better understanding of modern Transformers, particularly on training dynamics.

\end{abstract}
\section{Introduction}

Transformers~\citep{vaswani2017attention} have demonstrated unparalleled success in influential applications~\citep{devlin2018bert,brown2020language, wang2018non, dosovitskiy2021an, liu2022swin}. A fundamental theoretical topic concerns the global convergence, i.e., the training dynamics of Transformers, which would be helpful for further analysis, e.g., in-context learning~\citep{von2022transformers,rek2023what}, generalization~\citep{li2023a}. 
In fact, even within a simplified Transformer framework under certain specific regimes, the global convergence guarantees still remain an elusive challenge. 

To theoretically understand this, let us first recall the exact format of the self-attention mechanism, the core ingredient of the Transformer. Given the input $\mX \in \R^{d_s \times d}$ ($d_s$ is the number of tokens and $d$ is the feature dimension of each token), a self-attention mechanism is defined as:
\begin{equation*}
    \text{Self-attention}(\bm X) 
    \triangleq
     \sigma_s
    \left(
    \tau_0
(\mX{\mW}_{Q}^\top)
    \left(\mX \mW_K ^\top\right)^\top
    \right)
    \left(
    \mX \mW_V^\top 
    \right) = \sigma_s
    \left(
    {\color{blue}\tau_0}
\mX {\color{blue}{\mW}_{Q}^\top\mW_K} \mX^{\!\top}
   \right) 
    \left(
    \mX \mW_V^\top 
    \right) \,,
\end{equation*}
where $\sigma_s$ is the row-wise softmax function, $\tau_0 \in \mathbb{R}^+$ is the scaling factor, and the learnable weights are ${\mW}_Q, {\mW}_K, {\mW}_{V} \in \R^{\dout \times d}$ with the width $d_m$. Given $\bm X$, the input of softmax depends on {\color{blue}$\tau_0 \bm W^{\!\top}_Q \bm W_K$}, including the scaling factor $\tau_0$ and initialization schemes for learnable parameters, and thus determines the output of softmax and then affects the performance of Transformers in both theory and practice. There are several scaling schemes in previous literature.
For instance, given $\bm W_Q$ and $ \bm W_K$ initialized by standard Gaussian, the scaling factor $\tau_0$ is chosen by
\begin{itemize}
    \item $\tau_0 =d_m^{-1/2}$ in the original Transformer \citep{vaswani2017attention}: each element in $\tau_0 \bm W^{\!\top}_Q \bm W_K$ is a random variable with mean $0$ and variance $1$. This scaling avoids the blow-up of value inside softmax as $d_m$ increases
    ~\citep{hron2020infinite}.
    \item $\tau_0 = d_m^{-1}$: This scaling stems from the neural tangent kernel (NTK) analysis \citep{NEURIPS2018_5a4be1fa}, a commonly used technical tool for convergence analysis of fully-connected (or convolutional) networks under an infinite-width setting $d_m \to\infty$. 
    However, for Transformer, if one uses this scaling under the infinite-width setting, then by the law of large numbers, we have
     $\lim_{d_m \to\infty} \tau_0 [\bm W^{\!\top}_Q \bm W_K]^{(ij)}  = 0$. As a result, the input of softmax is zero and the softmax degenerates to a pooling layer. That means, the non-linearity is missing, which motivates researchers to carefully rethink this setting.
\end{itemize}

For instance, under the $\tau_0 = d_m^{-1}$ setting, \citet{yang2020tensor} use the same query and key matrices to prevent the softmax from degenerating into a pooling layer.
Besides, to avoid the analytic difficulty of softmax due to the fact that each element of the output depends on all inputs, ~\citet{hron2020infinite} substitute softmax with ReLU under the $\tau_0 =d_m^{-1/2}$ and infinite width setting for simplicity.

Clearly, there exists a gap between theoretical analysis and practical architectures on the use of softmax, and accordingly, this leads to the following open question:
\begin{center}
\emph{How can we ensure the global convergence of Transformers under a realistic setting?}
\end{center}

The primary contribution of this work is to establish the convergence theory of shallow Transformer under a \emph{realistic} setting.
Despite its shallow and encoder-only architecture, our Transformer model captures all the fundamental components found on typical Transformers, including the self-attention mechanism with the softmax activation function, one feedforward ReLU layer, one average pooling layer, and a linear output layer, \emph{cf.} \cref{equ:definition_final4}. We adopt the $\tau_0 =d_m^{-1/2}$ scaling under the finite-width setting and compare the results of LeCun/He initializations, which are commonly used in practical applications.
Besides, the convergence result under the $\tau_0 =d_m^{-1}$ setting (as well as the NTK-based analysis) is also studied, which facilitates a comprehensive comparison. Our theoretical results demonstrate \emph{notable separations} among scaling settings, initializations, and architectures as below:
\looseness=-1
\begin{itemize}
    \item \emph{Scaling:} The global convergence can be achieved under both $\tau_0 =d_m^{-1/2}$ and $\tau_0 =d_m^{-1}$. Nevertheless, as suggested by our theory: for a small $d_m$, there is no significant difference for these two scaling schemes on the convergence; but for a large enough $d_m$, the $\tau_0 =d_m^{-1/2}$ scaling admits a faster convergence rate of Transformers than that with $\tau_0 =d_m^{-1}$. Interestingly, under this $\tau_0 =d_m^{-1}$ setting, our theory also demonstrates the \emph{separation} on the convergence result, depending on whether the input is formed along sequence dimension ($d=1$) or embedding dimension ($d_s=1$).
    \item \emph{Initialization:} Under LeCun and He
initialization, our shallow Transformer admits a faster convergence rate than the NTK initialization. This could be an explanation for the seldom usage of
NTK initialization for Transformer training in practice. 
\item \emph{Architecture:} Quadratic over-parameterization is enough to ensure the global convergence of our shallow Transformer. As a comparison, if the self-attention mechanism is substituted by a feed-forward ReLU layer, our shallow Transformer is close to a three-layer fully-connected ReLU neural networks to some extent, requiring cubic over-parameterization for global convergence.
\end{itemize}

We firmly believe that our theoretical analysis takes a significant step towards unraveling the mysteries behind Transformers from the perspective of global convergence. 
We hope that our analytical framework and insights on various initialization and scaling techniques would be helpful in training modern, large-scale Transformer-based models~\citep{radford2018improving,brown2020language}.
\section{Related work}
{\bf  Self-attention, Transformer:} Regarding training dynamics, \citet{snell2021approximating} explains why single-head attention focuses on salient words by analyzing the evolution throughout training. \citet{hron2020infinite} show that the output of \san{} converges to Gaussian process kernel and provide the NTK formulation of \san. Recently, \citet{li2023a} provide sample complexity of shallow Transformer to study its generalization property under a good initialization from pretrained model. The separation between the Transformer and CNN is recently explored. \citet{jelassi2022vision} provably demonstrate that Vision Transformer (ViT) has the ability to learn spatial structure without additional inductive bias such as the spatial locality in CNN. \citet{chen2022when} study the loss landscape of ViT and find that ViT converges at sharper local minima than ResNet. \citet{park2022how} show that ViT is a low-pass
filter while CNN is a high-pass filter, thus, these two models can be complementary.

{\bf NTK, lazy training, Hessian:} The NTK was introduced by \citet{NEURIPS2018_5a4be1fa} to connect the infinite-width neural network trained by gradient descent and the kernel regression. 
The roles of such kernel include analysis of the training dynamics of the neural network in the over-parameterization regime~\citep{allen2019convergence,chizat2019lazy,du2019gradient,du2018gradient,zou2020gradient}. 
The global convergence, generalization bound, and memorization capacity largely depend on the minimum eigenvalue of the NTK~\citep{cao2019generalization,zhu2022generalization,nguyen2021tight,bombari2022memorization}. Even though the NTK is extended from FCNN to several typical networks including Transformer~\citep{tirer2020kernel,huang2020deep,arora2019exact,alemohammad2021the,nguyen2020global}, it has not been used to analyze the global convergence of Transformer. On the other hand, the stability of the tangent kernel during training is required when connecting to kernel regression, but such stability can not be explained by the phenomenon of lazy training~\citep{chizat2019lazy}, which indicates a small change of the parameters from initialization. The hessian spectral bound is the main reason for the stability of kernel, as mentioned in \citet{liu2020linearity}.

{\bf 
Over-parameterization for convergence analysis}: Due to over-parameterization, neural networks (NNs) can fit arbitrary labels with zero training loss when trained with (stochastic) gradient descent (SGD), both theoretically~\citet{li2018learning,du2018gradient} and empirically~\citep{zhang2017understanding}.
This leads to an interesting question in theory: \emph{how much overparameterization is enough to ensure global convergence of NNs?} A common recipe for the proof of global convergence relies on the variant of Polyak-Lojasiewicz condition~\citep{polyak1963gradient,liu2022loss}, NTK~\citep{du2018gradient,du2019gradient,zou2019improved,allen2019convergence}, or the minimum eigenvalue of the gram matrix~\citep{nguyen2021proof,bombari2022memorization}. In \cref{appendix:relatedwork_convergence},
we provide a comprehensive overview of a recent line of work that improves the over-parameterization condition for ensuring the convergence of NNs. However, the over-parameterization condition for Transformer to achieve global convergence remains elusive from existing literature and we make an initial step towards this question.
\section{Problem setting}
This section includes the problem setting with notations and model formulation of the shallow Transformer that is studied in this paper.

\subsection{Notation}
Vectors (matrices) are symbolized by lowercase (uppercase) boldface letters, e.g., $\vw$, $\mW$.
We use $\|\cdot\|_\mathrm{F}$ and $\|\cdot\|_2$ to represent the Frobenius norm and the spectral norm of a matrix, respectively. The Euclidean norm of a vector is symbolized by $\|\cdot\|_2$. The superscript with brackets is used to represent the element of a vector/matrix, e.g., $w^{(i)}$ is the $i^{\text{th}}$ element of $\vw$. The superscript without brackets symbolizes the parameters at different training step, e.g., $\wall^t$.
We denote by $[N] = \{1, \dots, N \}$ for short. We use $\sigma_{\min}(\cdot)$ and $\lambda_{\min}(\cdot)$ to represent the minimum singular value and minimum eigenvalue of a matrix.
The NTK matrix and hessian matrix of the network are denoted by $\mK$ and $\mH$, respectively. 
The order notation, e.g., $\widetilde{\mathcal{O}}$, $\widetilde{\Omega}$, omits the logarithmic factor.
More detailed notation can be found in \cref{table:symbols_and_notations} of the appendix.

Let $X \subseteq \mathbb{R}^{d_s \times d}$ be a compact metric space with $d_s \leq d$ and $Y \subseteq \mathbb{R}$, where $d$ is the dimension of each token, $d_s$ is the total sequence length of the input. The training set $ \{  (\mX_n, y_n) \}_{n=1}^N $ is assumed to be iid sampled from an unknown probability measure on $X \times Y$.
In this paper, we focus the regression task by employing the squared loss.
The goal of our regression task is to find a hypothesis, i.e., a Transformer $f: X \rightarrow Y$ in our work, such that $f(\mX; \wall)$ parameterized by $\wall$
is a good approximation of the label $y \in Y$ corresponding to a new sample $\mX \in X$. We use a vector $ \wall$ to denote the collection of all learnable parameters.  

\subsection{Model formulation of shallow Transformer}
Throughout this work, we consider the encoder of \san{}, which can be applied to both regression and classification tasks~\citep{yuksel2019turkish,dosovitskiy2021an}.
Given an input $\mX \in \R^{d_s \times d}$, the model is defined as below:
\begin{subequations}
\renewcommand{\theequation}{\theparentequation.\arabic{equation}}
\begin{align}
&
\label{equ:definition_final1}
    \mA_{1}
    =  \text{Self-attention}(\bm X) \triangleq 
    \sigma_s
    \left(
    \tau_0
(\mX{\mW}_{Q}^\top)
    \left(\mX \mW_K ^\top\right)^\top
    \right)
    \left(
    \mX \mW_V^\top 
    \right),
\\& 
    \mA_{2}
    = 
    \tau_1
    \sigma_r{(\mA_{1}
    \mW_H)}
    \,,
\qquad
    \va_{3}
    = 
    \varphi{(\mA_{2})}
        ,
\qquad
\label{equ:definition_final4}
f(\mX;\wall) =  
  \va_{3}^\top
 \vw_O
    \,,
\end{align}
\end{subequations}
where the output is $f(\mX; \wall) \in \mathbb{R}$, $\tau_0$ and $\tau_1$ are two scaling factors.   

The ingredients of a \san{} with width $d_m$ are defined as follows:
\begin{itemize}
    \item A {\em self-attention} mechanism (\cref{equ:definition_final1}): $\sigma_s$ is the row-wise softmax function; the learnable parameters are
${\mW}_Q, {\mW}_K, {\mW}_{V} \in \R^{\dqk \times d}$. We employ Gaussian initialization ${\mW}_{Q}^{(ij)} \sim \mathcal{N}( 0,\eta_{Q})$, ${\mW}_{K}^{(ij)} \sim \mathcal{N}( 0, \eta_{K})$, ${\mW}_{V}^{(ij)} \sim \mathcal{N}( 0, \eta_{V})$ with $i \in [d_m]$ and $j \in [d]$. Refer to \cref{tab:init} for typical initialization examples.
    \item A {\em feed-forward ReLU} layer (in \cref{equ:definition_final4}): $\sigma_r$ is the ReLU activation function; the learnable parameter is $ \mW_H \in \R^{\dout \times \dout}$. 
Following \citet{yang2022transformers}, we combine $\mW_V$ and $\mW_H$ together (by setting $\mW_H = \mI$ ) for ease of the analysis.
Note that it does not mean its training dynamics are the same as the joint-training of these two adjacent matrices.

    \item  An {\em average pooling} layer (in \cref{equ:definition_final4}): $\varphi$ indicates the column-wise average pooling. Note that the average pooling layer is applied along the sequence length dimension to ensure the final output is a scalar, which is commonly used in practical Vision Transformer or theoretical analysis~\citep{dosovitskiy2021an,yang2020tensor}.
    \item An {\em output} layer (in \cref{equ:definition_final4}) with learnable parameter $\vw_O \in \R^{\dout}$, initialized by $\vw_O^{(i)} \sim \mathcal{N}( 0, \eta_{O})$.
\end{itemize}

{\bf Remarks:}
Proper initialization and scaling are required to ensure the convergence and learnability, as seen in previous work~\citep{NEURIPS2018_5a4be1fa,tirer2022kernel,lee2019wide}.
For our convergence analysis, we consider standard Gaussian initialization with different variances and different scaling factor that includes
three typical initialization schemes in practice. In \cref{tab:init}, we detail the formula of LeCun initialization, He initialization, and NTK initialization.

Given $N$ input samples $\{{\mX_n}\}_{n=1}^N$, the corresponding ground truth label, the final output of network, and the output of the last hidden layer, are denoted by:
$$\vy\triangleq \{y_n\}_{n=1}^N \in \R^N,\quad  \vf(\wall) \triangleq\{ f(\mX_n;\wall) \}_{n=1}^N \in \R^N,\quad \mF_\mathrm{pre}(\wall)\triangleq \{\va_{3}(\mX_n;\wall) \}_{n=1}^N \in \R^{N \times d_m} .$$
We consider standard gradient descent (GD) training of \san{}, as illustrated in \cref{alg:gd}.
Here the squared loss is expressed as
$
\ell(\wall)
= \frac{1}{2}\norm{\vf(\wall)-\vy}_2^2
$. 

\begin{figure}[t]
    \begin{minipage}{.48\linewidth}
    \centering
    \captionof{table}{Common initialization methods with their variances of Gaussian distribution and scaling factors. The choice of $\tau_1 = d_m^{-1/2}$ is based on standard NTK initialization on prior literature~\citep{du2018gradient}.}
    \begin{tabular}
    {cccccc}
    \toprule[0.8pt]
    Init. & $\eta_O$ & $\eta_V$ & $\eta_Q$ & $\eta_K$ & $\tau_1$   \\
    \midrule
         LeCun 
         & $\dout^{-1}$ & $d^{-1}$ & $d^{-1}$ & $d^{-1}$ & $1$\\
    He 
    & 
        $2\dout^{-1}$ & $2 d^{-1}$ & $2 d^{-1}$ & $2 d^{-1}$ & $1$ \\
    NTK
        & 1 & 1 & 1 & 1 & $d_m^{-1/2}$ \\
    \bottomrule[1pt]
    \end{tabular}
    \label{tab:init}
    \end{minipage}%
    \hfill
    \hfill
    \hspace{3mm}
    \begin{minipage}{.45\linewidth}
        \begin{algorithm}[H] 
        \caption{Gradient descent training}
        \begin{algorithmic}
        \STATE {\bfseries Input:} data $ (\mX_n, y_n)_{n=1}^N $, step size $\gamma$.\\
        \STATE   Initialize weights as follows: 
        \\
        $\quad \, \, \vtheta^0 := \{
        {\mW}_Q^0, {\mW}_K^0, {\mW}_V^0, {\mW}_O^0
        \}.$
        \FOR{$t=0$ {\bfseries to} $t'-1$}
        \STATE{
        $\mW_Q^{t+1} = {\mW}_Q^{t}- \gamma\cdot \nabla_{{\mW}_Q} \ell (\vtheta^t)$,
        ${\mW}_K^{t+1} = {\mW}_K^{t}- \gamma\cdot \nabla_{{\mW}_K}\ell (\vtheta^t)$,
        \\
        ${\mW}_V^{t+1} = {\mW}_V^{t}- \gamma\cdot \nabla_{{\mW}_V} \ell (\vtheta^t)$,
        ${\mW}_O^{t+1} = {\mW}_O^{t}- \gamma\cdot \nabla_{{\mW}_O}\ell (\vtheta^t)$.
        } 
        \ENDFOR \\
        \textbf{Output}: the model based on ${\vtheta}^{t'}$.
        \end{algorithmic}
        \label{alg:gd}
        \end{algorithm}
    \end{minipage}
\end{figure}

\section{Main results} 

In this section, we study the convergence guarantee of \san{} training by GD under the squared loss. Firstly, we provide a general analytical framework in \cref{sec:convergence_general} covering different initialization schemes, where we identify the condition for achieving global convergence.
Next, we validate these conditions for several practical initialization schemes under $\tau_0 = d_m^{-1/2}$ in \cref{sec:convergence_scalenew} and $\tau_0 = d_m^{-1}$ in \cref{sec:convergence_scale}, respectively. We include NTK-based results for self-completeness.
Discussion on the convergence results with different scalings, initializations, and architectures is present in \cref{sec:discussion}.

\subsection{General framework for convergence analysis}
\label{sec:convergence_general}

Before presenting our result, we introduce a basic assumption.

\begin{assumption}
\label{assumption:distribution_1}
The input data is bounded $\| \mX \|_\mathrm{F} \le \sqrt{d_s} C_x$ with some positive constant $C_x$.
\end{assumption}
\textbf {Remark:} This assumption is standard as we can always scale the input.
The embedding of token is usually assumed to have a unit norm~\citep{li2023a}, which is unrelated to $d$.

Now we are ready to present our proposition, with the proof deferred to \cref{sec:general}. 
Notice that our proposition holds with high probability under some conditions, which depend on certain initialization schemes and scaling factors.
Since our proposition is devoted to a unifying analysis framework under various initialization schemes, we do not include specific probabilities here.
The validation of the required conditions and probability is deferred to \cref{sec:convergence_scalenew,sec:convergence_scale}, respectively.
\begin{proposition}\label{thm:general}
    Consider the \san{} with $d_m\geq N$. Let $C_Q, C_K, C_V, C_O$ be some positive constants, and define the following quantities at initialization for simplification: 
\begin{itemize}
\item The norm of the parameters: 
$$
\bar{\lambda}_Q \triangleq \norm{\mW_Q^0}_2 + C_Q,\;
\bar{\lambda}_K \triangleq \norm{\mW_K^0}_2 + C_K,\;
 	\bar{\lambda}_V \triangleq \norm{\mW_V^0}_2 + C_V,\;
\bar{\lambda}_O\triangleq \norm{\vw_O^0}_2 + C_O.
$$
\item Two auxiliary terms:  $\rho \triangleq N^{1/2} d_s^{3/2} \tau_1 C_x, \quad $  $z \triangleq \bar{\lambda}_{O}^2
    \left( 1
    +
    4 \tau_0^2 C_x^4 d_s^2
    \bar{\lambda}_{V}^2
        \left(
            \bar{\lambda}_{Q}^2+
            \bar{\lambda}_{K}^2
                \right) 
    \right)\,$.
\end{itemize} 

Under \cref{assumption:distribution_1}, we additionally assume that the minimum singular value of $\mF_{\mathrm{pre}}^0$, i.e., $\alpha\triangleq\svmin{\mF_{\mathrm{pre}}^0}$  satisfies the following condition at initialization:
    \begin{align}
&
     \alpha^2 \geq 8  \rho
     \max \left(
     \bar{\lambda}_V C_O^{-1}, 
     \bar{\lambda}_O C_V^{-1},
     2\tau_0 C_x^2 d_s  \bar{\lambda}_K
      \bar{\lambda}_V
       \bar{\lambda}_O C_Q^{-1},
   2\tau_0 C_x^2  d_s \bar{\lambda}_Q
      \bar{\lambda}_V
       \bar{\lambda}_O C_K^{-1}     \right)
        \sqrt{2\ell(\wall^0)}\,,
        \label{equ:condition1}
\\&
    \alpha^3 \geq 
    32
        \rho^2 
        z      \sqrt{2\ell(\wall^0)}/\bar{\lambda}_{O}\,.
    \label{equ:condition2}
    \end{align}
If the step size satisfies
$
     \gamma \leq 1/C
    $ with a constant $C$ depending on ($\bar{\lambda}_Q,\bar{\lambda}_K,\bar{\lambda}_V,\bar{\lambda}_O,\ell(\wall^0),\rho,\tau_0$),    
    then GD converges to a global minimum as follows:
    \begin{equation}
	\ell(\wall^t)\leq \left(1-\gamma\frac{\alpha^2}{2}\right)^t\ \ell(\wall^0)\,, \quad \forall t\geq 0\,.
    \label{equ:convergence}
    \end{equation}
\end{proposition}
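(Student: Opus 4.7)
The strategy is the standard overparameterization induction, leveraging the fact that $f(\mX;\wall) = \va_3^\top \vw_O$ is linear in the last-layer weights $\vw_O$. Stacking over the $N$ training points, the Jacobian of $\vf$ with respect to $\vw_O$ is exactly $\mF_{\mathrm{pre}}$, so we obtain a Polyak--\L ojasiewicz--type lower bound $\|\nabla_{\vw_O}\ell(\wall)\|_2^2 \geq \svmin{\mF_{\mathrm{pre}}(\wall)}^2 \cdot \|\vf(\wall)-\vy\|_2^2 = 2\,\svmin{\mF_{\mathrm{pre}}(\wall)}^2 \, \ell(\wall)$. Consequently, it suffices to control $\svmin{\mF_{\mathrm{pre}}(\wall^t)}$ along the trajectory.

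Concretely, I would prove \cref{equ:convergence} by induction on $t$, maintaining three invariants simultaneously: (i) $\ell(\wall^t)\leq (1-\gamma\alpha^2/2)^t \ell(\wall^0)$; (ii) each block satisfies $\|\mW_\star^t - \mW_\star^0\|_2 \leq C_\star$ for $\star \in \{Q,K,V,O\}$; (iii) $\svmin{\mF_{\mathrm{pre}}^t} \geq \alpha/\sqrt{2}$. The base case $t=0$ is immediate. For the inductive step, invariant (iii) combined with a standard descent lemma (where the smoothness constant $C$ hidden in the step-size condition is derived from the norm bounds $\bar\lambda_\star$ together with the Lipschitz properties of softmax, ReLU, and average pooling) gives invariant (i) at time $t+1$. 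Summing the block gradients and using the geometric decay,
\begin{equation*}
\|\mW_\star^{t+1}-\mW_\star^0\|_2 \;\leq\; \gamma\sum_{s=0}^{t} \|\nabla_{\mW_\star}\ell(\wall^s)\|_2 \;\leq\; \gamma L_\star \sum_{s=0}^{t} \sqrt{2\ell(\wall^s)} \;\leq\; \frac{4 L_\star \sqrt{2\ell(\wall^0)}}{\alpha^2}\,,
\end{equation*}
where $L_\star$ is the operator-norm bound on the Jacobian of $\vf$ with respect to block $\star$, obtained by propagating $\bar\lambda_Q,\bar\lambda_K,\bar\lambda_V,\bar\lambda_O$ through the forward map. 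A direct calculation shows $L_O \lesssim \rho \bar\lambda_V$, $L_V \lesssim \rho \bar\lambda_O$, and $L_Q, L_K \lesssim \tau_0 C_x^2 d_s \rho \bar\lambda_V \bar\lambda_O \bar\lambda_{K\text{ or }Q}$; matching these four bounds against $C_\star$ yields exactly the four-term maximum in \cref{equ:condition1}, so invariant (ii) is preserved.

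To re-establish invariant (iii), I would derive a global Lipschitz bound for the feature map $\wall \mapsto \mF_{\mathrm{pre}}(\wall)$ on the product of balls of radii $C_\star$. Concretely, by a chain-rule perturbation analysis through self-attention, ReLU, and pooling, and using the $\ell_1\!\to\!\ell_1$ contractivity of row-wise softmax together with $\|\mX\|_\mathrm{F}\leq \sqrt{d_s}C_x$, one obtains
\begin{equation*}
\|\mF_{\mathrm{pre}}(\wall)-\mF_{\mathrm{pre}}(\wall^0)\|_2 \;\lesssim\; \sqrt{z/\bar\lambda_O^2}\;\rho \cdot \Big(\|\mW_Q-\mW_Q^0\|_2 + \|\mW_K-\mW_K^0\|_2 + \|\mW_V-\mW_V^0\|_2\Big),
\end{equation*}
where the factor $z$ precisely packages the products of norms arising in the self-attention Jacobian. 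Inserting the movement bound from the previous step and demanding the total perturbation be at most $(1-1/\sqrt{2})\alpha$ gives the cubic condition $\alpha^3 \gtrsim \rho^2 z \sqrt{2\ell(\wall^0)}/\bar\lambda_O$ in \cref{equ:condition2}, closing the induction via Weyl's inequality.

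The main obstacle is the Lipschitz/perturbation analysis of the self-attention block, because each entry of $\sigma_s(\tau_0 \mX \mW_Q^\top \mW_K \mX^\top)$ depends jointly on all entries of a row of $\mW_Q^\top \mW_K$, coupling the $Q$ and $K$ perturbations multiplicatively. The careful accounting — softmax contraction controlling one factor of $d_s$, the embedding norm $C_x^2 d_s$ supplying another, and the product $\bar\lambda_V\bar\lambda_O$ propagating through the remaining layers — is what produces the $z$ and $\rho$ grouping stated in the proposition, and is exactly what the conditions \cref{equ:condition1,equ:condition2} are tailored to absorb.
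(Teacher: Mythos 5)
Your proposal is correct and matches the paper's proof in all essential respects: the same four-part induction (weight-norm bounds, lower bound on $\svmin{\mF_{\mathrm{pre}}^t}$, loss decay), the same gradient-norm plus geometric-series argument to bound weight movement against $C_\star$ (yielding \cref{equ:condition1}), the same Lipschitz perturbation of $\mF_{\mathrm{pre}}$ plus Weyl's inequality to close the $\sigma_{\min}$ invariant (yielding \cref{equ:condition2}), and the same last-layer PL lower bound $\|\nabla_{\vw_O}\ell\|_2^2 \ge \svmin{\mF_{\mathrm{pre}}}^2\,\|\vf-\vy\|_2^2$ combined with a descent lemma. The only deviations are cosmetic constants (e.g.\ you maintain $\svmin{\mF_{\mathrm{pre}}^t}\ge\alpha/\sqrt{2}$ where the paper uses $\alpha/2$, and you absorb the softmax-Jacobian factors into $\sqrt{z/\bar\lambda_O^2}$ rather than keeping them block-by-block as in the paper's Lemma on the $\mF_{\mathrm{pre}}$ difference), none of which change the structure or validity of the argument.
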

\textbf {Remark:} The parameter $\alpha$ in \cref{equ:condition1,equ:condition2} controls the convergence rate of global convergence, and the condition will be verified in the next subsection.
The step-size $\gamma$ is inversely proportional to $N^{1/2}$ due to the construction of $C$ in \cref{sec:general}. 

\textbf {Proof sketch:}  The main idea of our convergence analysis is based on the variant of Polyak-Lojasiewicz (PL) inequality~\citep{polyak1963gradient,nguyen2021proof}, i.e.,
$|| \nabla \ell({\vtheta}) ||_2^2 \ge 2 \lambda_{\min}({\mK}) \ell({\vtheta}) \ge 2 \lambda_{\min}({\mF}_{\mathrm{pre}} {\mF}_{\mathrm{pre}}^{\top}) \ell({\vtheta})$. Thus, if the minimum singular value of $\mF_{\mathrm{pre}}$ is strictly greater than 0, then minimizing the gradient on the LHS will drive the loss to zero. To this end, \cref{thm:general} can be split into two parts. First, by induction, at every time step, each parameter in the Transformer can be bounded w.h.p;  the minimum singular value of $\mF_{\mathrm{pre}}$ is bounded away for some positive quality at the initialization point. Secondly, we prove that the Lipschitzness of the network gradient, which means the loss function is almost smooth. Combining the above two results, the global convergence can be achieved. Furthermore, based on different initialization and scaling schemes, we are able to validate \cref{equ:condition1,equ:condition2} via the spectral norm estimation of $\Bar{\lambda}_Q$, $\Bar{\lambda}_K$, $\Bar{\lambda}_V$, $\Bar{\lambda}_O$ and a positive lower bound for $\mF_{\mathrm{pre}}$ in the following section.
 
\subsection{LeCun and He initialization under the $d_m^{-1/2}$ setting}
\label{sec:convergence_scalenew}
Here we aim to show that, under the LeCun/He initialization with the setting of $d_m^{-1/2}$, the conditions in \cref{equ:condition1,equ:condition2} will be satisfied with high probability and scaling schemes in \cref{tab:init} and hence lead to global convergence. To derive our result, we need the following assumptions on the input data regarding the rank and dissimilarity of data.
\begin{assumption}
\label{assumption:distribution_fullrank}
We assume that the input data $\mX$ has full row rank.
\end{assumption}

\textbf{Remark:}  
This assumption requires that each row $\bm X^{(i,:)}$ is linearly independent for any $i \in [d_s]$, which is fair and attainable in practice. 
For example, for language tasks, even though there might be some repeated token in $\mX$, each row in $\mX$ can be uncorrelated when added with positional embedding.
Similarly, in image tasks with Visual Transformer, the raw image is grouped by patch and mapped via a linear layer to construct $\mX$, thus each row in $\mX$ can be uncorrelated.

\begin{assumption}
\label{assumption:distribution_4}
For any data pair $(\mX_n,\mX_{n'})$, with $n \neq n'$ and $n,n' \in [N]$, then we assume that 
$\mathbb{P}\left(\left|\left\langle \mX_n^\top \mX_n, \mX_{n'}^\top \mX_{n'} \right\rangle\right| \geq t\right) \leq \exp(-t^{\hat{c}})$ with some constant $\hat{c} > 0$.
\end{assumption}
{\bf Remark:} We discuss the rationale behind this assumption:\\
\begin{wrapfigure}[12]{r}{0.35\textwidth}
    \vspace{-9mm}
    \begin{center}
        \includegraphics[width=0.35\textwidth]{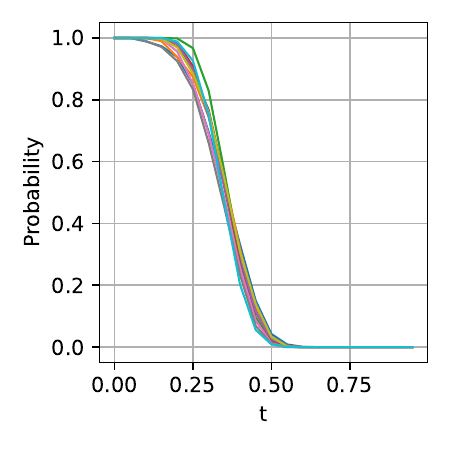}
        \vspace{-10mm}
        \caption{Validation of Asm.~\ref{assumption:distribution_4}.}
        \label{fig:assumption_nlp}
    \end{center}
\end{wrapfigure}
The idea behind \cref{assumption:distribution_4} is that different data points admit a small similarity.
To be specific, for two data points $\bm X_n$ and $\bm X_{n'}$ with $n \neq n'$, their inner product reflects the similarity of their respective (empirical) covariance matrix. We expect that this similarity is small with a high probability.
The spirit of this assumption also exists in previous literature, e.g., \citet{nguyen2021tight}. The constant $\hat{c}$ determines the decay of data dissimilarity. A larger $\hat{c}$ results in less separable data. Our assumption has no requirement on $\hat{c}$ such that $\hat{c}$ can be small enough, which allows for a general data distribution.

\textbf{Verification of assumption.}
Here we experimentally validate this assumption under a standard language IMDB dataset~\citep{imdbdataset}.
We randomly sample 100 (normalized) sentences with embedding and plot the probability (frequency) of $\mathbb{P}\left(\left|\left\langle \mX_n^\top \mX_n, \mX_{n'}^\top \mX_{n'} \right\rangle\right| \geq t\right) $ as $t$ increases. We repeat it over 10 runs and plot the result in \cref{fig:assumption_nlp}.
We can observe an exponential decay as $t$ increases, which implies that our assumption is fair. 
Besides, the validation of \cref{assumption:distribution_4} on image data, e.g., MNIST by ViT, is deferred to \cref{sec:appendix_exp_assumption}.

Now we are ready to present our main theorem under the $d_m^{-1/2}$ setting.
The proof is deferred to~\cref{sec:convergence_originalscale}.
\begin{theorem}
\label{coro:convergence_lecun_originalscale}
    Under the setting of LeCun/He initialization in \cref{tab:init} and ~\cref{assumption:distribution_1,assumption:distribution_fullrank,assumption:distribution_4}, suppose $\dout  \ge d$, 
    and given $\tau_0 = d_m^{-1/2}$, $\dout = \tilde{\Omega}(N^3)$,  then with probability at least $1-8e^{-\dqk/2}-\delta - \exp{(-\Omega((N-1)^{-\hat{c}}d_s^{-1}))}$ for proper $\delta$, the GD training of \san{} 
    converges to a global minimum with sufficiently small step size $\gamma$ as in \cref{equ:convergence}.

\end{theorem}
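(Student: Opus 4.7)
My plan is to specialize Proposition~\ref{thm:general} to LeCun/He initialization at scaling $\tau_0 = d_m^{-1/2}$ by producing high-probability estimates for the quantities that enter conditions~\eqref{equ:condition1}--\eqref{equ:condition2}, namely the spectral norms $\bar\lambda_Q,\bar\lambda_K,\bar\lambda_V,\bar\lambda_O$, the initial loss $\ell(\wall^0)$, and most importantly the minimum singular value $\alpha=\svmin{\mF_{\mathrm{pre}}^0}$; then I would solve the resulting inequalities for the over-parameterization level $d_m$.

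\textbf{Step 1 (norms and initial loss).} Since $\mW_Q^0,\mW_K^0,\mW_V^0\in\mathbb{R}^{d_m\times d}$ have i.i.d.\ Gaussian entries of variance $\Theta(1/d)$, a standard non-asymptotic Gaussian-matrix bound gives $\|\mW_Q^0\|_2,\|\mW_K^0\|_2,\|\mW_V^0\|_2 = O(\sqrt{d_m/d})$ on an event of probability $1-O(e^{-d_m/2})$, while $\vw_O^0$ with entry variance $\Theta(1/d_m)$ satisfies $\|\vw_O^0\|_2=O(1)$. With $\tau_0=d_m^{-1/2}$ and $\tau_1=1$ this yields $\rho = O(N^{1/2}d_s^{3/2})$ and $z = O(1+d_s^2 d_m/d^2)$, while a chain-of-norms computation exploiting that $\sigma_s$ and $\sigma_r$ are $1$-Lipschitz yields $\ell(\wall^0) = \tilde O(N)$ on the same event (after absorbing $\vy$ via Assumption~\ref{assumption:distribution_1}).

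\textbf{Step 2 (lower bound on $\alpha$: the crux).} The softmax couples rows through normalization, so classical hidden-Gram arguments for ReLU networks do not transfer directly; this is the main obstacle. I would condition on $\mW_Q^0,\mW_K^0$, view $\mS_n := \sigma_s(\tau_0\,\mX_n (\mW_Q^0)^\top\mW_K^0\mX_n^\top)$ as a random row-stochastic matrix whose entries depend only on the input, and express the $(n,j)$ entry of $\mF_{\mathrm{pre}}^0$ as $d_s^{-1}\mathbf{1}^\top\sigma_r(\mS_n\mX_n \vv_j)$ where $\vv_j$ is the $j$-th row of $\mW_V^0$; this makes the $d_m$ columns of $\mF_{\mathrm{pre}}^0$ conditionally i.i.d.\ in $j$. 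Because $\tau_0=d_m^{-1/2}$ keeps the argument of softmax of order one, $\mS_n$ concentrates around a deterministic function of $\mX_n^\top\mX_n$; Assumption~\ref{assumption:distribution_fullrank} then guarantees that pooling through $\mS_n$ does not collapse the row-rank structure of $\mX_n$. A matrix Bernstein argument over $j\in[d_m]$ shows $\bigl\|\mF_{\mathrm{pre}}^0(\mF_{\mathrm{pre}}^0)^\top - d_m\,\mathbf{G}\bigr\|_2$ is small with probability $1-\delta$, where $\mathbf{G}\in\mathbb{R}^{N\times N}$ is the softmax--ReLU dual kernel. Diagonal entries of $\mathbf{G}$ are $\Theta(1)$, while Assumption~\ref{assumption:distribution_4} yields exponential control on its off-diagonals through $\langle\mX_n^\top\mX_n,\mX_{n'}^\top\mX_{n'}\rangle$, explicitly producing the $\exp(-\Omega((N-1)^{-\hat c}d_s^{-1}))$ tail in the probability statement. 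Diagonal dominance then gives $\lambda_{\min}(\mathbf{G}) = \Omega(1)$ and hence $\alpha^2 \gtrsim d_m$.

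\textbf{Step 3 (closing the inequalities).} Condition~\eqref{equ:condition1} is dominated by~\eqref{equ:condition2} under the scaling above. For~\eqref{equ:condition2}, plugging $\alpha^3\gtrsim d_m^{3/2}$, $\rho^2=O(Nd_s^3)$, $z=\tilde O(d_s^2 d_m/d^2)$, $\sqrt{2\ell(\wall^0)}=\tilde O(\sqrt N)$, and $\bar\lambda_O=\Theta(1)$ reduces the condition, up to polylog factors and constants in $(d,d_s)$, to $d_m^{1/2}\gtrsim N^{3/2}$, i.e., $d_m = \tilde\Omega(N^3)$, which is the assumed over-parameterization. The step-size bound $\gamma\le 1/C$ of Proposition~\ref{thm:general} is then satisfied by taking $\gamma$ sufficiently small; a union bound over the eight Gaussian spectral-norm tails ($8e^{-d_m/2}$), the matrix-Bernstein event ($\delta$), and the $\binom{N}{2}$ pairwise dissimilarity events ($\exp(-\Omega((N-1)^{-\hat c}d_s^{-1}))$) reproduces the stated probability. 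Invoking Proposition~\ref{thm:general} yields the geometric convergence in~\eqref{equ:convergence}.
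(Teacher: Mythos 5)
Your high-level plan matches the paper's: specialize Proposition~\ref{thm:general} to LeCun/He initialization at $\tau_0 = d_m^{-1/2}$, bound $\bar\lambda_Q,\bar\lambda_K,\bar\lambda_V,\bar\lambda_O$ and $\ell(\wall^0)$ with high probability, lower-bound $\alpha$ by concentrating the empirical Gram over the $d_m$ columns (the paper uses a Matrix--Chernoff bound as in Lemma~5.2 of Nguyen et al., 2021; you propose matrix Bernstein, which serves the same role), lower-bound the limiting kernel via a Gershgorin/diagonal-dominance argument driven by Assumption~\ref{assumption:distribution_4}, and solve the resulting inequalities to obtain $d_m=\tilde\Omega(N^3)$. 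The order estimates in your Steps~1 and~3 are consistent with the paper's and lead to the same over-parameterization level.

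However, Step~2 contains a genuinely false claim: ``Because $\tau_0=d_m^{-1/2}$ keeps the argument of softmax of order one, $\mS_n$ concentrates around a deterministic function of $\mX_n^\top\mX_n$.'' Under $\tau_0 = d_m^{-1/2}$, each entry of $\tau_0 \mX_n (\mW_Q^0)^\top\mW_K^0\mX_n^\top$ is $d_m^{-1/2}$ times a sum of $d_m$ independent, mean-zero, variance-$\Theta(1)$ products, so by the CLT it is approximately Gaussian with variance $\Theta(1)$; it does \emph{not} converge to a deterministic limit. Consequently the softmax output $\mS_n$ is a genuinely random row-stochastic matrix, and it does not concentrate as $d_m\to\infty$. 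Concentration of $\mS_n$ (towards $d_s^{-1}\bm 1 \bm 1^\top$) is exactly what happens under the $\tau_0 = d_m^{-1}$ scaling of Theorem~\ref{coro:convergence_lecun}, the ``dimension-missing'' regime the paper contrasts against; it is not the regime of this theorem. The paper's Lemma~\ref{lemma:lambda_star_news} deliberately avoids any concentration of softmax: the Gershgorin bound on $\bm\Phi\bm\Phi^\top$ uses only the deterministic inequalities $1/\sqrt{d_s}\le\|\vbeta_{1,k}\|_2\le 1$ from Lemma~\ref{lemma:softmaxbound} and holds pointwise in the realized values of $\mW_Q^0,\mW_K^0$. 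Your concentration lemma is therefore both incorrect and unnecessary; the argument should be rewritten to use only the pointwise softmax-norm bounds.

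A secondary, more structural difference: you keep the full pooled sum over the $d_s$ softmax rows inside the dual kernel $\mathbf G$ (through $d_s^{-1}\bm 1^\top\sigma_r(\mS_n\mX_n\vv_j)$), whereas the paper observes that $\mF_{\mathrm{pre}}\mF_{\mathrm{pre}}^\top$ dominates the PSD contribution from a single row and reduces to $\bm\Phi = [\mX_1^\top\vbeta_{1,1},\ldots,\mX_N^\top\vbeta_{1,N}]^\top$. Keeping the full sum makes diagonal dominance harder to establish, since the off-diagonal entries of $\mathbf G$ now aggregate $d_s^2$ cross-terms $(i,i')$, each controlled by Assumption~\ref{assumption:distribution_4} only through $\langle\mX_n^\top\mX_n,\mX_{n'}^\top\mX_{n'}\rangle$; you would need to track this loss explicitly. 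The paper's single-row reduction plus the Hermite-coefficient step $\lambda_0\ge \eta_V\mu(\sigma_r)^2\lambda_{\min}(\bm\Phi\bm\Phi^\top)$ is cleaner and gets the required $\lambda_0 = \Theta(\eta_V/d_s)$ directly.
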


{\bf Remark:} The probability relates to several randomness sources, e.g., data sampling, dissimilarity of data, and parameter initialization. 
The quantity $\exp{(-\Omega((N-1)^{-\hat{c}}d_s^{-1}))}$ can be small for a small enough $\hat{c}$ as discussed in \cref{assumption:distribution_4}.
Further discussion on our result refers to \cref{sec:discussion} for details.
Besides, our proof framework is also valid for the $\tau_0 = d_m^{-1}$ setting. We demonstrate that $\dout = \tilde{\Omega}(N^2)$ suffices to achieve global convergence, see \cref{sec:convergence_originalscale} for details.

{\bf Proof sketch:} 
To check whether the conditions in \cref{equ:condition1,equ:condition2} hold, the key idea is to provide the lower bound of $\alpha$. Then we upper bound $\bar{\lambda}_Q, \bar{\lambda}_K, \bar{\lambda}_V, \bar{\lambda}_O$ based on  concentration inequalities to upper bound the initial loss, one key step is to utilize Gershgorin circle theorem~\citep{circle} to provide a lower bound for $\alpha$. Lastly, we plug these bound into the condition \cref{equ:condition1,equ:condition2} in order to obtain the requirement for the width $d_m$.

\subsection{NTK initialization under the $\dqk^{-1}$ setting}
\label{sec:convergence_scale}
The NTK theory, as a representative application of the $\tau_0 = \dqk^{-1}$ setting, can be also used for analysis of training dynamics.
We also include the NTK results in this section for self-completeness. In this section, we first derive the limiting NTK formulation of \san{} under the $\dqk^{-1}$ scaling scheme, and then show the global convergence of Transformers. 

\begin{lemma}
\label{thm:ntk}
Denote $\bm\Phi^\star =:
[
\frac{1}{d_s} \mX_1^\top \bm{1}_{d_s},
...,
\frac{1}{d_s} \mX_N^\top \bm{1}_{d_s}
]^\top \in \R^{N\times d}$, then the limiting NTK matrix $\mK \in \R^{N \times N}$ of \san{} under the NTK initialization with $\tau_0=\dqk^{-1}$ has the following form:

\begin{equation*}
\begin{split}
     &   \mK = 
     d_s^2
     \mathbb{E}_{\vw \sim \mathcal{N}(\bm{0}, \bm{I} )}
    \left(
    \sigma_r
    \left(
    \bm{\Phi}^\star
    \vw
    \right)
    \sigma_r
    \left(
    \bm{\Phi}^\star
    \vw
    \right)^\top
        \right)
        +
d_s^2
\mathbb{E}_{\vw \sim \mathcal{N}(\bm{0}, \bm{I} )}
        \left(
    \dot{\sigma_r}
    \left(
    \bm{\Phi^\star}
    \vw
    \right)
    \dot{\sigma_r}
    \left(
    \bm{\Phi}^\star
    \vw
    \right)^\top
            \right)
\left(
 \bm\Phi^\star \bm{\Phi}^{\star\top}
\right).
\end{split}
\end{equation*}
\end{lemma}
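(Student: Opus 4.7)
The plan is to first identify the infinite-width limit of the forward pass, and then compute the NTK of this simplified limiting model.

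\emph{Step 1: collapsing the self-attention.} Under NTK initialization with $\mW_Q^{(ij)},\mW_K^{(ij)}\sim\mathcal{N}(0,1)$ independent and $\tau_0=d_m^{-1}$, each entry of the softmax pre-activation is a sum of independent zero-mean products scaled by $d_m^{-1}$, so by the law of large numbers
\[
\bigl(\tau_0\mW_Q^{\top}\mW_K\bigr)^{(ij)} \;=\; \frac{1}{d_m}\sum_{k=1}^{d_m}\mW_Q^{(ki)}\mW_K^{(kj)} \;\longrightarrow\; 0
\]
in probability as $d_m\to\infty$. Hence $\tau_0\mX\mW_Q^{\top}\mW_K\mX^{\top}\to \bm 0_{d_s\times d_s}$ entrywise, and the row-wise softmax of a zero matrix equals the uniform pattern $\tfrac{1}{d_s}\bm 1_{d_s}\bm 1_{d_s}^{\top}$. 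Therefore
\[
\mA_1 \;\longrightarrow\; \tfrac{1}{d_s}\bm 1_{d_s}\bm 1_{d_s}^{\top}\mX\mW_V^{\top} \;=\; \bm 1_{d_s}(\mW_V \bm\phi^{\star})^{\top}, \qquad \bm\phi^{\star}\;\triangleq\;\tfrac{1}{d_s}\mX^{\top}\bm 1_{d_s},
\]
which is precisely the row of $\bm\Phi^{\star}$ associated with $\mX$. Every row of $\mA_1$ collapses to $(\mW_V \bm\phi^{\star})^{\top}$; applying the elementwise ReLU (with scale $\tau_1=d_m^{-1/2}$) and the sequence-length pooling $\varphi$ then reduces the network to an effective two-layer fully-connected ReLU network on the pooled input $\bm\phi^{\star}$, with first-layer matrix $\mW_V$ and output vector $\vw_O$, up to a constant $c$ absorbed from the pooling normalization.

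\emph{Step 2: NTK of the reduced network.} I would assemble $\mK$ by summing contributions of each parameter block, evaluated at initialization. The key claim is that $\mW_Q,\mW_K$ contribute negligibly in the limit: the softmax pre-activation has entries of size $O(d_m^{-1/2})$, its Jacobian is uniformly bounded, and propagating the chain rule back through $\tau_0=d_m^{-1}$, $\mW_V$, $\sigma_r$, the pooling, and $\vw_O$ produces pointwise gradients of size $O(d_m^{-3/2})$ per entry, yielding an overall NTK contribution of size $O(d_m^{-1})$ that vanishes. Hence only $\vw_O$ and $\mW_V$ survive. Differentiating the reduced two-layer network and applying the LLN to the empirical averages $d_m^{-1}\sum_i(\cdot)$ at $\mW_V^{(i,:)}\sim\mathcal{N}(\bm 0,\mI)$ and $w_O^{(i)}\sim\mathcal{N}(0,1)$ independent gives, entrywise,
\[
K_{\vw_O}(\mX_n,\mX_{n'}) \;\longrightarrow\; c^{2}\,\E_{\vw\sim\mathcal{N}(\bm 0,\mI)}\bigl[\sigma_r(\vw^{\top}\bm\phi^{\star}_n)\sigma_r(\vw^{\top}\bm\phi^{\star}_{n'})\bigr],
\]
\[
K_{\mW_V}(\mX_n,\mX_{n'}) \;\longrightarrow\; c^{2}\,\E_{\vw\sim\mathcal{N}(\bm 0,\mI)}\bigl[\dot\sigma_r(\vw^{\top}\bm\phi^{\star}_n)\dot\sigma_r(\vw^{\top}\bm\phi^{\star}_{n'})\bigr]\,(\bm\phi^{\star}_n)^{\top}\bm\phi^{\star}_{n'}.
\]
Packing these entrywise identities into matrix form, and identifying $c^{2}=d_s^{2}$ (reflecting the $d_s$ identical row-copies of $\mA_1$ that are aggregated before pooling normalization), yields precisely the two terms in the lemma, with the second term obtained as the entrywise product of the expectation matrix and the Gram matrix $\bm\Phi^{\star}\bm\Phi^{\star\top}$.

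\emph{Main obstacle.} The most delicate part is rigorously controlling the $\mW_Q,\mW_K$ contributions. One must bound the softmax Jacobian at a perturbation of the origin of size $O(d_m^{-1/2})$, combine this with concentration for the quadratic form $\mX(\tau_0\mW_Q^{\top}\mW_K)\mX^{\top}$, and track cancellations through the chain rule so that, despite summing over $\Theta(d_m d)$ entries in each of $\mW_Q,\mW_K$, the total NTK contribution is still $o(1)$. A secondary subtlety is the discontinuity of $\dot\sigma_r$ at $0$: one must invoke absolute continuity of the Gaussian law on $\vw$ so that the limiting expectations are well-defined. Once these points are settled, the remainder is the standard two-layer NTK computation combined with Gaussian concentration bounds.
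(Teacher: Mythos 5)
Your overall plan matches the paper's: use the law of large numbers to degenerate the softmax to $\tfrac{1}{d_s}\bm 1_{d_s}\bm 1_{d_s}^{\top}$, observe that only $\vw_O$ and $\mW_V$ contribute in the limit, and compute those two contributions by LLN on the $d_m$-average, yielding $d_s^2\,\E_{\vw}[\sigma_r(\vw^{\top}\bm\phi^{\star})\sigma_r(\vw^{\top}\bm\phi'^{\star})]$ and $d_s^2\,\E_{\vw}[\dot\sigma_r(\vw^{\top}\bm\phi^{\star})\dot\sigma_r(\vw^{\top}\bm\phi'^{\star})]\,\bm\phi^{\star\top}\bm\phi'^{\star}$; these are exactly the two terms in the lemma, and your reading of the second term as a Hadamard product with $\bm\Phi^{\star}\bm\Phi^{\star\top}$ is the correct one.

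The place your proposal falls short is the part you yourself flag as "the most delicate": the claim that the $\mW_Q,\mW_K$ blocks contribute nothing. Your order count is internally inconsistent — $O(d_m^{-3/2})$ per entry over $\Theta(d_m d)$ entries gives $O(d_m^{-2})$, not the $O(d_m^{-1})$ you assert — and more importantly, the mechanism you cite (smallness of the softmax pre-activation, plus the $\tau_0$ factor) is not by itself sufficient. The back-propagated factor through $\sigma_r$ and $\vw_O$ is $\tau_1\,\vw_O^{\top}\mathrm{diag}(\dot\sigma_r)\mW_V$, whose $l$-th coordinate is $d_m^{-1/2}\sum_{k} w_O^{(k)}\dot\sigma_r^{(k)}W_V^{(k,l)}$. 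If this sum grew like $d_m$ (as it would if the summand had nonzero mean), the resulting $\partial f/\partial W_Q^{(p,q)}$ would be $\Theta(d_m^{-1/2})$ and the Frobenius inner product over $\Theta(d_m d)$ entries would be $\Theta(d)$ — i.e., \emph{not} vanishing. The reason it vanishes is that $w_O^{(k)}$ is independent of $(W_V^{(k,:)},\dot\sigma_r^{(k)})$ and has mean zero, so $\E[w_O^{(k)}\dot\sigma_r^{(k)}W_V^{(k,l)}]=0$ and, by LLN, $\tfrac{1}{d_m}\sum_k\!\to 0$. The paper's proof makes exactly this observation (it shows that each coordinate of $\tau_0\tau_1\,\mX^{\top}(\mathrm{diag}(\vbeta_i)-\vbeta_i\vbeta_i^{\top})\mX\,\mW_V^{\top}(\vw_O\circ\dot\sigma_r)$ converges to $0$ because $\E[w]=0$, while the remaining factor involving $\tau_0\mW_K^{\top}\mW_K\to\mI_d$ stays bounded), and this is the step you would need to add to turn your order-counting into a proof. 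Apart from this, your reduction to the two-layer FCNN and the two surviving kernel terms, including the $c^{2}=d_s^{2}$ prefactor, is the same argument as the paper's.
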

{\bf Remark:} The formulation of $\bm\Phi^\star$ implies that the self-attention layer degenerates as  $\frac{1}{d_s}\bm{1}_{d_s \times d_s} \mX \mW_V^\top$, i.e.,
the \emph{dimension missing} effect as mentioned before.

 Now we are ready to present our convergence result under the $\dqk^{-1}$ scaling with the proof deferred to \cref{sec:convergence}.
\begin{theorem}
\label{coro:convergence_lecun}
    Under the setting of NTK initialization in \cref{tab:init} and~\cref{assumption:distribution_1,assumption:distribution_fullrank,assumption:distribution_4}
    , suppose 
    $\dout = \tilde{\Omega}(
N)$,
then with probability at least $1-8e^{-\dqk/2}-\delta - \exp{(-\Omega((N-1)^{-\hat{c}}d_s^{-1}))}$, the GD training of \san{} converges to a global minimum with sufficiently small $\gamma$ as in \cref{equ:convergence}.
\end{theorem}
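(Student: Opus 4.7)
The plan is to specialize \cref{thm:general} to the NTK initialization (\cref{tab:init}, all Gaussian variances equal to $1$ and $\tau_1 = d_m^{-1/2}$) together with $\tau_0 = d_m^{-1}$, and verify its preconditions \cref{equ:condition1,equ:condition2}. I would split the argument into three pieces: controlling the spectral norms of the initial weights, controlling the initial loss, and most importantly lower-bounding $\alpha = \sigma_{\min}(\mF_{\mathrm{pre}}^0)$.

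For the parameter norms, a Davidson--Szarek bound on the Gaussian matrices $\mW_Q^0, \mW_K^0, \mW_V^0$ (using $d_m \geq d$) and on the Gaussian vector $\vw_O^0$ gives $\|\mW_Q^0\|_2, \|\mW_K^0\|_2, \|\mW_V^0\|_2, \|\vw_O^0\|_2 = O(\sqrt{d_m})$ each with probability at least $1-2e^{-d_m/2}$, so collectively these four events contribute the $8e^{-d_m/2}$ factor in the stated probability. With appropriately chosen buffer constants $C_Q, C_K, C_V, C_O$, this yields $\bar{\lambda}_Q, \bar{\lambda}_K, \bar{\lambda}_V, \bar{\lambda}_O = O(\sqrt{d_m})$. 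Under $\tau_0 = d_m^{-1}$ the correction term inside $z$ satisfies $4\tau_0^2 C_x^4 d_s^2 \bar{\lambda}_V^2(\bar{\lambda}_Q^2+\bar{\lambda}_K^2) = O(1)$, giving $z = O(d_m)$, and $\rho = O(N^{1/2}d_s^{3/2}d_m^{-1/2})$. A Hanson--Wright argument applied to $f(\mX_n;\wall^0) = \tau_1 \varphi(\sigma_r(\mA_1))^\top \vw_O$ then shows $\ell(\wall^0) = \widetilde{O}(N)$ with probability at least $1-\delta$.

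The crux is the lower bound on $\alpha$. Under $\tau_0 = d_m^{-1}$, \cref{thm:ntk} says the softmax inside $\mA_1$ collapses to the pooling matrix $\frac{1}{d_s}\bm{1}_{d_s\times d_s}$, so up to a vanishing perturbation the $i$-th entry of $\va_{3}(\mX_n;\wall^0)$ reduces to $\tau_1 \sigma_r\!\bigl(\langle \mW_V^{(i,:)}, \bm\Phi_n^\star\rangle\bigr)$ with $\bm\Phi_n^\star = \tfrac{1}{d_s}\mX_n^\top \bm{1}_{d_s}$. The $d_m$ columns of $\mF_{\mathrm{pre}}^0$ are then i.i.d.\ in the rows of $\mW_V$, so a matrix Bernstein inequality (equivalently a row-wise $\epsilon$-net over the sphere in $\R^N$) shows $\mF_{\mathrm{pre}}^0(\mF_{\mathrm{pre}}^0)^\top$ concentrates at the output-layer block of the limiting NTK of \cref{thm:ntk} once $d_m = \widetilde{\Omega}(N)$. \Cref{assumption:distribution_fullrank,assumption:distribution_4}, combined with the Gershgorin circle theorem applied to this expectation---the diagonal entries are $\Omega(1)$ while off-diagonal ones are bounded by $|\langle \mX_n^\top\mX_n, \mX_{n'}^\top\mX_{n'}\rangle|$ which decays by \cref{assumption:distribution_4}---then deliver $\alpha^2 = \Omega(1)$ with the stated data-dissimilarity probability.

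Plugging the orders $\bar{\lambda}_\star = O(\sqrt{d_m})$, $\rho = O(\sqrt{N/d_m})$, $z = O(d_m)$, $\sqrt{\ell(\wall^0)} = \widetilde{O}(\sqrt{N})$, $\alpha^2 = \Omega(1)$ into \cref{equ:condition1,equ:condition2}, one verifies both preconditions under $d_m = \widetilde{\Omega}(N)$, with polylog factors absorbing the softmax-perturbation remainder and the loss-concentration tail. The failure probabilities then add: $8e^{-d_m/2}$ from the spectral bounds, $\exp(-\Omega((N-1)^{-\hat c}d_s^{-1}))$ from \cref{assumption:distribution_4}, and the remaining Gram-matrix and loss concentration events inside $\delta$. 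The main obstacle will be the $\alpha$ bound: the softmax-to-pooling collapse only gives an $O(d_m^{-1/2})$-scale perturbation that must be controlled uniformly over all $N^2$ data pairs without destroying diagonal dominance, and the Gram-matrix concentration at the near-minimal width $d_m = \widetilde{\Omega}(N)$ demands a sharp matrix Bernstein argument rather than a naive union bound.
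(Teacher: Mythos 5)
Your overall skeleton matches the paper: specialize \cref{thm:general} with $C_Q=C_K=C_V=C_O$ constants, bound $\norm{\mW_Q^0}_2,\norm{\mW_K^0}_2,\norm{\mW_V^0}_2,\norm{\vw_O^0}_2$ via \cref{thm:inequality_initialization} (Davidson--Szarek) giving the collective $8e^{-d_m/2}$ tail, bound $\ell(\wall^0)=\widetilde O(N)$, then verify \cref{equ:condition1,equ:condition2}. Your order computations $\bar\lambda_\star=O(\sqrt{d_m})$, $\rho=O(\sqrt{N/d_m})$, $z=O(d_m)$ under NTK initialization with $\tau_0=d_m^{-1}$ are correct. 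Where you genuinely diverge from the paper is the lower bound on $\alpha$: you propose to pass to the pooling limit $\bm\Phi^\star=\tfrac{1}{d_s}\mX^\top\bm{1}_{d_s}$ via \cref{thm:ntk}, concentrate $\mF^0_{\mathrm{pre}}(\mF^0_{\mathrm{pre}})^\top$ at the limiting NTK's output-layer block via matrix Bernstein over the i.i.d.\ rows of $\mW_V$, and then run Gershgorin on that limiting expectation. The paper's proof (Matrix-Chernoff combined with \cref{lemma:lambda_star_news}) never passes to the pooling limit: the Chernoff step is conditional on $(\mW_Q,\mW_K)$ and concentrates $\hat{\mF}_{\mathrm{pre}}\hat{\mF}_{\mathrm{pre}}^\top$ at $d_m\lambda_0$ with $\lambda_0=\evmin{\E_{\vw}[\sigma_r(\bm\Phi\vw)\sigma_r(\bm\Phi\vw)^\top]}$ built from the \emph{actual finite-width} softmax outputs $\vbeta_{1,n}$, and then $\lambda_0$ is lower bounded by the Hermite expansion of ReLU plus Gershgorin and Von Neumann applied directly to $\bm\Phi\bm\Phi^\top$, using only the universal bound $\norm{\vbeta_{1,n}}_2\geq 1/\sqrt{d_s}$ from \cref{lemma:softmaxbound}. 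With $\eta_V=1$ this gives $\lambda_0\geq\Theta(1/d_s)=\Omega(1)$ with no dependence whatsoever on how close the softmax is to uniform.

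The gap in your route is precisely the step you flag at the end but do not resolve. Matrix Bernstein over rows of $\mW_V$ does \emph{not} concentrate $\mF^0_{\mathrm{pre}}(\mF^0_{\mathrm{pre}})^\top$ at the $\bm\Phi^\star$-based limiting block: it concentrates at the conditional expectation built from the finite-width $\bm\Phi$, because the softmax argument depends on $\mW_Q,\mW_K$, which are not averaged out by concentration over $\mW_V$. Bridging $\bm\Phi\bm\Phi^\top$ to $\bm\Phi^\star\bm\Phi^{\star\top}$ is an independent concentration over $(\mW_Q,\mW_K)$; the paper's own version of that argument, \cref{lemma:phistar_lambdamin} (not used in this theorem), goes through a union bound over $N^2$ entries and requires $d_m=\Omega\big(N^2\lambda_*^{-2}\sqrt{\log(2d^2N^2/\delta)}\big)$, which for $\lambda_*=\Theta(1)$ is $\widetilde\Omega(N^2)$, not the claimed $\widetilde\Omega(N)$. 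The missing idea is that the detour is unnecessary: Gershgorin with Von Neumann's trace inequality works equally well at finite width, since $\norm{\vbeta_{1,n}}_2\geq 1/\sqrt{d_s}$ holds for any softmax output, so the pooling-collapse perturbation never has to be controlled in the first place.
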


{\bf Proof sketch:} 
The overall idea is the same as the proof of the previous theorem, i.e., we need to provide the lower bound of $\alpha$. However, in this case, the limit for the output of softmax exists so that we can apply concentration inequality to lower bound the $\alpha$.
Lastly, we plug these bound into the condition \cref{equ:condition1,equ:condition2} in order to obtain the requirement for the width $d_m$.
Besides, the stability of NTK during training allows us to build a connection on training dynamics between the Transformer (assuming a squared loss) and the kernel regression predictor.
Next, in order to show that the NTK is stable during GD training, below we prove that the spectral norm of Hessian is controlled by the width. 
\begin{theorem}[Hessian norm is controlled by the width] \label{thm:appendix_hessian}
Under \cref{assumption:distribution_1} and scaling $\tau_0 = d_m^{-1}$, given any fixed $R>0$, and any  $\wall^t \in B(\wall,R):= \{\wall: \| \wall - \wall^0\|_2 \le R\}$, 
$\wall^0$ as the weight at initialization, then with probability at least $1 - 8e^{-d_m/2}$, the Hessian spectral norm of Transformer satisfies:
$
    \|\mH(\wall^t)\|_2 \leq
    \mathcal{O}{\left(d_m^{-1/2}\right)}
$.
\end{theorem}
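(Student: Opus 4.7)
The plan is to organize the Hessian of $f$ as a $4{\times}4$ block matrix indexed by the parameter groups $(\mW_Q,\mW_K,\mW_V,\vw_O)$ and bound the operator norm of each block separately, then collect via $\|\mH\|_2 \le 4\max_{i,j}\|\mH_{ij}\|_2$. The two structural ingredients that drive the $d_m^{-1/2}$ decay are the NTK scalings: $\tau_0 = d_m^{-1}$ keeps the softmax argument $O(1)$ despite the Gaussian weights being $O(\sqrt{d_m})$ in spectral norm, while $\tau_1 = d_m^{-1/2}$ supplies an explicit $d_m^{-1/2}$ prefactor on every path from $f$ back to an inner weight.

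First I would establish uniform spectral-norm bounds for the weights over the whole ball $B(\vtheta^0,R)$. Standard Gaussian random-matrix concentration (e.g. Vershynin's non-asymptotic bound on the top singular value of a rectangular Gaussian matrix) gives, with probability at least $1-8e^{-d_m/2}$, $\|\mW_Q^0\|_2,\|\mW_K^0\|_2,\|\mW_V^0\|_2 \le 3\sqrt{d_m}$ and $\|\vw_O^0\|_2 \le 3\sqrt{d_m}$; combined with $\|\vtheta^t-\vtheta^0\|_2 \le R$, every weight has spectral norm at most $3\sqrt{d_m}+R = O(\sqrt{d_m})$ throughout $B(\vtheta^0,R)$. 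Then, using \cref{assumption:distribution_1}, the softmax argument satisfies $\|\tau_0\, \mX(\mW_Q^t)^\top\mW_K^t\mX^\top\|_2 \le \tau_0\|\mX\|_\mathrm{F}^2\|\mW_Q^t\|_2\|\mW_K^t\|_2 = O(1)$, so the softmax output and its first Jacobian $\mathrm{diag}(\bm p) - \bm p\bm p^\top$ and its second-order tensor (of rank-$3$ entries in $\bm p$ of degree $\le 3$) all have $O(1)$ induced operator norms under the relevant flattenings.

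Next I would compute each of the ten distinct Hessian blocks by chain rule. The block $\mH_{O,O}$ vanishes since $\partial f/\partial \vw_O = \va_3$ is independent of $\vw_O$. The cross-blocks $\mH_{O,V}$, $\mH_{O,Q}$, $\mH_{O,K}$ carry exactly one factor of $\tau_1 = d_m^{-1/2}$ coming from $\partial \va_3/\partial\mW_\star$; tracking the remaining factors ($\|\mX\|_\mathrm{F}=O(1)$, softmax Jacobian $O(1)$, and for $Q,K$ an additional $\tau_0\|\mW_{K/Q}\|_2\|\mX\|_\mathrm{F}^2 = O(d_m^{-1/2})\cdot O(\sqrt{d_m}) = O(1)$) yields $O(d_m^{-1/2})$. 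The remaining six ``inner'' blocks $\mH_{V,V},\mH_{V,Q},\mH_{V,K},\mH_{Q,Q},\mH_{K,K},\mH_{Q,K}$ are handled uniformly: each one is $\tau_1\vw_O^\top$ composed with a second derivative that either vanishes (ReLU has $\sigma_r''\equiv 0$ a.e., killing $\mH_{V,V}$ and the ReLU--softmax product terms) or contributes one more $\tau_0\|\mW_\star\|_2\|\mX\|_\mathrm{F}^2 = O(1)$ factor when differentiating the softmax argument twice. Multiplied by $\tau_1\|\vw_O\|_2 = O(1)$, every such block is again $O(d_m^{-1/2})$. Summing the ten block bounds gives $\|\mH(\vtheta^t)\|_2 = O(d_m^{-1/2})$ and the stated high-probability estimate.

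The main obstacle is the careful bookkeeping inside the softmax, specifically bounding the operator norm of its \emph{second} derivative (a $d_s{\times}d_s{\times}d_s$ tensor with entries cubic in the softmax output) under the matrix flattening used in $\mH_{Q,Q}, \mH_{K,K},\mH_{Q,K}$. Once one shows that this tensor has $O(1)$ induced operator norm (which follows because $\|\bm p\|_1=1$ rowwise and the entries are sums of products of at most three probabilities), the chain-rule arithmetic is routine. A secondary bookkeeping point is to handle ReLU via its Clarke subgradient and note that the second derivative contribution through $\sigma_r$ is zero in the sense that $\sigma_r''=0$ distributionally on any generic direction; this is standard in the NTK-Hessian literature and does not affect the bound.
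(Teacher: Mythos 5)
Your route is essentially the paper's: block-decompose the Hessian over the four parameter groups, bound weight spectral norms by $O(\sqrt{d_m})$ on $B(\vtheta^0,R)$ via Gaussian concentration (giving the stated $1-8e^{-d_m/2}$ probability), kill $\mH_{O,O}$ and $\mH_{V,V}$ using $\sigma_r''=0$ a.e., bound the softmax first- and second-derivative tensors by $O(1)$ in operator norm, and track the $\tau_0=d_m^{-1}$, $\tau_1=d_m^{-1/2}$ scalings to get $O(d_m^{-1/2})$ per nonzero block (the paper streamlines the computation by taking $d=1$, but that is purely expository). One bookkeeping slip to correct: for $\mH_{O,Q},\mH_{O,K}$ you write $\tau_0\|\mW_{K/Q}\|_2\|\mX\|_\mathrm{F}^2 = O(d_m^{-1/2})\cdot O(\sqrt{d_m}) = O(1)$, but under \cref{assumption:distribution_1} one has $\|\mX\|_\mathrm{F}^2 = O(d_s C_x^2) = O(1)$, not $O(\sqrt{d_m})$; the missing $O(\sqrt{d_m})$ factor actually enters through $\|\mW_V\|_2$, which your cross-block tally omits, and the two errors cancel to leave the final per-block $O(d_m^{-1/2})$ bound intact.
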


\textbf {Remark:}
By \cite[Proposition 2.3]{liu2020linearity}, the small Hessian norm is a sufficient condition for small change of NTK. Thus, \cref{thm:appendix_hessian} can be an indicator for the stability of NTK. Besides, \cref{thm:appendix_hessian} supplements the result in \citep{park2022how} which exhibits empirically a relationship between the Hessian norm and the width but lacks theoretical proof.

\subsection{Discussion on convergence results}
\label{sec:discussion}
We compare the derived results under different scaling schemes, initializations, and architectures.

{\bf Comparison of scaling schemes:}
The global convergence can be achieved under both $\tau_0 =d_m^{-1/2}$ and $\tau_0 =d_m^{-1}$. Nevertheless, as suggested by our theory, for a small $d_m$, there is no significant difference between these two scaling schemes on the convergence; but for a large enough $d_m$, the $\tau_0 =d_m^{-1/2}$ scaling admits a faster convergence rate of Transformers than that of $\tau_0 =d_m^{-1}$ due to a more tight estimation of the lower bound of $\alpha$, see \cref{sec:appendix_comparisonscaling} for details. 
The intuition behind the lower convergence rate under the setting of large width and $\tau_0=d_m^{-1}$ is that the input of softmax is close to zero such that softmax roughly degenerates as a pooling layer, losing the ability to fit data. This can be also explained by \cref{thm:ntk} from the perspective of \emph{dimension missing}: self-attention ($\mX$) degenerates as $\frac{1}{d_s} \bm{1}_{d_s \times d_s} \mX \mW_V^\top$.

The result under the $\tau_0 = \dqk^{-1}$ setting requires weaker over-parameterization than the $\tau_0 = \dqk^{-1/2}$ setting. Nevertheless, we do not claim that $\tau_0 = \dqk^{-1}$ is better than $\tau_0 = \dqk^{-1/2}$.
This is because, under the over-parameterization regime, the scaling $\tau_0 = d_m^{-1}$ makes the self-attention layer close to a pooling layer. This analysis loses the ability to capture the key characteristics of the self-attention mechanism in Transformers.

Note that the lower bound of the minimum eigenvalue is in the constant order, which is tight (since it matches the upper bound). Based on this, by studying the relationship between $d_m$ and $\lambda_0$, we can prove that quadratic (cubic) over-parameterization is required for $d_m^{-1}$ ($d_m^{-1/2}$) scaling. This quadratic over-parameterization requirement could be relaxed if a better relationship is given while it is still unclear and beyond our proof technique.

{\bf Comparison on initializations:} Though our results achieve the same convergence rate under these initialization schemes, we can still show the \emph{separation} on $\alpha$ that affects the convergence in \cref{equ:convergence} among these initialization schemes.
To be specific, we verify that under LeCun and He initialization, the lower bound of $\alpha^2$ is tighter than that of NTK initialization, and hence admits faster convergence, see \cref{sec:appendix_discussion} for further details. This can be an explanation of the seldom usage of NTK initialization in practice. Besides, the NTK initialization scheme allows for a larger step size than LeCun/He initialization for training. That means, if $\alpha$ is the same in these three initialization schemes, we usually choose a large step size under the NTK initialization scheme, see \cref{sec:appendix_exp_differentinit}.

{\bf Comparison on architectures:} 
Note that the Transformer defined in \cref{equ:definition_final4} includes a self-attention layer, a feedforward ReLU layer, and an output layer. Our result proves that a cubic (quadratic) over-parameterization condition is required for $d_m^{-1/2}$ ($d_m^{-1}$) under LeCun initialization. As a comparison, a three-layer fully-connected ReLU network under LeCun initialization requires $\dout = \tilde{\Omega}(N^3)$~\citep{nguyen2021proof}. 

The aforementioned result  holds for matrix input. Although not as frequent, some data inputs are naturally in vector form. Two ways to feed the input into \san{} are either formulating along sequence dimension ($d=1$) or along embedding dimension ($d_s=1$). 
The following result shows the separation under these two settings to understand when the Transformer works well or not regarding the input.

\begin{corollary}[Convergence of vector input]
Consider LeCun initialization with $\tau_0 = \dqk^{-1}$ scaling,
\label{coro:vectorinput}
given vector input $\vx\in \R ^{\tilde{d}}$, 
if one feeds the input to \san{} by setting $d_s = 1, d=\tilde{d}$, then 
training with GD can converge to a global minimum. On the contrary, if one sets $d_s =\tilde{d}, d=1$, 
the conditions in \cref{equ:condition1,equ:condition2} do not hold, the convergence can not be guaranteed by our theory.
\end{corollary}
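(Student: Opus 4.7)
The plan is to specialise Proposition~\ref{thm:general} to each of the two configurations and check whether the spectral conditions \cref{equ:condition1,equ:condition2} on $\alpha=\svmin{\mF_\mathrm{pre}^0}$ can be verified. The first case reduces the Transformer to a two-layer ReLU network, to which the argument of \cref{sec:convergence_scalenew} applies almost verbatim; the second case exhibits an algebraic rank collapse of the hidden features that forces $\alpha=0$.

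\textbf{Case $d_s=1,\ d=\tilde d$.} With a single token $\mX=\vx^\top$, the pre-softmax score $\tau_0(\mX\mW_Q^\top)(\mX\mW_K^\top)^\top\in\mathbb{R}^{1\times 1}$ makes $\sigma_s\equiv 1$; the column-wise average pooling over a length-one sequence is the identity, so the forward pass collapses to
\begin{equation*}
 f(\vx;\wall) = \tau_1\,\sigma_r(\mW_V\vx)^\top\vw_O,
\end{equation*}
a two-layer ReLU network with LeCun-Gaussian $\mW_V\in\mathbb{R}^{d_m\times\tilde d}$ and $\vw_O\in\mathbb{R}^{d_m}$. Assumption~\ref{assumption:distribution_fullrank} is trivial, and Assumption~\ref{assumption:distribution_4} becomes inner-product concentration on $\vx_n\vx_n^\top$. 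I would reuse the spectral-norm estimates of $\bar\lambda_Q,\bar\lambda_K,\bar\lambda_V,\bar\lambda_O$ from \cref{sec:convergence_scalenew} (the $\mW_Q,\mW_K$ bounds enter the conditions but not the forward map) and then lower bound $\alpha$ by a Gershgorin-type argument on the $N\times N$ Gram matrix $\bigl[\langle\sigma_r(\mW_V^0\vx_n),\sigma_r(\mW_V^0\vx_{n'})\rangle\bigr]_{n,n'}$: Assumption~\ref{assumption:distribution_4} controls the off-diagonals while the diagonals concentrate around their order-one expectations. The resulting $\alpha\gtrsim 1$ with high probability is plugged into \cref{equ:condition1,equ:condition2}, and Proposition~\ref{thm:general} delivers the claimed linear-rate convergence.

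\textbf{Case $d_s=\tilde d,\ d=1$.} Now $\mX_n=\vx_n\in\mathbb{R}^{\tilde d\times 1}$, so Assumption~\ref{assumption:distribution_fullrank} is already violated for $\tilde d\ge 2$. Beyond that, $\mW_Q^\top\mW_K\in\mathbb{R}$ is a scalar, so the pre-softmax matrix $\tau_0(\mW_Q^\top\mW_K)\vx_n\vx_n^\top$ is rank one; every row of $\sigma_s(\cdot)\vx_n\mW_V^\top$ is a scalar multiple $c_i(\vx_n)\mW_V^\top$; and since for any scalar $c$ and vector $\vu$ we have $\sigma_r(c\vu)\in\operatorname{span}\{\sigma_r(\vu),\sigma_r(-\vu)\}$, summing across the sequence yields
\begin{equation*}
 \va_3(\vx_n)\in\operatorname{span}\{\sigma_r(\mW_V^0),\sigma_r(-\mW_V^0)\}\subset\mathbb{R}^{d_m}.
\end{equation*}
Consequently $\operatorname{rank}(\mF_\mathrm{pre}^0)\le 2$ deterministically, so for any $N\ge 3$ we have $\alpha=0$, and neither \cref{equ:condition1} nor \cref{equ:condition2} can be enforced for a nonzero initial loss. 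Lemma~\ref{thm:ntk} confirms the same obstruction asymptotically: $\bm\Phi^\star\in\mathbb{R}^{N\times 1}$ makes $\bm\Phi^\star\bm\Phi^{\star\top}$ rank one and the limiting NTK rank at most two.

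\textbf{Main obstacle.} The rank collapse in the second case is immediate; the delicate step is the lower bound on $\alpha$ in the first. Even after the reduction to a two-layer net, one must ensure that Assumption~\ref{assumption:distribution_4}, stated at the level of $\mX^\top\mX$, descends to a useful bound on the ReLU-feature Gram matrix uniformly over the $\binom{N}{2}$ pairs, with constants depending only on the data and not on $\tilde d$ or $d_m$. Once this uniform concentration is secured, the remainder of the argument mirrors \cref{sec:convergence_scalenew} and \cref{thm:general}.
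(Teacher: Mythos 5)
Your overall structure is right, and the key observation in the second case — that when $d=1$ every row of the value projection $\mX_n\mW_V^\top$ is a scalar multiple of the same vector $\mW_V\in\R^{d_m}$, so that the positive-homogeneity of ReLU forces $\va_3(\vx_n)\in\operatorname{span}\{\sigma_r(\mW_V),\sigma_r(-\mW_V)\}$ and hence $\operatorname{rank}(\mF_\mathrm{pre}^0)\le 2$ deterministically — is a cleaner and more precise argument than what the paper actually offers. The paper's own justification is only the informal remark following the corollary, which reasons asymptotically through the limiting NTK being (near-)rank-one. Your finite-width rank bound directly gives $\alpha=\svmin{\mF_\mathrm{pre}^0}=0$ for any $N\ge 3$, so \cref{equ:condition1,equ:condition2} cannot hold unless $\ell(\wall^0)=0$; this is exactly the statement to prove and does not need the NTK limit at all. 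You also correctly flag the incidental violation of \cref{assumption:distribution_fullrank}.

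The gap is in the first case, in the step where you lower bound $\alpha$. You propose a Gershgorin argument directly on the ReLU-feature Gram matrix $\bigl[\langle\sigma_r(\mW_V^0\vx_n),\sigma_r(\mW_V^0\vx_{n'})\rangle\bigr]_{n,n'}$, with \cref{assumption:distribution_4} controlling the off-diagonals. This fails: the off-diagonal entries of the ReLU Gram do \emph{not} concentrate to zero even when the data are mutually orthogonal, because $\E_{\vw\sim\mathcal{N}(0,\eta_V\mI)}\bigl[\sigma_r(\vw^\top\vx_n)\sigma_r(\vw^\top\vx_{n'})\bigr]=\eta_V/(2\pi)$ whenever $\vx_n\perp\vx_{n'}$ with unit norm, which is a constant fraction ($1/\pi$) of the diagonal $\eta_V/2$. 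Gershgorin on this matrix therefore only works for $N\lesssim 4$. The paper circumvents precisely this issue in \cref{lemma:lambda_star_news}: it first applies matrix Chernoff to pass from $\lambda_\mathrm{min}(\hat\mF_\mathrm{pre}\hat\mF_\mathrm{pre}^\top)$ to the population quantity $\lambda_0$, then uses the Hermite expansion of $\sigma_r$ to drop the troublesome $0$th-order (constant) contribution and retain only $\eta_V\mu(\sigma_r)^2\lambda_\mathrm{min}(\bm\Phi\bm\Phi^\top)$, and only \emph{then} applies Gershgorin to $\bm\Phi\bm\Phi^\top$, whose off-diagonals $\langle\vx_n,\vx_{n'}\rangle$ genuinely do decay under \cref{assumption:distribution_4}. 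Your proof of the positive direction should be routed through this chain (equivalently, invoke the $\tau_0=d_m^{-1}$ part of the proof in \cref{sec:convergence_originalscale} with $d_s=1$) rather than Gershgorin on the nonlinear Gram.
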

\vspace{-2mm}
{\bf Remark:}
Such a result is motivated by considering least squares. Specifically, given input $\mX \in \R^{N\times 1}$, then the NTK $\mX \mX^\top$ is a rank-one matrix. As a result, when the augmented matrix $[\mX, \vy ]$ is not rank-one, then there is no solution to the linear system so that GD training can not converge to zero loss.
    The empirical validation can be found in \cref{fig:vectorinput}.

{\bf Technical difficulty.}
 The technical difficulty of our analysis includes handling the softmax function and scaling schemes beyond NTK initialization. Previous convergence analysis, e.g., ~\citep{du2018gradient,nguyen2021proof} cannot be applied to our setting because of the following issues.  
 First, different from classical activation functions, e.g., ReLU, in softmax each element of the output depends on all input. To tackle the interplay between dimensions, we build the connection between the output of softmax and $\mX \mX^{\!\top}$ to disentangle the nonlinear softmax function. By doing so, a lower bound on the minimum singular value of $\mF_{\text{pre}}$ in \cref{thm:general}  can be well controlled by $\bm X\bm X^{\!\top}$ and the output of softmax.

Regarding different initializations and scaling, previous NTK-based analysis is only valid under the $d_m^{-1}$ setting (the softmax degenerates to an all-one vector) but is inapplicable to the realistic $d_m^{-1/2}$ setting, as discussed in the introduction. To tackle this issue, we analyze the input/output of softmax under LeCun/He initialization and identify the optimization properties of the loss function for global convergence under the finite-width setting.
 \section{Experimental validations}
 \begin{figure*}[!t]
    \subfloat[Convergence curve.]{\label{f1sqrtk}\includegraphics[width=0.3\textwidth]{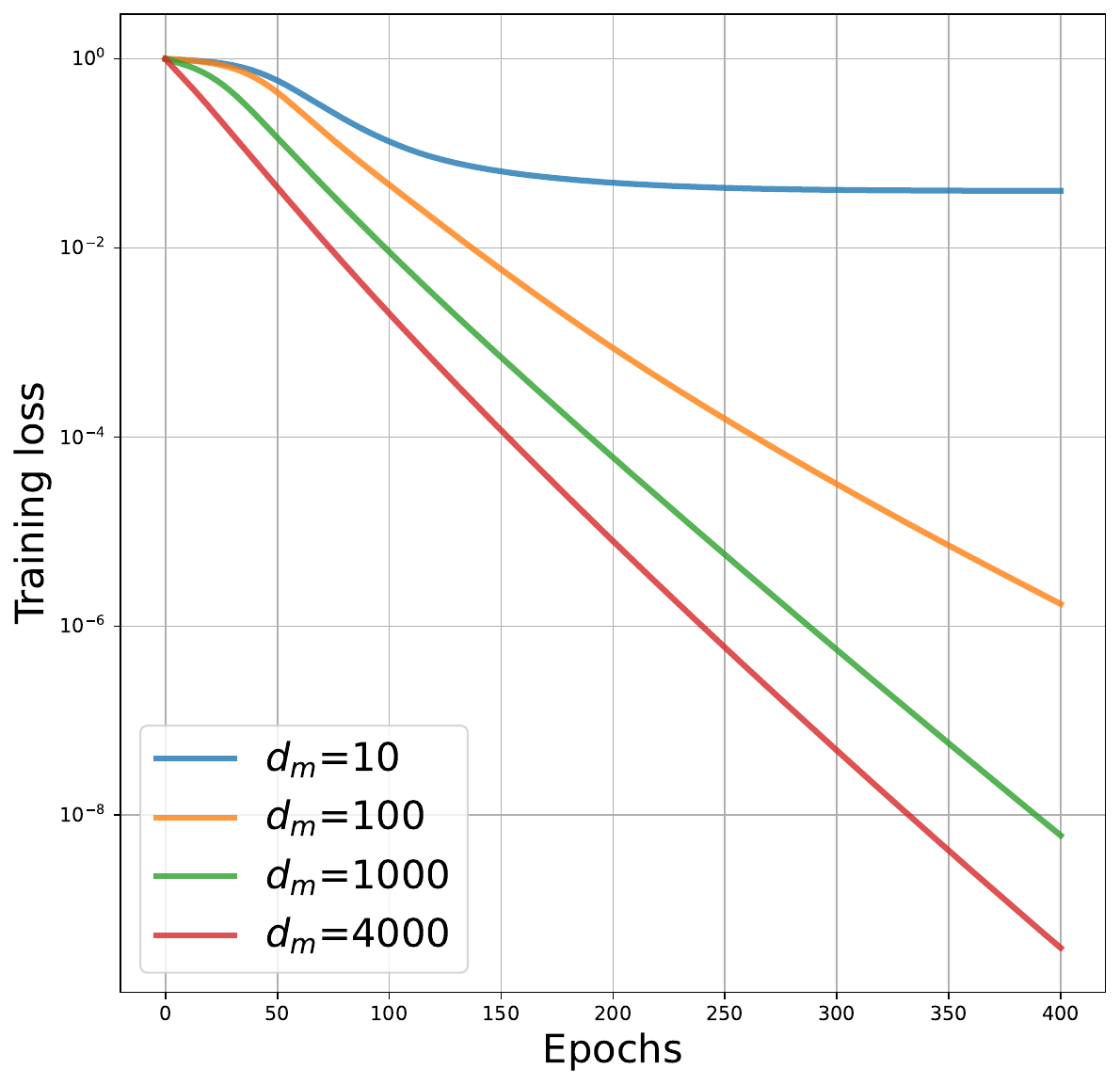}}
\vspace{0.4mm}
    \subfloat[Weight movement.]{\label{f2sqrtk}\includegraphics[width=0.3\textwidth]{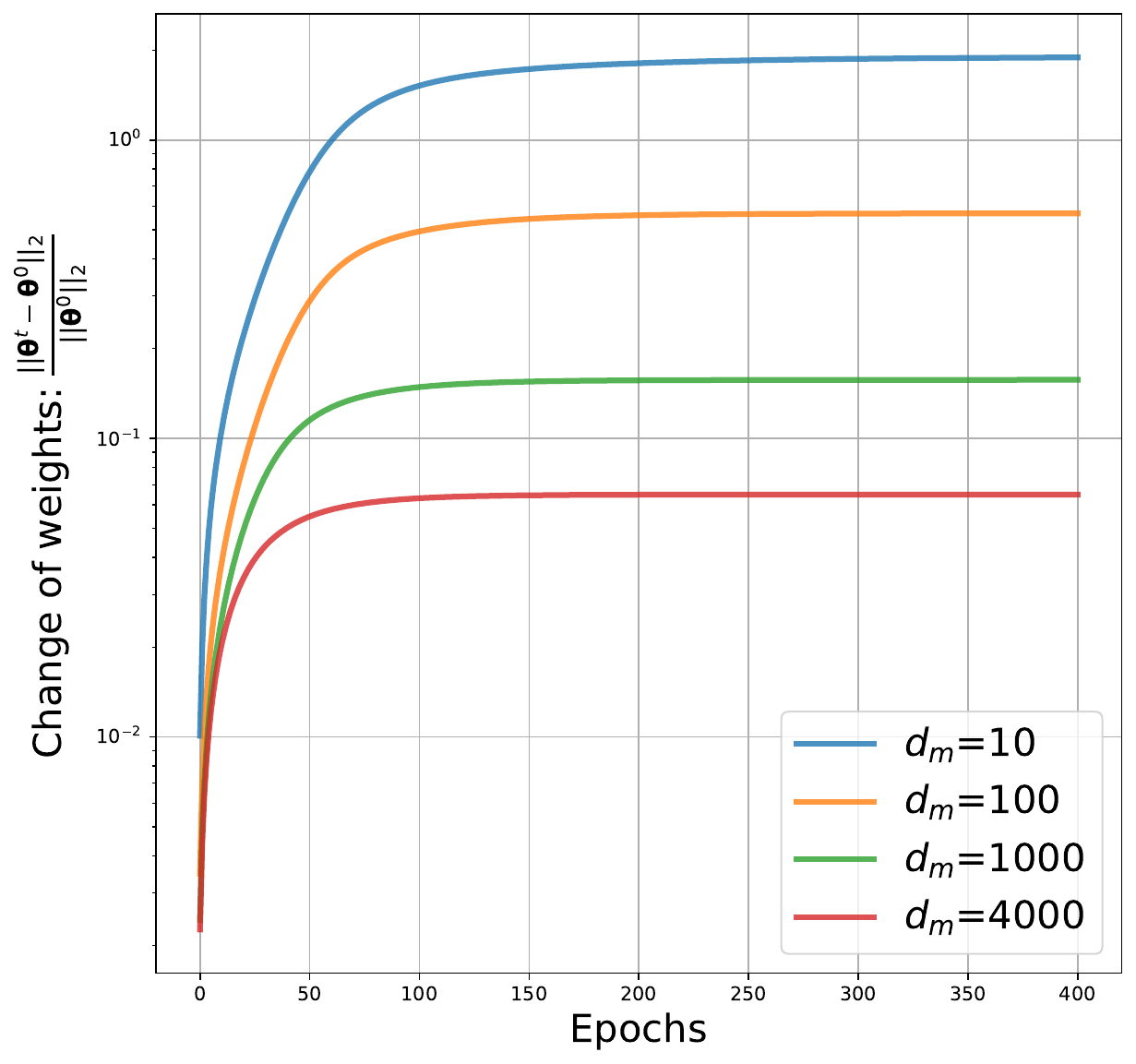}}
    \subfloat[Kernel distance.]
    {\label{f3sqrtk}\includegraphics[width=0.35\textwidth]{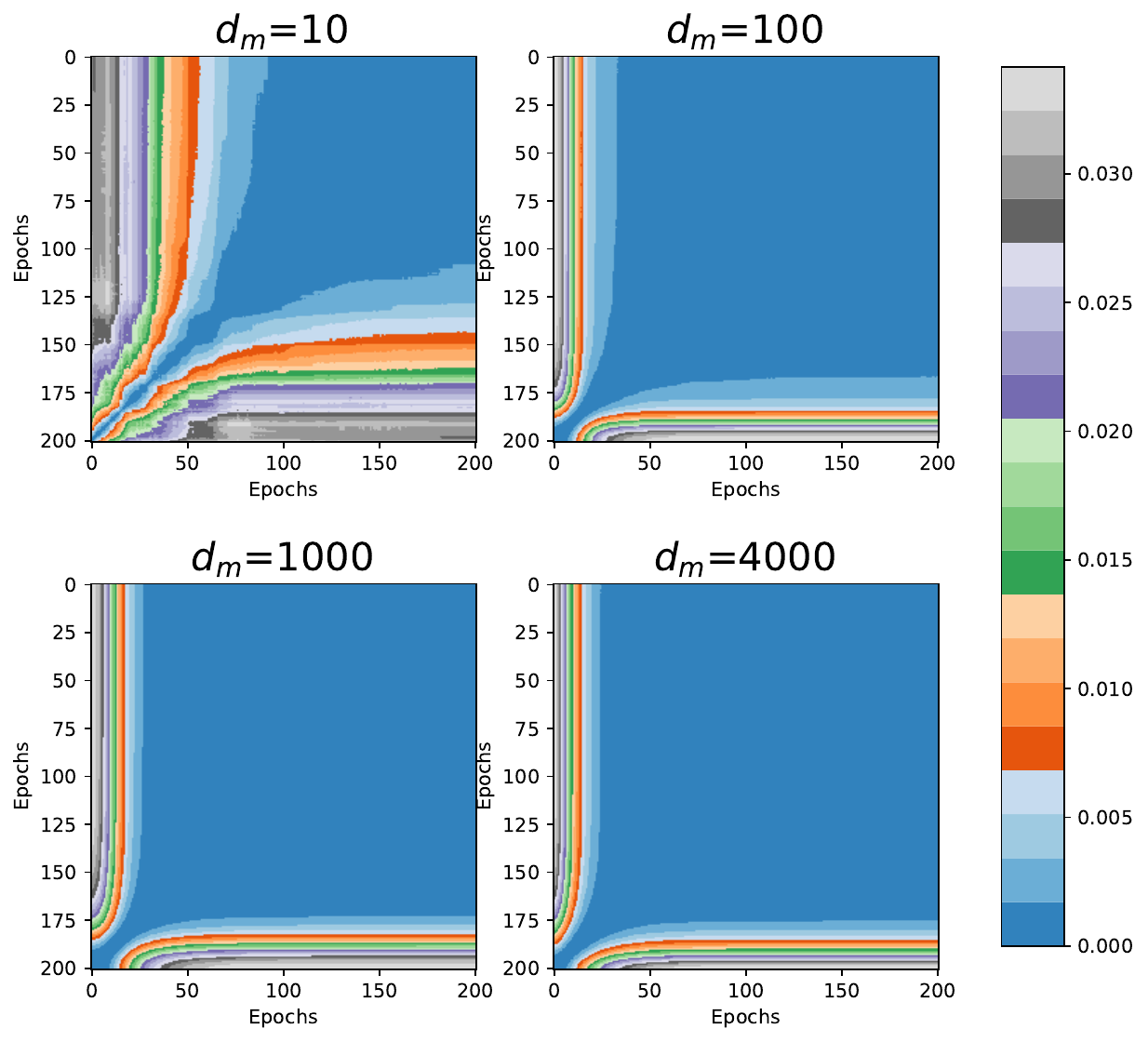}}
\caption{
Visualization of the training process of Transformers with LeCun initialization and $\tau_0=d_m^{-1/2}$ scaling on synthetic data. (a) Linear convergence. (b) Rate of change of the weights during training. Observe that the weights change very slowly after the $50^{\text{th}}$ epoch. (c) Evolution of the NTK during the training. The result mirrors the plot (b) and demonstrates how the kernel varies significantly at the beginning of the training and remains approximately constant later. As the width increases, the empirical NTK becomes more stable.
}
\label{fig:convergencelecun_sqrtk}
\end{figure*}
Our experiments are organized as follows: In \cref{sec:exp_randomdata}, we conduct experiments with the model \cref{equ:definition_final4} on synthetic data and study the training dynamics. Next, we show convergence results on ViT~\citep{dosovitskiy2021an} on the standard MNIST dataset in \cref{sec:exprealworld}.
Additional results and detail on the experimental setup are deferred to \cref{sec:appendix_exp}.

\subsection{Fitting synthetic data}
\label{sec:exp_randomdata}

In this section, we corroborate our theoretical findings on the synthetic data. We generate $100$ data points where the input $\mX \in \R ^{10 \times 100}$ is sampled from standard Gaussian distribution. The corresponding label $y \in \R$ is generated from standard Gaussian distribution. Squared loss is selected as the criterion. We apply gradient descent on the shallow \san{} defined in \cref{equ:definition_final4} with LeCun initialization and $\tau_0 = d_m^{-1/2}$ for $400$ epochs with a fixed step size $\gamma = 1$. We test different widths of the network including $d_m= \{10, 100,1000, 4000\}$ and plot the training progress in \cref{fig:convergencelecun_sqrtk}. The result shows that the training can converge except for the case with sub-linear width, see the linear convergence rate in \cref{f1sqrtk} and the small movement of the weight in \cref{f2sqrtk}. For these cases, the weights do not change much after 50 epochs while the losses are still decreasing. 
In \cref{f3sqrtk},  we keep track of the evolution of NTK along each epoch. Specifically, the kernel distance is given by:
$$
\text{Distance}\left(\mK,\widetilde{\mK}\right)=1-\frac{\operatorname{Tr}\left(\mK \widetilde{\mK}^{\top}\right)}{\sqrt{\operatorname{Tr}\left(\mK \mK^{\top}\right)} \sqrt{\operatorname{Tr}\left(\widetilde{\mK} \widetilde{\mK}^{\top}\right)}}\,,
$$
which quantitatively compares two kernels by the relative rotation, as used in \citet{fort2020deep}. \cref{f3sqrtk} shows that the kernel changes rapidly at the beginning of training while being approximately constant at later stages. 
The experiment with $\tau_0 = d_m^{-1}$, which is deferred to \cref{sec:appendix_exp_differentinit}, obtains similar results.
 \begin{wrapfigure}[13]{r}{0.4\textwidth}
    \vspace{-5mm}
    \begin{center}
\includegraphics[width=0.35\textwidth]{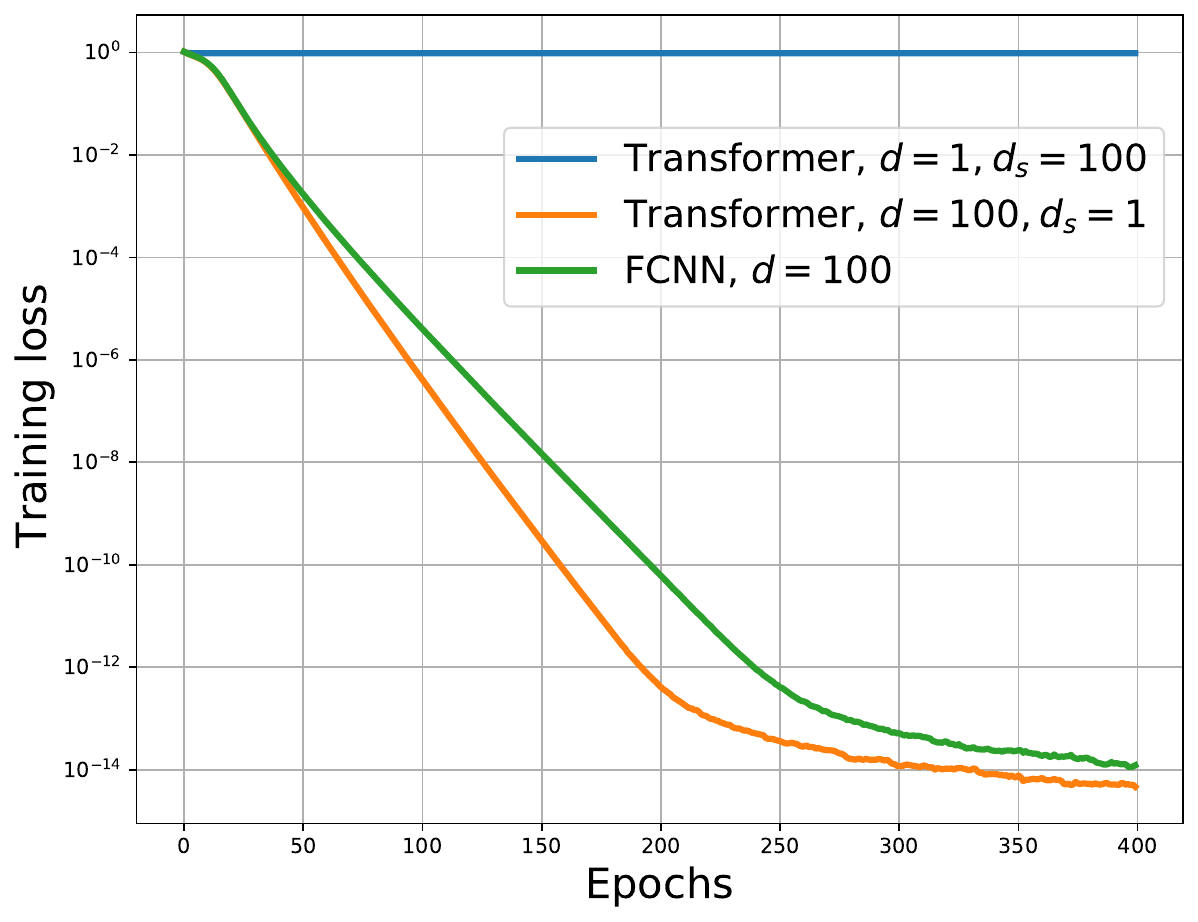}
        \caption{Convergence result on synthetic data with vector input. 
        }
        \label{fig:vectorinput}
        \vspace{-5mm}
    \end{center}
\end{wrapfigure}

Additionally, the experiment with other initialization schemes is deferred to \cref{sec:appendix_exp_differentinit}, where we observe that NTK initialization indeed yields slower convergence than LeCun/He initialization, which is consistent with our theoretical finding. 

Next, in order to verify \cref{coro:vectorinput}, we feed vector input $\vx_n \in \R^{100}$ into \san{} by setting either $d_s=1, d=100$ or $d_s=100, d=1$ under LeCun initialization with $\tau_0 = d_m^{-1}$. \cref{fig:vectorinput} shows that the training of \san{} with $d_s=1, d=100$ is similar to that of two-layer FCNN with the same width $d_m=100$. However, the case of $d_s=100, d=1$ fails to converge, which is consistent with our theoretical finding.\looseness-1 

\subsection{Fitting real-world dataset}
\label{sec:exprealworld}
 \begin{figure*}[!t]
    \subfloat[Convergence curve.]
{\label{mapf4}\includegraphics[width=0.4\textwidth]{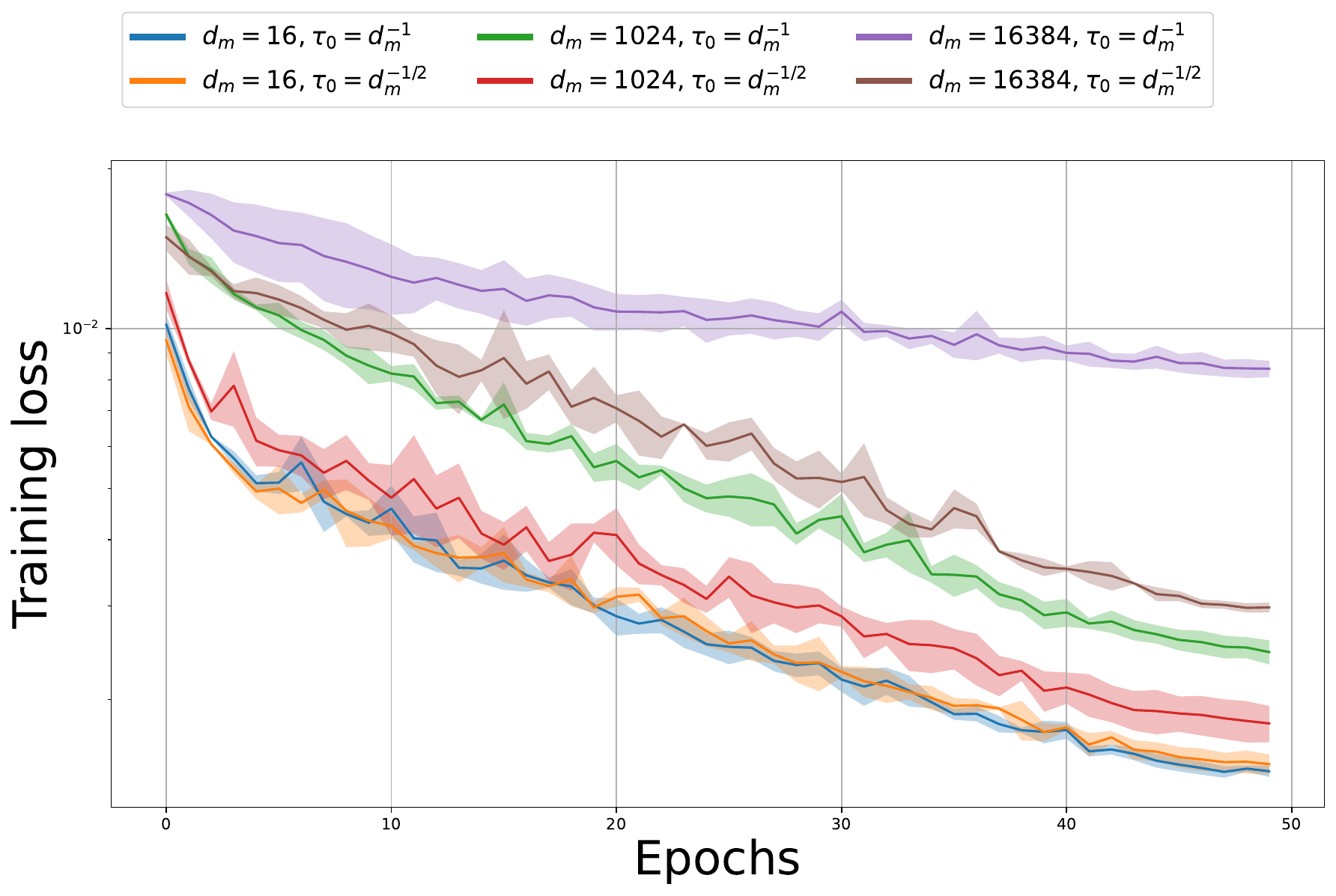}}
\hspace{1mm}
    \subfloat[ Attention map, $d_m = 16384$.]
{\label{mapf3}\includegraphics[width=0.55\textwidth]{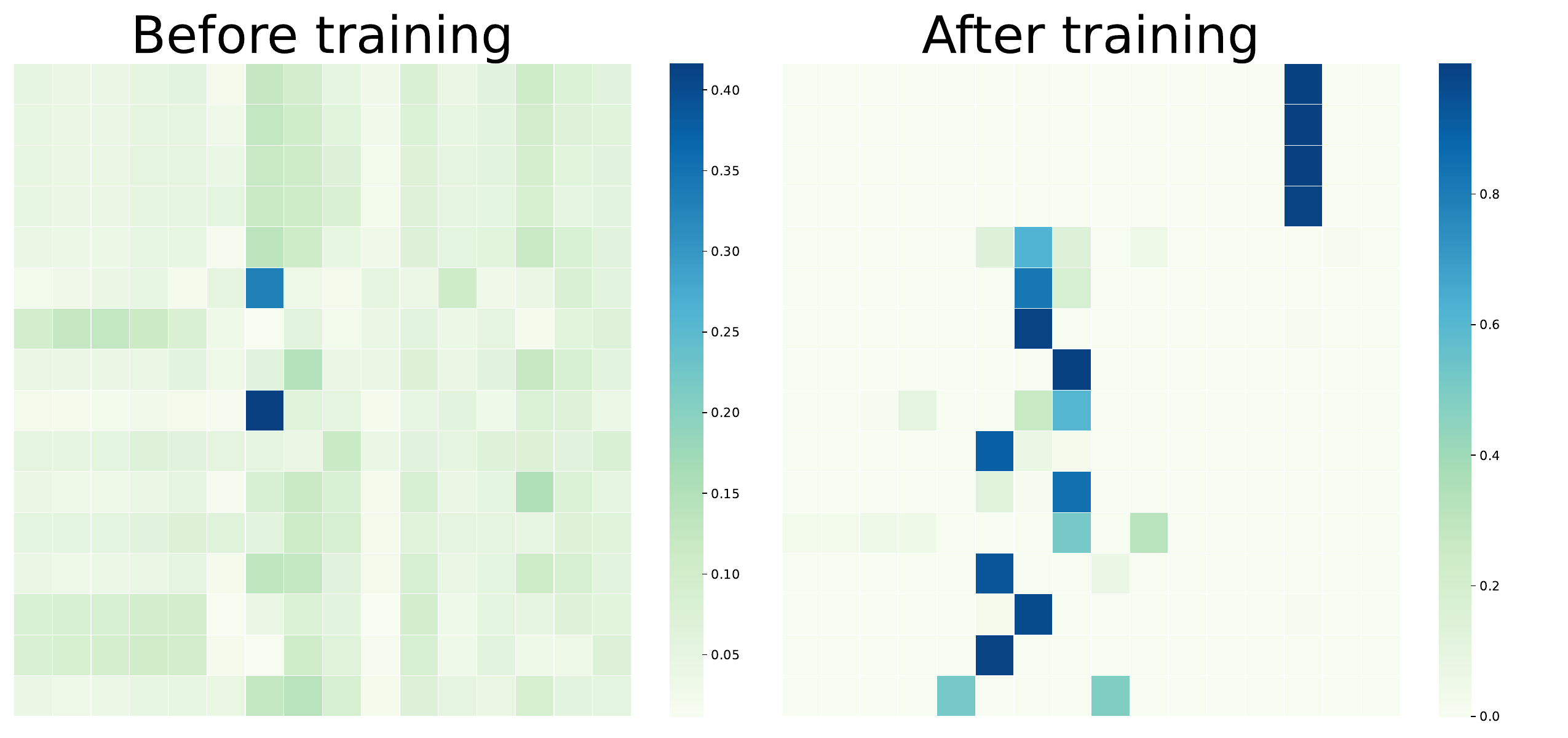}}
\caption{
Convergence curve on MNIST dataset with different scaling schemes and different widths in (a). Visualization of attention map in (b).
}
\end{figure*}
Beyond synthetic data, in this section, we examine the convergence performance of Vision Transformer (ViT) on classification task on MNIST dataset~\citep{lecun1998gradient}, which includes 10 classes of images with size $28 \times 28$. We use a single layer and single head ViT. The dimension of $d$ is 64. We change the dimension of the query, key, and value from 16 to 1024 and 16384. The network is optimized with SGD with step size $0.1$, and momentum $0.9$ for $50$ epochs. We repeat the experiment for three runs. 
We present the convergence results over $3$ runs in \cref{mapf4}, which shows that when the width is smaller, e.g., $16$, both $\tau_0 = d_m^{-1}$ and $\tau_0 = d_m^{-1/2}$ scaling have similar convergence results. However, as the width increases to $1024$ and $16384$, the $\tau_0 = d_m^{-1}$ setting admits a slower rate than that of $\tau_0 = d_m^{-1/2}$, especially a extremely slow rate under $d_m = 16384$.
This is consistent with our theoretical result on the \emph{dimension missing} effect such that the $\tau_0 = d_m^{-1}$ setting makes Transformer difficult to fit data. Additionally, we visualize the attention map in \cref{mapf3}, i.e., the output of softmax in the self-attention layer before training and after training under $\tau_0 = \dqk^{-1/2}$ setting. We could see that the self-attention layer changes a lot during training.\looseness-1
\section{Conclusion}
\label{sec:conclusion}
 \vspace{-2mm}

We present a comprehensive analysis of the global convergence properties of shallow \san{} under various scaling and initialization schemes.
Regarding scaling schemes, for a large width setting, the difference on convergence between $\tau_0 = d_m^{-1/2}$ and $\tau_0 = d_m^{-1}$ can be demonstrated both theoretically and empirically.
Our theory is able to explain this in a \emph{dimension missing} view.
Regarding initialization schemes, our theory prefers to using LeCun and He initialization in Transformer training, which allows for a faster convergence rate than NTK initialization. Though our analysis is limited to shallow Transformers, we believe our framework can be extended to deep Transformers.
We provide further discussion into this extension in \cref{sec:appendix_extension}. Exploring the convergence properties of deep Transformers is indeed an intriguing avenue for future research.

\section*{Acknowledgements}
We thank the reviewers for their constructive feedback. We thank Zhenyu Zhu for the discussion and help in this work. This work has received funding from the Swiss National Science Foundation (SNSF) under grant number 200021\_205011.
This work was supported by Hasler Foundation Program: Hasler Responsible AI (project number 21043).
Corresponding author: Fanghui Liu.

\newpage
\bibliography{bibliography}
\bibliographystyle{plainnat}

\newpage
\appendix
\section*{Contents of the Appendix}
The Appendix is organized as follows:
\begin{itemize}
\setlength\itemsep{0.8em}
\item
    In \cref{sec:appendix_symbol}, we summarize the symbols and notation used in this work.
\item 
    In \cref{sec:primerNTK}, we provide a theoretical overview for neural tangent kernel (NTK). The background in Sub-Exponential random variables is elaborated in \cref{sec:appendix_subexp}. More detailed related work on the convergence analysis of nerual networks can be found in \cref{appendix:relatedwork_convergence}.

\item  
    The proofs for the convergence of \san{} are included in \cref{sec:appendix_convergence}.
\item  
    The derivations and the proofs for the NTK are further elaborated in \cref{sec:appendix_ntk},
    including the formulation of NTK, minimum eigenvalue of NTK, and the relationship between the Hessian and the width.
\item Further details on the experiments are developed in \cref{sec:appendix_exp}.
\item Limitations and societal impact of this work are discussed in \cref{sec:appendix_limitation} and \cref{sec:appendix_societal}, respectively.
\end{itemize}
\newpage 
\section{Symbols and Notation}
\label{sec:appendix_symbol}
We include the core symbols and notation in \cref{table:symbols_and_notations} for facilitating the understanding of our work.

\begin{table}[ht]
\caption{Core symbols and notations used in this paper.
}
\label{table:symbols_and_notations}
\centering
\fontsize{14}{14}\selectfont
\resizebox{1.1\textwidth}{!}{%
\begin{tabular}{c | c | c}
\toprule
Symbol & Dimension(s) & Definition \\
\midrule
$\mathcal{N}(\mu,\sigma) $ & - & Gaussian distribution of mean $\mu$ and variance $\sigma$ \\
\midrule
$\left \| \vw \right \|_2$ & - & Euclidean norms of vectors $\vw$ \\
$\left \| \mW \right \|_2$ & - & Spectral norms of matrix $\mW$ \\
$\left \| \mW \right \|_{\mathrm{F}}$ & - & Frobenius norms of matrix $\mW$ \\
$\left \| \mW \right \|_{\mathrm{\ast}}$ & - & Nuclear norms of matrix $\mW$ \\
$\lambda_{\min}(\mW),\lambda_{\max}(\mW)$ & - & Minimum and maximum eigenvalues of matrix $\mW$ \\
$\sigma_{\min}(\mW), \sigma_{\max}(\mW)$ & - & Minimum and Maximum singular values of matrix $\mW$ \\
$\vw^{(i)}$ & - & $i$-th element of vectors $\vw$\\
$\mW^{(i,j)}$ & - & $(i, j)$-th element of matrix $\mW$
\\
$\mW^{t}$ & - & $\mW$ at time step $t$
\\
$\circ$ & - & Hadamard product
\\
\midrule
$\sigma_s(\mW)$ & - &  Row-wise Softmax activation for matrix $\mW$ \\
$\sigma_r(\mW)$ & - &  Element-wise ReLU activation for matrix $\mW$ \\
$\mathbbm{1}\left \{\text{event}\right \}$ & - & Indicator function for event \\
\midrule
$N$ & - & Size of the dataset \\
$d_m$ & - & Width of intermediate layer\\
$d_s$ & - & Sequence length of the input\\
$d$ & - & Dimension of each token\\
$\eta_Q,\eta_K,\eta_V,\eta_O$ & - & Variance of Gaussian initialization of $\mW_Q,\mW_K,\mW_V,\vw_O$
\\
$\tau_0$, $\tau_1$ & - & Scaling factor\\
$\gamma$& - & Step size\\
\midrule
$\mX_n$ & $\mathbb{R}^{d_s \times d}$ & The $n$-th data point \\
$y_n$ & $\mathbb{R}$ & The $n$-th target \\
$\mathcal{D}_{X}$ & - & Input data distribution \\
$\mathcal{D}_{Y}$ & - & Target data distribution \\
$\vbeta_{i,n}
:= \sigma_s\left(
\tau_0  
\mX_n^{(i,:)}\mW_Q^\top \mW_K\mX_n^\top
\right)^\top
$ & $\mathbb{R}^{d_s}$ &The $i$-th row of the output of softmax of the $n$-th data point \\
\midrule
$\vy := [y_1,...,y_N]^\top $
    &$\R^{N}$
    &Ground truth label of 
    the data samples $\{{\mX_n}\}_{n=1}^N$\\
$\vf(\wall) := [f(\mX_1;\wall),...,f(\mX_N;\wall)]^\top$
    &$\R^{N}$
    &Network output given data samples 
    $\{{\mX_n}\}_{n=1}^N$ 
    \\
$\mF_\text{pre}(\wall) := 
[\va_{3}(\mX_1;\wall),...,\va_{3}(\mX_N;\wall)]^\top$
    &$\R^{N \times d_m}$
    &Output of the last hidden layer given $\{{\mX_n}\}_{n=1}^N$ \\
$f(\wall) := f(\mX;\wall)$
    &$\R$
    &Network output given data samples $\mX$\\
    $\vf_\text{pre} := 
\va_{3}(\mX;\wall)$
    &$\R^{d_m}$
    &Output of the last hidden layer given$\mX$\\
\midrule 
$\mW_Q, \mW_K$ & $\mathbb{R}^{d_m\times d},\mathbb{R}^{d_m\times d}$ & Learnable parameters \\
$\mW_V, \vw_O$ & $\mathbb{R}^{d_m\times d},\mathbb{R}^{d_m}$ & Learnable parameters \\
\midrule
$\bar{\lambda}_Q \triangleq \norm{\mW_Q^0}_2 + C_Q,$
$\bar{\lambda}_K \triangleq \norm{\mW_K^0}_2 + C_K$
& $\R$& Parameters norm\\
$\bar{\lambda}_V \triangleq \norm{\mW_V^0}_2 + C_V$,
$\bar{\lambda}_O\triangleq \norm{\vw_O^0}_2 + C_O$
&$\R$ & Parameters norm\\
$\rho \triangleq N^{1/2} d_s^{3/2} \tau_1 C_x, \quad $  $z \triangleq \bar{\lambda}_{O}^2
    \left( 1
    +
    4 \tau_0^2 C_x^4 d_s^2
    \bar{\lambda}_{V}^2
        \left(
            \bar{\lambda}_{Q}^2+
            \bar{\lambda}_{K}^2
                \right) 
    \right)\,$
&$\R$ & Auxiliary terms
\\
\midrule
$f_{i}$ & $\mathbb{R}$ & Output of network for input $\mX_i$\\
\midrule
$\mathcal{O}$, $o$, $\Omega$ and $\Theta$ & - & Standard Bachmann–Landau order notation\\
\midrule
\end{tabular}
}
\begin{tablenotes}
\footnotesize
\item [1]   * The superscript with bracket represents the element of a vector/matrix, e.g., $\vw^{(i)}$ is the $i^{\text{th}}$ element of $\vw$.
\item [2] * The superscript without bracket symbolizes the parameters at different training steps, e.g., $\wall^t$.
\item [3] * The subscript without bracket symbolizes the variable associated to the $n$-th data sample, e.g., $\mX_n$.
\end{tablenotes}
\end{table}

\section{Theoretical background}
\label{sec:appendixbackground}
\subsection{Preliminary on NTK}
\label{sec:primerNTK}
In this section, we summarize how training a neural network by minimizing squared loss, i.e., $
\ell(\wall^t)
= \frac{1}{2} \sum_{n=1}^N (f(\mX_n;\wall^t) - y_n )^2$, via gradient descent can be characterized by the kernel regression predictor with NTK.

By choosing an infinitesimally small learning rate, we can obtain the following
gradient flow:
$$\frac{d \wall^t}{dt} = -\nabla \ell(\wall^t)\,.$$
By substituting the loss into the above equation and using the chain rule, we can find that the network outputs $f(\wall^t)  \in \mathbb{R}^{N}$ admit the following dynamics:
\begin{align}
\frac{d f(\wall^t)  }{dt} = -\mK^t  (f(\wall^t)  - \bm{y})\,,
\label{equ:functionoutdynamic}
\end{align}
where $\mK^t
=
\left(\frac{\partial f(\wall^t)}{\partial \wall}\right)
\left(\frac{\partial
f(\wall^t)}{\partial \wall}\right)^\top \in \mathbb{R}^{N \times N}$.
\citet{NEURIPS2018_5a4be1fa,arora2019exact}  have shown that for fully-connected neural networks, under the infinite-width setting and proper initialization, $\mK^t$ will be stable during training and $\mK^0$ will converge to a fixed matrix $\bm{K}^\star\in\mathbb{R}^{N \times N}$, where $({\mK}^\star)^{(ij)}=K^\star(\mX_i, \mX_j)$ is the NTK value for the inputs $\mX_i$ and $\mX_j$. Then, we rewrite \cref{equ:functionoutdynamic} as:
$$
\frac{d f(\wall^t)  }{dt} = -\bm{K}^\star (f(\wall^t)  - \bm{y})\,.
$$
This implies the network output for any $\mX \in \R^{d_s \times d}$ can be calculated by the kernel regression predictor with the associated NTK:
\begin{align*}
 f (\mX) &=\left(
 K^\star(\mX,\mX_1), \cdots, 
 K^\star(\mX,\mX_N) \right)
 \cdot  {(\bm K^\star)}^{-1} \bm{y}\,,
\end{align*}
where $K^\star(\mX,\mX_n)$ is the kernel value between test data $\mX$ and training data $\mX_n$.

\subsection{Preliminary on Sub-Exponential random variables}
\label{sec:appendix_subexp}

Below, we overview the definition of a sub-exponential random variable and few related lemma based on~\citet{wainwright2019high}.
A random variable $X$ with mean $\mu$ is called sub-exponential random variable if there exist non-negative parameters $(\nu, \alpha)$ such that
$$
\E[e^{\lambda (X - \mu)}] \le e^{\frac{\nu^2 \lambda^2}{2}} \quad \text{ for all } |\lambda|< \frac{1}{\alpha},
$$ and we denote by $X \sim SE(\nu, \alpha)$.

\begin{lemma}
\label{lemma_subexponential_scaled}
The multiplication of a scalar  $s \in \R^+$ and a sub-exponential
random variable $X \sim SE(\nu, \alpha)$ is still a sub-exponential
random variable:  $sX \sim SE(s\nu, s\alpha)$.
\end{lemma}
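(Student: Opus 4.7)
\textbf{Proof proposal for \cref{lemma_subexponential_scaled}.}

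The plan is to reduce the claim to the defining MGF bound for $X$ by a direct change of variables. By definition, $X \sim SE(\nu,\alpha)$ with mean $\mu$ means that
\[
\E[e^{\lambda(X-\mu)}] \le e^{\nu^2 \lambda^2/2} \quad \text{for all } |\lambda| < 1/\alpha.
\]
To show $sX \sim SE(s\nu, s\alpha)$, I need to verify the analogous inequality for $sX$ with mean $s\mu$ (using linearity of expectation), namely
\[
\E[e^{\lambda(sX - s\mu)}] \le e^{(s\nu)^2 \lambda^2 / 2} \quad \text{for all } |\lambda| < 1/(s\alpha).
\]

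The key step is the substitution $\lambda' := s\lambda$. First, I would observe that $\E[sX] = s\mu$, so $\lambda(sX - s\mu) = (s\lambda)(X - \mu) = \lambda'(X-\mu)$. Second, I would check the admissible range: if $|\lambda| < 1/(s\alpha)$ with $s > 0$, then $|\lambda'| = s|\lambda| < 1/\alpha$, which lies in the domain of validity of the MGF bound for $X$. Applying the sub-exponential bound for $X$ with parameter $\lambda'$ then gives
\[
\E[e^{\lambda(sX - s\mu)}] = \E[e^{\lambda'(X-\mu)}] \le e^{\nu^2 (\lambda')^2 / 2} = e^{\nu^2 s^2 \lambda^2 / 2} = e^{(s\nu)^2 \lambda^2 / 2},
\]
which is exactly the required bound. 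This identifies the new parameters as $(s\nu, s\alpha)$, completing the proof.

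There is no real obstacle here; the lemma is essentially a restatement of the definition under a scaling. The only subtlety worth flagging is that $s \in \mathbb{R}^+$ is assumed positive, which keeps $1/(s\alpha)$ well-defined and preserves the sign of the admissible interval; if one wanted to extend to $s \in \mathbb{R}$ one would use $|s|$ in the parameters, but that is outside the stated claim.
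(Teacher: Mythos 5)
Your proof is correct and is the standard change-of-variables argument $\lambda' := s\lambda$ applied to the defining MGF bound; the paper itself states this lemma without proof (attributing it to \citet{wainwright2019high}), so there is no alternative paper argument to compare against.
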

\begin{lemma} 
\label{lemma_subexponential_sum}
Given a set of independent sub-exponential random variables $X_i \sim (\nu_i, \alpha_i)$ for $i \in 1,...,N$, then $\sum_{i=1}^N X_i \sim (\sqrt{\sum_{i=1}^N \nu_i^2}, \max_i \alpha_i)$.
\end{lemma}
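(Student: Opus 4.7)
The plan is to prove the statement directly from the moment generating function (MGF) characterization of sub-exponential random variables, leveraging independence to factor the joint MGF into a product. Since the parameters $(\nu,\alpha)$ are defined through an upper bound on $\E[e^{\lambda(X-\mu)}]$ valid on a specified interval of $\lambda$, the two things to verify for the sum are (i) the correct Gaussian-like MGF bound $e^{(\sum_i \nu_i^2)\lambda^2/2}$, and (ii) the correct range $|\lambda| < 1/\max_i \alpha_i$ on which it holds.

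First, I would let $\mu_i = \E[X_i]$ and denote $S = \sum_{i=1}^N X_i$, so that $\E[S] = \sum_i \mu_i$. By independence of the $X_i$, the MGF of the centered sum factors:
\begin{equation*}
\E\bigl[e^{\lambda(S - \E[S])}\bigr] \;=\; \E\Bigl[\prod_{i=1}^N e^{\lambda(X_i - \mu_i)}\Bigr] \;=\; \prod_{i=1}^N \E\bigl[e^{\lambda(X_i - \mu_i)}\bigr].
\end{equation*}
Next, I would restrict attention to $|\lambda| < 1/\max_i \alpha_i$, which implies $|\lambda| < 1/\alpha_i$ for every $i$ simultaneously, so the sub-exponential bound $\E[e^{\lambda(X_i - \mu_i)}] \le e^{\nu_i^2 \lambda^2/2}$ from the definition applies in each factor.

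Multiplying the per-coordinate bounds then gives
\begin{equation*}
\E\bigl[e^{\lambda(S - \E[S])}\bigr] \;\le\; \prod_{i=1}^N e^{\nu_i^2 \lambda^2 / 2} \;=\; \exp\!\Bigl(\tfrac{\lambda^2}{2}\sum_{i=1}^N \nu_i^2\Bigr),
\end{equation*}
which matches the sub-exponential MGF upper bound with variance proxy $\sum_i \nu_i^2$, i.e. with parameter $\sqrt{\sum_i \nu_i^2}$. Combined with the range restriction above, this shows $S \sim SE\bigl(\sqrt{\sum_{i=1}^N \nu_i^2},\, \max_i \alpha_i\bigr)$, as claimed.

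I do not expect any genuine obstacle here: the result is a textbook consequence of independence and the definition, and the only subtlety is being careful that the common validity interval for $\lambda$ shrinks to $(-1/\max_i \alpha_i,\, 1/\max_i \alpha_i)$, which dictates the second parameter of the sum. No tail bounds, truncation, or concentration machinery beyond the MGF definition is needed.
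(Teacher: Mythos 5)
Your proposal is correct, and it is exactly the standard MGF-factorization argument: restrict to $|\lambda| < 1/\max_i \alpha_i$ so every per-coordinate bound applies, use independence to factor the MGF, and multiply. The paper states this lemma as a known preliminary (citing Wainwright, 2019) without giving its own proof, so there is no alternative route in the paper to compare against; your argument is the one the paper implicitly relies on.
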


\begin{lemma} \label{lemma_subexponential_bernstein}
Given a  sub-exponential random variable $X \sim SE(\nu, \alpha)$ with mean $\mu$, the following inequality holds with probability at least $1-\delta$:
\[
|X - \mu|< \max \left(\nu \sqrt{2 \log \frac{2}{\delta}}, 2\alpha \log \frac{2}{\delta} \right)\,.
\]
\end{lemma}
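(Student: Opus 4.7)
The plan is to apply the standard Chernoff--Cram\'er technique, exploiting the moment generating function bound built into the definition of a sub-exponential random variable, and then to invert the resulting tail bound to obtain the desired high-probability statement. The argument naturally splits into two regimes: a sub-Gaussian regime for small deviations and a sub-exponential regime for large deviations, separated at the threshold $t = \nu^2/\alpha$; this dichotomy is exactly what produces the maximum of two quantities in the stated bound.

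First I would write, for any $\lambda \in [0, 1/\alpha)$, the Markov inequality $P(X - \mu \ge t) \le e^{-\lambda t}\,\E[e^{\lambda(X-\mu)}]$ and combine it with $\E[e^{\lambda(X-\mu)}] \le e^{\nu^2\lambda^2/2}$ from the sub-exponential definition, yielding
\[
    P(X - \mu \ge t) \le \exp\!\left(\tfrac{\nu^2\lambda^2}{2} - \lambda t\right).
\]
I would then optimize the exponent over $\lambda$. The unconstrained minimizer $\lambda^\star = t/\nu^2$ lies in the admissible range precisely when $t \le \nu^2/\alpha$, in which case the bound becomes $\exp(-t^2/(2\nu^2))$. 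When $t > \nu^2/\alpha$, I would take $\lambda = 1/\alpha$ (by continuity from the open interval) and use $\nu^2/(2\alpha^2) \le t/(2\alpha)$ to reach the cleaner bound $\exp(-t/(2\alpha))$. Since the MGF bound also holds for $\lambda \in (-1/\alpha, 0]$, applying the identical argument to $-(X-\mu)$ and union-bounding over the two tails gives
\[
    P(|X-\mu| \ge t) \le 2\exp\!\left(-\min\!\left(\tfrac{t^2}{2\nu^2},\, \tfrac{t}{2\alpha}\right)\right).
\]

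To convert this into the stated confidence bound, I would invert in each regime separately: setting $2\exp(-t^2/(2\nu^2)) = \delta$ yields $t = \nu\sqrt{2\log(2/\delta)}$, while setting $2\exp(-t/(2\alpha)) = \delta$ yields $t = 2\alpha\log(2/\delta)$. Choosing $t$ to be the maximum of these two thresholds ensures that whichever regime is active, the corresponding exponent dominates and the tail probability is at most $\delta$. The only mildly subtle point is the open interval $|\lambda| < 1/\alpha$ appearing in the large-deviation regime, which is handled either by a limiting argument in $\lambda \uparrow 1/\alpha$ or by restricting to a slightly smaller compact subinterval and absorbing the resulting slack into the constants; this is not a real obstacle since the final bound is already loose by an absolute constant.
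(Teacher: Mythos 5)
Your proposal is correct, and it is the standard Chernoff--Cram\'er derivation (two-regime optimization of the exponent, then a union bound over the two tails and inversion of the resulting sub-Gaussian/sub-exponential pieces). The paper itself does not prove this lemma --- it cites it directly from \citet{wainwright2019high} as a known Bernstein-type tail bound --- and your argument reproduces exactly that textbook proof, so there is nothing to compare beyond noting the agreement. One small remark on your closing comment: the passage to $\lambda = 1/\alpha$ needs no absorbed slack and costs no constant; since the exponent $\nu^2\lambda^2/2 - \lambda t$ is continuous and decreasing on $[0, t/\nu^2]$, the infimum over the open interval $\lambda \in [0, 1/\alpha)$ equals the limiting value at $\lambda = 1/\alpha$, which, after applying $\nu^2/(2\alpha^2) \le t/(2\alpha)$, gives $\exp(-t/(2\alpha))$ exactly as stated.
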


\begin{table}[!h]
\renewcommand\arraystretch{1.5}
\setlength\tabcolsep{2pt}
\centering
\caption{
Over-parameterization conditions 
 for the convergence analysis 
 of neural network
 under gradient descent training with squared loss. $L$ is the depth of the network.
}
\scalebox{1}{\footnotesize
\begin{tabular}
{ccccccccc}
\toprule[0.8pt]
&Model &Depth&  Initialization & Activation & Width &  &   \\
\midrule
\citet{allen2019convergence}
    &   FCNN/CNN&Deep&NTK  & ReLU  & $\Omega(N^{24}L^{12})$
   \\
\citet{du2019gradient}
    & FCNN/CNN &Deep& NTK & Smooth &  $\Omega(N^4 2 ^{\mathcal{O}(L)}) $ \\
\midrule
\citet{oymak2020toward}
    & FCNN &Shallow& Standard Gaussian &  ReLU  & $\Omega(N^2)$\\
\citet{zou2019improved}
    & FCNN&Deep & He &  ReLU  & $\Omega(N^8 L ^{12})$\\
\citet{du2018gradient}
    & FCNN&Shallow & NTK & ReLU &$\Omega(N^6
     )$   \\
\citet{nguyen2021proof}
   & FCNN &Deep& LeCun  & ReLU & $\Omega(N^3)$  & \\
\citet{chen2021how}
    & FCNN &Deep&NTK  & ReLU & $\Omega(L^{22})$  & \\
\citet{song2021subquadratic}
   &
    FCNN &  Shallow&He/Lecun & Smooth & $\Omega(N^{3/2})$  & \\
\citet{bombari2022memorization}
    & FCNN &Deep&He/LeCun  & Smooth & $\Omega{(\sqrt{N})}$ & \\
\midrule
\citet{allen2019convergencernn}
    &RNN &-& NTK& ReLU &  $\Omega{(N^c)}, c>1$& \\
\midrule
\citet{hron2020infinite}& Transformer&Deep& NTK & ReLU & -\\
\citet{yang2020tensor} 
    & Transformer&Deep& NTK &Softmax+ReLU & -\\
\textbf{Our}
    & \san{} &  Shallow&\cref{tab:init} & Softmax+ReLU &   $\Omega(N)$& \\ 
\bottomrule[1pt]
\end{tabular}
}
\label{tab:relatedworklonger}
\end{table}

\subsection{Related work on over-parameterization for convergence analysis}
\label{appendix:relatedwork_convergence}
Recent empirical observation shows that neural networks can fit arbitrary labels with zero training loss when applying the first-order methods, e.g., gradient descent (GD)~\citep{zhang2017understanding}.
Due to the highly non-convex and non-smooth instinct of the neural network, a large body of work have attempted to explain such a phenomenon.

Early work studied the convergence of stochastic gradient descent (SGD) for training two-layer over-parameterized ReLU neural network with cross-entropy loss
~\citep{li2018learning}.
\citet{du2018gradient} show that only training the first layer of two-layer ReLU network with square loss by GD can lead to global convergence under the assumption that the Gram matrix is positive definite.
%
\citet{du2019gradient} extend the result to deep neural network with smooth activation function and shows that the convergence is guaranteed when  the widths of all the hidden layers scale in $\Omega(N^4)$, where $N$ is the number of data points.
%
Meanwhile, 
\citet{zou2019improved} prove that the condition for the convergence of GD for deep ReLU network is $\Omega(N^8)$, which improves upon \citet{allen2019convergence} that show the result of $\Omega(N^{24})$.
\citet{allen2019convergence} also provide several convergence analyses under the setting of SGD and various loss functions.
%
%
The assumption regarding the positive definiteness of the Gram matrix made in \citet{du2018gradient} has been rigorously proved in  
\citet{nguyen2021tight}. This facilitates \citet{nguyen2021proof} to demonstrate that deep ReLU network under LeCUN initialization with width in the order $\Omega{(N^3)}$ is enough for global convergence.
%
Recent breakthrough \citep{bombari2022memorization} improves previous results by showing that sub-linear layer widths suffice for deep neural network with smooth activation function.

\section{Proof for convergence analysis}
\label{sec:appendix_convergence}
This section is developed for the proof of the convergence result and we outline the flowchart below: Specifically, in \cref{sec:appendix_usefullemma}, we provide some auxiliary lemmas. \cref{thm:inequality_initialization} and \cref{lemma:gaussl2} show that the norm of the parameters can be bounded with high probability at initialization.
\cref{lemma:lipsoftmax,lemma:differencebound,lemma:differenceboundlast} present that the network output and the output of softmax between two adjacent time steps can be upper bounded. The Lipschitzness of network gradient and its norm is bounded in \cref{lemma:networkgradientlip,lemma:networkgradientnorm}.
In \cref{sec:general}, we prove the convergence of the general cases, i.e., \cref{thm:general}. In \cref{sec:convergence,sec:convergence_originalscale} we present the proof for $\dqk^{-1/2}$ and  $\dqk^{-1}$ scaling.
\subsection{Auxiliary lemmas}
\label{sec:appendix_usefullemma}
\begin{lemma}[Corollary 5.35 of \citet{vershynin_2012}]
\label{thm:inequality_initialization}
For a weight matrix $\mW \in \R^{d_1 \times d_2}$ 
where each element is sampled independently from $\mathcal{N}(0, 1)$, for every $\zeta \geq 0$, with probability at least $1-2\mathrm{exp}(-\zeta^2/2)$ one has:
\begin{align*}
\sqrt{d_1} - \sqrt{d_2} - \zeta \leq \sigma_{\min}(\mW) \leq \sigma_{\max}(\mW) \leq \sqrt{d_1} + \sqrt{d_2} + \zeta,
\end{align*}
where $\sigma_{\max}(\mW) $ 
and $\sigma_{\min}(\mW)$ represents the maximum and minimum singular value of $\mW$, respectively.
\end{lemma}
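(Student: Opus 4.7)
The plan is to combine two standard ingredients: an expectation bound on the extreme singular values via a Gaussian comparison (Gordon/Slepian-type) inequality, and Gaussian Lipschitz concentration around the expectation.

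First I would set up the variational characterization $\sigma_{\max}(\mW) = \sup_{\vu \in S^{d_1-1},\,\vv \in S^{d_2-1}} \vu^{\top} \mW \vv$ and, when $d_1 \ge d_2$, the min-max formula $\sigma_{\min}(\mW) = \inf_{\vv \in S^{d_2-1}} \sup_{\vu \in S^{d_1-1}} \vu^{\top} \mW \vv$. Viewing $G_{\vu,\vv} := \vu^{\top} \mW \vv$ as a centered Gaussian process indexed by $(\vu,\vv) \in S^{d_1-1} \times S^{d_2-1}$, one computes the covariance and compares it with the simpler process $\widetilde{G}_{\vu,\vv} := \langle \vg, \vu \rangle + \langle \vh, \vv \rangle$ where $\vg \sim \mathcal{N}(\mathbf{0}, \mI_{d_1})$ and $\vh \sim \mathcal{N}(\mathbf{0}, \mI_{d_2})$ are independent. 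Gordon's inequality then yields $\E[\sigma_{\max}(\mW)] \le \E\|\vg\|_2 + \E\|\vh\|_2 \le \sqrt{d_1} + \sqrt{d_2}$ and, via the min-max form, $\E[\sigma_{\min}(\mW)] \ge \sqrt{d_1} - \sqrt{d_2}$.

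Second, I would invoke the fact that both $\sigma_{\max}$ and $\sigma_{\min}$ are $1$-Lipschitz as functions of the matrix entries with respect to the Frobenius norm: by Weyl's perturbation bound, $|\sigma_{\max}(\mW) - \sigma_{\max}(\mW')| \le \|\mW - \mW'\|_2 \le \|\mW - \mW'\|_\mathrm{F}$ and similarly for $\sigma_{\min}$. Since $\mW$ has i.i.d.\ $\mathcal{N}(0,1)$ entries, the classical Gaussian concentration inequality (Borell-Sudakov-Tsirelson) for $1$-Lipschitz functions gives, for every $\zeta \ge 0$,
\[
\mathbb{P}\bigl(|\sigma_{\max}(\mW) - \E\sigma_{\max}(\mW)| \ge \zeta\bigr) \le 2 \exp(-\zeta^2/2),
\]
and the same tail for $\sigma_{\min}$. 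Combining these two tails with the expectation bounds from step one, then applying a union bound over the upper tail of $\sigma_{\max}$ and the lower tail of $\sigma_{\min}$, produces the two-sided bound stated in the lemma with the claimed probability $1 - 2\exp(-\zeta^2/2)$ (absorbing the factor of two by keeping only the relevant one-sided tails for each end).

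The main obstacle is the expectation bound in step one: the Lipschitz concentration part is routine once Lipschitzness is noted, but obtaining the sharp constants $\sqrt{d_1} \pm \sqrt{d_2}$ genuinely requires Gordon's comparison theorem (or an equivalent Slepian-type argument); a naive $\epsilon$-net approach gives weaker constants. An alternative I would consider, to avoid Gordon's theorem, is to bound $\E\sigma_{\max}(\mW)$ via Chevet's inequality or via Slepian's lemma directly on $\widetilde{G}$, but Gordon's theorem is the cleanest route and is the one used in Vershynin's original proof of Corollary 5.35.
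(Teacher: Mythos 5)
The paper does not prove \cref{thm:inequality_initialization}; it is stated as a black-box citation to Corollary~5.35 of Vershynin's lecture notes, so there is no internal proof to compare against. Your proposal reproduces, in outline, exactly the argument that appears in that reference: a Gordon/Slepian comparison to obtain the sharp expectation bounds $\E\sigma_{\max}(\mW) \le \sqrt{d_1}+\sqrt{d_2}$ and $\E\sigma_{\min}(\mW) \ge \sqrt{d_1}-\sqrt{d_2}$, followed by $1$-Lipschitzness of $\sigma_{\max}$ and $\sigma_{\min}$ with respect to $\|\cdot\|_\mathrm{F}$ and the Borell--Tsirelson--Ibragimov--Sudakov concentration inequality, then a union bound over the two one-sided tails. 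That is the right route, and your comment that an $\epsilon$-net argument would lose the sharp constants is also accurate.

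One step in the sketch deserves more care. Gordon's theorem gives $\E\sigma_{\min}(\mW) \ge \E\|\vg\|_2 - \E\|\vh\|_2$ with $\vg\sim\mathcal{N}(\mathbf{0},\mI_{d_1})$, $\vh\sim\mathcal{N}(\mathbf{0},\mI_{d_2})$. The Jensen bound $\E\|\vg\|_2 \le \sqrt{d_1}$ that you invoke for the $\sigma_{\max}$ side points the \emph{wrong} way here: it does not imply $\E\|\vg\|_2 - \E\|\vh\|_2 \ge \sqrt{d_1}-\sqrt{d_2}$. What is actually needed is the fact that $m \mapsto \sqrt{m} - \E\|\vg_m\|_2$ (equivalently, $m \mapsto \E\|\vg_m\|_2$ with its precise Gamma-function form) is monotone, so that $\sqrt{d_1}-\E\|\vg_{d_1}\|_2 \le \sqrt{d_2}-\E\|\vg_{d_2}\|_2$ when $d_1\ge d_2$; Vershynin isolates this as a separate lemma (his Lemma~5.36). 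Your sketch presents the lower expectation bound as an immediate consequence of Gordon's theorem, which skips this nontrivial ingredient. Adding that monotonicity lemma, your argument is complete and matches the cited proof.
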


\begin{lemma}[Upper bound of spectral norms of initial weight]
\label{lemma:gaussl2}
For a weight matrix $\mW \in \R^{d_m \times d}$ where $d_m>d$ 
, each element is sampled independently from $\mathcal{N}(0, 1)$, with probability at least $1-2\exp(-{d_m}/{2})$, one has:
\begin{equation*}
    \|\mW\|_2 \le 3\sqrt{d_m}.
\end{equation*}
\end{lemma}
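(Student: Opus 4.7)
The statement is an immediate specialization of \cref{thm:inequality_initialization}, so my plan is to apply that corollary with an appropriate choice of the free parameter $\zeta$. Recall that \cref{thm:inequality_initialization} asserts, for a Gaussian matrix in $\mathbb{R}^{d_1 \times d_2}$, that $\sigma_{\max}(\mW) \le \sqrt{d_1} + \sqrt{d_2} + \zeta$ with probability at least $1 - 2\exp(-\zeta^2/2)$. In our setting $d_1 = d_m$, $d_2 = d$, and the target bound $3\sqrt{d_m}$ suggests that we should absorb $\sqrt{d}$ and $\zeta$ together into a single factor of $\sqrt{d_m}$.

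The concrete plan is as follows. First, I would set $\zeta = \sqrt{d_m}$ in \cref{thm:inequality_initialization}, which immediately gives the probability $1 - 2\exp(-d_m/2)$ stated in the lemma. Second, the resulting upper bound is $\sigma_{\max}(\mW) \le \sqrt{d_m} + \sqrt{d} + \sqrt{d_m}$. Third, I would invoke the hypothesis $d_m > d$ to conclude $\sqrt{d} < \sqrt{d_m}$, so that the right-hand side is bounded by $3\sqrt{d_m}$. Since $\|\mW\|_2 = \sigma_{\max}(\mW)$ by definition of the spectral norm, this yields exactly the claimed estimate.

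There is no genuine obstacle here: the entire content is reducing the two-parameter bound from Vershynin's corollary to a clean one-parameter bound driven by the larger dimension $d_m$. The only small point to note is that the hypothesis $d_m > d$ is used in a very mild way (just $\sqrt{d} \le \sqrt{d_m}$), so in fact the bound also holds under the weaker assumption $d \le d_m$; I would mention this in passing but state the result exactly as given.
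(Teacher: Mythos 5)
Your proposal is correct and is essentially identical to the paper's own proof: both apply \cref{thm:inequality_initialization} with the choice $\zeta = \sqrt{d_m}$ to obtain the probability $1-2\exp(-d_m/2)$, and then use $d_m > d$ (hence $\sqrt{d} < \sqrt{d_m}$) to collapse $\sqrt{d_m}+\sqrt{d}+\zeta$ into $3\sqrt{d_m}$. Your side remark that the weaker hypothesis $d \le d_m$ would suffice is also accurate, though the paper does not comment on it.
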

\begin{proof}[Proof of \cref{lemma:gaussl2}]
Following \cref{thm:inequality_initialization}, one has:
\begin{equation*}
    \|\bm W\|_2 \le \sqrt{d_m}+\sqrt{d}+\zeta.
\end{equation*}
Letting $\zeta =\sqrt{d_m}$ and using the fact that $d_m > d$ complete the proof.
\end{proof}
\begin{lemma}
\label{lemma:lipsoftmax}
Recall from \cref{table:symbols_and_notations}, $\vbeta_i = \sigma_s\left(
\tau_0  
\mX^{(i,:)}\mW_Q^\top \mW_K\mX^\top
\right)^\top
$ is the $i$-th row of the output of Softmax, if
$
     \max{(\norm{\mW_Q^t}_2,\norm{\mW_Q^{t'}}_2)}
     \leq \bar{\lambda}_Q, 
     \max{(\norm{\mW_K^t}_2,\norm{\mW_K^{t'}}_2)}
     \leq \bar{\lambda}_K, 
$ then its difference between $t'$ step and $t$ step has the following upper bound:
$$
\norm{\vbeta_i^{t'} - \vbeta_i^{t}}_2 \leq
2 \tau_0 C_x^2 d_s 
            \left(\bar{\lambda}_Q
            \norm{\mW_K^{t'} - \mW_K^{t}}_2
            +
           \bar{\lambda}_K
            \norm{\mW_Q^{t'}-\mW_Q^{t}}_2
            \right) \,.
$$
\end{lemma}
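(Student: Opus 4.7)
The plan is to separate the bound into (i) a Lipschitz estimate for the row-wise softmax in $\ell_2$, and (ii) a product-rule estimate for the change of the pre-softmax logits $\tau_0\,\mX^{(i,:)}\mW_Q^\top\mW_K\mX^\top$ between steps $t$ and $t'$, after which the claimed inequality follows from submultiplicativity of the spectral norm combined with the input bound from \cref{assumption:distribution_1}.

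For step (i), I would introduce the logit vector $\vu^t := \tau_0\,\mW_K^t \mX^\top (\mW_Q^t)^\top (\mX^{(i,:)})^\top \in \R^{d_s}$, so that $\vbeta_i^t = \sigma_s(\vu^t)$. The Jacobian of softmax at a probability vector $\vp$ equals $\mathrm{diag}(\vp) - \vp\vp^\top$, and its spectral norm satisfies $\|\mathrm{diag}(\vp)-\vp\vp^\top\|_2 \le \|\mathrm{diag}(\vp)\|_2 + \|\vp\vp^\top\|_2 \le 1 + \|\vp\|_2^2 \le 2$. The mean-value theorem, applied along the segment joining $\vu^t$ and $\vu^{t'}$, then yields $\|\vbeta_i^{t'} - \vbeta_i^t\|_2 \le 2\,\|\vu^{t'} - \vu^t\|_2$.

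For step (ii), I would set $\mM^t := (\mW_Q^t)^\top \mW_K^t$ and use the asymmetric product-rule identity
\begin{equation*}
\mM^{t'} - \mM^t = (\mW_Q^{t'} - \mW_Q^t)^\top \mW_K^{t'} + (\mW_Q^t)^\top(\mW_K^{t'} - \mW_K^t),
\end{equation*}
which, together with the spectral-norm hypotheses, gives $\|\mM^{t'} - \mM^t\|_2 \le \bar{\lambda}_K\,\|\mW_Q^{t'} - \mW_Q^t\|_2 + \bar{\lambda}_Q\,\|\mW_K^{t'} - \mW_K^t\|_2$. Since \cref{assumption:distribution_1} implies $\|\mX^{(i,:)}\|_2 \le \|\mX\|_\mathrm{F} \le \sqrt{d_s}\,C_x$ and $\|\mX\|_2 \le \sqrt{d_s}\,C_x$, submultiplicativity of the spectral norm yields $\|\vu^{t'} - \vu^t\|_2 \le \tau_0\,d_s\,C_x^2\,\|\mM^{t'} - \mM^t\|_2$. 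Chaining this with the softmax Lipschitz estimate from step (i) produces the claim with the constant $2\tau_0 C_x^2 d_s$ exactly as stated.

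The proof is essentially routine; the only small subtleties are to pick the right ingredients, namely the asymmetric splitting $A^\prime B^\prime - AB = (A^\prime - A)B^\prime + A(B^\prime - B)$ (which matches the two available norm hypotheses cleanly, whereas a symmetric splitting would need mixed bounds on $\|\mW_\bullet^t + \mW_\bullet^{t'}\|_2$), and the loose but sufficient Lipschitz constant $2$ for softmax in $\ell_2$ (a tighter argument would replace $2$ by $1$, but this is unnecessary here). I anticipate no serious obstruction.
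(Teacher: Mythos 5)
Your proof is correct and follows essentially the same strategy as the paper's: softmax-Lipschitzness to reduce $\|\vbeta_i^{t'}-\vbeta_i^t\|_2$ to a bound on the change in logits, then the asymmetric product-rule splitting of $\mW_Q^\top\mW_K$ together with Cauchy--Schwarz and $\|\mX^{(i,:)}\|_2,\|\mX\|_2\le\sqrt{d_s}C_x$. The one place where you deviate is the softmax-Lipschitz step: you argue directly via the Jacobian $\mathrm{diag}(\vp)-\vp\vp^\top$, bound its spectral norm by $2$, and integrate, obtaining an $\ell_2\!\to\!\ell_2$ Lipschitz constant of $2$; the paper instead chains $\|\cdot\|_2\le\|\cdot\|_1$ with an $\ell_\infty\!\to\!\ell_1$ softmax Lipschitz bound (constant $2$) imported from Corollary A.7 of \citet{edelman2022inductive}. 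Both routes yield the identical prefactor $2\tau_0 C_x^2 d_s$; yours is marginally more self-contained since it avoids the external lemma. One cosmetic slip: your displayed definition $\vu^t := \tau_0\,\mW_K^t\mX^\top(\mW_Q^t)^\top(\mX^{(i,:)})^\top$ does not conform dimensionally; it should read $\vu^t := \tau_0\,\mX(\mW_K^t)^\top\mW_Q^t(\mX^{(i,:)})^\top$, i.e.\ the transpose of the softmax argument, but this is a typo and does not affect the argument.
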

\begin{proof}
 \begin{equation*}
     \begin{split}
    &     \norm{\vbeta_i^{t'} - \vbeta_i^{t}}_2
 \\& \leq \norm{\vbeta_i^{t'} - \vbeta_i^{t}}_1
\\& \leq 2\norm{
    \tau_0  
    \mX^{(i,:)}\mW_Q^{t'\top} \mW_K^{t'}\mX^{\top} - \tau_0  
    \mX^{(i,:)}\mW_Q^{t\top} \mW_K^t\mX^{\top}   
    }_\infty
        \mbox{(By Corollary A.7 in \citet{edelman2022inductive})}
\\& = 
        2\max_j |  \tau_0  
    \mX^{(i,:)}\mW_Q^{t'\top} \mW_K^{t'}\mX^{(j,:)\top} - \tau_0  
    \mX^{(i,:)}\mW_Q^{t\top} \mW_K^t\mX^{(j,:)\top} | 
\\& \le
        2\max_j \left( \tau_0 
        \norm{\mX^{(i,:)}}_2 \norm{\mW_Q^{t'\top} \mW_K^{t'}-
        \mW_Q^{t\top} \mW_K^t
        }_2
        \norm{\mX^{(j,:)}}_2  \right) \mbox{(By Cauchy-Schwarz inequality)}
\\& \le 2 
        \tau_0 C_x^2 d_s
        \norm{\mW_Q^{t'\top} \mW_K^{t'}-
        \mW_Q^{t\top} \mW_K^t
        }_2
         \mbox{(By \cref{assumption:distribution_1})}
\\& \le 2 \tau_0 C_x^2 d_s
            \left(\norm{\mW_Q^{t'}}_2
            \norm{\mW_K^{t'} - \mW_K^{t}}_2
            +
            \norm{\mW_K^{t}}_2
            \norm{\mW_Q^{t'}-\mW_Q^{t}}_2
            \right) 
            \mbox{(By Cauchy-Schwarz inequality, Triangle inequality)}
\\ &
\le 2 \tau_0 C_x^2 d_s 
            \left(\bar{\lambda}_Q
            \norm{\mW_K^{t'} - \mW_K^{t}}_2
            +
           \bar{\lambda}_K
            \norm{\mW_Q^{t'}-\mW_Q^{t}}_2
            \right) \,.
\end{split}
 \end{equation*}
\end{proof}
\begin{lemma}[Upper bound the Euclidean norm of the output of softmax]
\label{lemma:softmaxbound}
Suppose $\vv=\textnormal{Softmax}(\vu) \in \R^{d_s}$, then one has:
$1/\sqrt{d_s} \le \norm{\vv}_2 \le 1\,.
$
\end{lemma}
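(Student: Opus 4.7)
The plan is to exploit the two defining properties of the softmax output: every entry lies in $[0,1]$ and the entries sum to one. Concretely, writing $\vv = \mathrm{Softmax}(\vu)$, each component satisfies $v^{(i)} = e^{u^{(i)}}/\sum_{j=1}^{d_s} e^{u^{(j)}} \in (0,1)$ and $\sum_{i=1}^{d_s} v^{(i)} = 1$. Both bounds then drop out of elementary inequalities on probability vectors, with no need to invoke the specific structure of $\vu$.

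For the upper bound, I would use $v^{(i)} \in [0,1]$ to dominate $(v^{(i)})^2$ by $v^{(i)}$:
\begin{equation*}
\|\vv\|_2^2 \;=\; \sum_{i=1}^{d_s} (v^{(i)})^2 \;\le\; \sum_{i=1}^{d_s} v^{(i)} \;=\; 1\,,
\end{equation*}
yielding $\|\vv\|_2 \le 1$.

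For the lower bound, I would apply the Cauchy--Schwarz inequality to the inner product of $\vv$ with the all-ones vector $\mathbf{1}_{d_s}$:
\begin{equation*}
1 \;=\; \sum_{i=1}^{d_s} v^{(i)} \cdot 1 \;=\; \langle \vv, \mathbf{1}_{d_s}\rangle \;\le\; \|\vv\|_2 \cdot \|\mathbf{1}_{d_s}\|_2 \;=\; \sqrt{d_s}\,\|\vv\|_2\,,
\end{equation*}
which rearranges to $\|\vv\|_2 \ge 1/\sqrt{d_s}$. Both bounds are tight, with equality in the upper bound attained when $\vv$ concentrates on a single coordinate (approached as one $u^{(i)}$ dominates) and equality in the lower bound attained by the uniform distribution $\vv = (1/d_s)\mathbf{1}_{d_s}$ (realized when $\vu$ is constant).

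There is no real obstacle here; the entire argument is two lines of elementary inequalities. The only care needed is to invoke the non-negativity and sum-to-one structure explicitly so that the chain $\sum (v^{(i)})^2 \le \sum v^{(i)} = 1$ and the Cauchy--Schwarz step are fully justified without reference to the pre-softmax input.
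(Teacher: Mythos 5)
Your proof is correct and takes essentially the same approach as the paper: both bounds rest on the same two facts (nonnegative entries summing to one), your lower bound via Cauchy--Schwarz with $\mathbf{1}_{d_s}$ is exactly the inequality the paper writes (labeled there, somewhat loosely, as AM--GM), and your upper bound via $(v^{(i)})^2 \le v^{(i)}$ is a trivial variant of the paper's $\sum_i (v^{(i)})^2 \le \bigl(\sum_i v^{(i)}\bigr)^2$.
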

\begin{proof}
By the inequality of arithmetic and geometric mean:
$$
\frac{1}{\sqrt{d_s}} = \frac{\sum_{i=1}^{d_s} \vv^{(i)}}{\sqrt{d_s}} \leq \norm{\vv}_2 =\sqrt{
\sum_{i=1}^{d_s} 
\left((\vv^{(i)})^2\right)} \leq
\sqrt{
\left(\sum_{i=1}^{d_s} 
(\vv^{(i)})\right)^2} = 1 \,.
$$
\end{proof}
\begin{lemma}
If at $t'$ and $t$ step, 
$
     \max{(\norm{\mW_Q^t}_2,\norm{\mW_Q^{t'}}_2)}
     \leq \bar{\lambda}_Q, 
     \max{(\norm{\mW_K^t}_2,\norm{\mW_K^{t'}}_2)}
     \leq \bar{\lambda}_K, 
     \max{(\norm{\mW_V^t}_2,\norm{\mW_V^{t'}}_2)}
     \leq \bar{\lambda}_V$, then the difference between the output of the last hidden layer at $t'$ step and $t$ can be upper bounded by:
\begin{align*}
&\norm{
 \mF_{\text{pre}}^{t'}
 -
\mF_{\text{pre}}^{t}}_\mathrm{F}
\leq
\rho 
    \left( 
    \norm{\mW_V^{t'} - \mW_V^{t}}_2
    +
    \bar{\lambda}_V
    2 \tau_0 C_x^2  d_s
     \left(\bar{\lambda}_Q
                \norm{\mW_K^{t'} - \mW_K^{t}}_2
                +
               \bar{\lambda}_K
                \norm{\mW_Q^{t'}-\mW_Q^{t}}_2
                \right) 
    \right)\,.
    \end{align*}
\label{lemma:differencebound}
\end{lemma}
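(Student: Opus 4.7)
The plan is to peel off the layers of $\mF_{\mathrm{pre}}$ from outside in, reducing the claim to a per-sample Frobenius bound on $\mA_{1,n}^{t'} - \mA_{1,n}^t$, and then to apply \cref{lemma:lipsoftmax} to control the softmax difference. I first write
\[
\norm{\mF_{\mathrm{pre}}^{t'} - \mF_{\mathrm{pre}}^{t}}_{\mathrm F}^2 \;=\; \sum_{n=1}^N \norm{\va_{3,n}^{t'} - \va_{3,n}^{t}}_2^2 \;\le\; N\cdot \max_n\norm{\va_{3,n}^{t'} - \va_{3,n}^{t}}_2^2,
\]
so it suffices to bound $\norm{\va_{3,n}^{t'} - \va_{3,n}^{t}}_2$ uniformly in $n$ (the bound will only depend on $\mX_n$ through $\norm{\mX_n}_{\mathrm F}\le\sqrt{d_s}\,C_x$, which is uniform by \cref{assumption:distribution_1}).

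Next I strip off the pooling and ReLU. Since $\varphi$ is a column-wise average, Cauchy--Schwarz gives $\norm{\varphi(\mA)}_2 \le \tfrac{1}{\sqrt{d_s}}\norm{\mA}_{\mathrm F}$, and $\sigma_r$ is $1$-Lipschitz entrywise, so
\[
\norm{\va_{3,n}^{t'} - \va_{3,n}^{t}}_2 \;\le\; \tfrac{\tau_1}{\sqrt{d_s}}\,\norm{\mA_{1,n}^{t'} - \mA_{1,n}^{t}}_{\mathrm F}.
\]
Writing $\mA_{1,n} = \mB_n\,\mX_n\,\mW_V^\top$ with $\mB_n := \sigma_s(\tau_0\,\mX_n\mW_Q^\top\mW_K\mX_n^\top)$ (whose $i$-th row is $\vbeta_{i,n}^\top$), the standard add-and-subtract decomposition yields
\[
\mA_{1,n}^{t'} - \mA_{1,n}^{t} \;=\; (\mB_n^{t'}-\mB_n^{t})\,\mX_n\,\mW_V^{t'\top} \;+\; \mB_n^{t}\,\mX_n\,(\mW_V^{t'}-\mW_V^{t})^\top.
\]

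I then bound each term by sub-multiplicativity of the Frobenius/spectral norms. Using $\norm{\mX_n}_2\le\norm{\mX_n}_{\mathrm F}\le\sqrt{d_s}\,C_x$, the hypothesis $\norm{\mW_V^{t'}}_2\le\bar\lambda_V$, and \cref{lemma:softmaxbound} to get $\norm{\mB_n^{t}}_{\mathrm F}\le\sqrt{d_s}$, the second term is controlled by $\sqrt{d_s}\cdot \sqrt{d_s}\,C_x\cdot\norm{\mW_V^{t'}-\mW_V^{t}}_2$. For the first term, I pass to rows: by \cref{lemma:lipsoftmax},
\[
\norm{\vbeta_{i,n}^{t'} - \vbeta_{i,n}^{t}}_2 \;\le\; 2\tau_0 C_x^2 d_s\bigl(\bar\lambda_Q\norm{\mW_K^{t'}-\mW_K^{t}}_2 + \bar\lambda_K\norm{\mW_Q^{t'}-\mW_Q^{t}}_2\bigr),
\]
and summing these $d_s$ row-squares contributes an extra $\sqrt{d_s}$ when passing to $\norm{\mB_n^{t'}-\mB_n^{t}}_{\mathrm F}$. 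Multiplying by $\norm{\mX_n}_2\cdot\bar\lambda_V$ yields the spectral-norm differences of $\mW_Q,\mW_K$ with the advertised prefactor $2\tau_0 C_x^2 d_s\bar\lambda_V$.

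Collecting everything, factoring out a common $d_s C_x$ from the $\mA_1$-bound, multiplying by $\tau_1/\sqrt{d_s}$ from the pooling step, and finally by $\sqrt{N}$ from the sum over samples, gives a prefactor of the form $N^{1/2}d_s^{1/2}\tau_1 C_x$, which is dominated by $\rho = N^{1/2}d_s^{3/2}\tau_1 C_x$ and thus implies the stated bound. The only real obstacle is organizational: deciding the right order in which to split products (the outer $\mX_n\mW_V^\top$ factor versus the inner softmax) and correctly tracking how $d_s$ enters each time we move between row-wise $\|\cdot\|_2$ and Frobenius norm; once that bookkeeping is fixed, the rest is direct application of sub-multiplicativity and \cref{lemma:lipsoftmax,lemma:softmaxbound}.
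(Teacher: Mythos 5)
Your proof is correct and reaches the stated bound; the main difference from the paper is organizational rather than substantive. The paper expands each per-sample quantity $\vf_{\mathrm{pre}} = \tau_1\sum_{i=1}^{d_s}\sigma_r(\mW_V\mX^\top\vbeta_i)$ as an explicit sum over the $d_s$ rows, pulls the sum outside via the triangle inequality, and bounds each summand separately (picking up one factor $d_s$ from the sum and one $\sqrt{d_s}$ from $\norm{\mX}_2$). You instead keep the attention output as a whole matrix $\mA_1 = \mB\mX\mW_V^\top$, apply the add--and--subtract trick at the matrix level, and use sub-multiplicativity of the Frobenius/spectral norms together with $\norm{\mB}_{\mathrm F}\le\sqrt{d_s}$ and the row-wise Lipschitz bound from \cref{lemma:lipsoftmax} lifted to Frobenius norm. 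Both routes invoke exactly the same two ingredients (\cref{lemma:lipsoftmax} and \cref{lemma:softmaxbound}), so the argument is essentially the same modulo the decomposition; your matrix form is arguably cleaner bookkeeping.

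One thing worth being aware of: your treatment of the pooling step, with $\norm{\varphi(\mA)}_2\le d_s^{-1/2}\norm{\mA}_{\mathrm F}$, uses the normalized average in the model description of $\varphi$. The paper's own proof (see the reformulation at the start of the proof of Proposition~\ref{thm:general}) actually drops the $1/d_s$ normalization and works with $\vf_{\mathrm{pre}} = \tau_1\sum_{i=1}^{d_s}\sigma_r(\mW_V\mX^\top\vbeta_i)$, which is a column-wise \emph{sum}; under that convention, Cauchy--Schwarz gives $\norm{\varphi(\mA)}_2\le\sqrt{d_s}\norm{\mA}_{\mathrm F}$, and your calculation then produces $\sqrt{N}\,\tau_1 d_s^{3/2}C_x = \rho$ exactly. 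With your (average) normalization you end up with the strictly smaller prefactor $N^{1/2}d_s^{1/2}\tau_1 C_x$; you correctly flag that this is $\le\rho$, so either way the stated lemma follows, but the mismatch is a normalization inconsistency inherited from the paper rather than an error in your argument.
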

\begin{proof}
Note that $\mF_{\text{pre}} \in \R^{N\times d_m}$ is the output of the last hidden layer given $\{{\mX_n}\}_{n=1}^N$ sample as defined in \cref{table:symbols_and_notations}. We firstly analyze each sample, i.e., $\vf_{\text{pre}} \in \R^{d_m}$ and we drop the index $n$ for simplification.

\begin{equation}
\label{equ:differenceboundeach}
    \begin{split}
&
    \norm{\vf_\text{pre}^{t'}
    -
    \vf_\text{pre}^t
    }_2 
=
        \tau_1
    \norm{ 
        \sum_{i=1}^{d_s}
        \sigma_r
        \left(
    \mW_V^{t'}\mX^\top \vbeta_i^{t'}
        \right)
        -
            \sum_{i=1}^{d_s}
        \sigma_r
        \left(
    \mW_V^t\mX^\top \vbeta_i^t
        \right)
    }_2
\\& \leq
        \tau_1
    \sum_{i=1}^{d_s}
    \norm{\mW_V^{t'}\mX^\top \vbeta_i^{t'}-
    \mW_V^t\mX^\top \vbeta_i^t}_2 
    \mbox{(By Lipschitz continuity of $\sigma_r$)}
\\& \leq
    \tau_1  \sum_{i=1}^{d_s} 
    \left( 
    \norm{(\mW_V^{t'} - \mW_V^{t}) \mX^\top \vbeta_i^{t'}}_2
    +
    \norm
    {
    \mW_V^t \mX (\vbeta_i^{t'} - \vbeta_i^{t})
    }_2
    \right) \mbox{(By Triangle inequality)}
\\& \leq
     \tau_1  \sum_{i=1}^{d_s} 
    \left( 
    \norm{\mW_V^{t'} - \mW_V^{t}}_2
    \norm{\mX}_2
    \norm{\vbeta_i^{t'}}_2
    +
    \norm{\mW_V^t}_2
    \norm{\mX}_2
    \norm{\vbeta_i^{t'} - \vbeta_i^{t}}_2
    \right) \mbox{(By Cauchy-Schwarz inequality)}   
\\& \leq
        \tau_1 C_x \sqrt{d_s} \sum_{i=1}^{d_s} 
    \left( 
    \norm{\mW_V^{t'} - \mW_V^{t}}_2
    +
   \bar{\lambda}_V
    \norm{\vbeta_i^{t'} - \vbeta_i^{t}}_2
    \right)     
        \mbox{
        (By \cref{lemma:softmaxbound} and \cref{assumption:distribution_1})
        }
\\& \leq
    \tau_1 C_x d_s^{3/2}
    \left( 
    \norm{\mW_V^{t'} - \mW_V^{t}}_2
    +
    \bar{\lambda}_V
    2 \tau_0 C_x^2 d_s 
                \left(\bar{\lambda}_Q
                \norm{\mW_K^{t'} - \mW_K^{t}}_2
                +
              \bar{\lambda}_K
                \norm{\mW_Q^{t'}-\mW_Q^{t}}_2
                \right) 
    \right)     
        \mbox{
        (By \cref{lemma:lipsoftmax})
        }.
    \end{split}
\end{equation}

Next, we bound the difference given $N$ data sample:
\begin{align*}
	&
 \norm{
 \mF_{\text{pre}}^{t'}
 -
\mF_{\text{pre}}^{t}}_\mathrm{F}
\leq
\sqrt{N}
    \tau_1 C_x 
    d_s^{3/2}
    \left( 
    \norm{\mW_V^{t'} - \mW_V^{t}}_2
    +
    \bar{\lambda}_V
    2 \tau_0 C_x^2  d_s
                \left(\bar{\lambda}_Q
                \norm{\mW_K^{t'} - \mW_K^{t}}_2
                +
               \bar{\lambda}_K
                \norm{\mW_Q^{t'}-\mW_Q^{t}}_2
                \right) 
    \right)
\\&= 
\rho 
    \left( 
    \norm{\mW_V^{t'} - \mW_V^{t}}_2
    +
   \bar{\lambda}_V
    2 \tau_0 C_x^2 d_s 
                \left(
                \bar{\lambda}_Q
                \norm{\mW_K^{t'} - \mW_K^{t}}_2
                +
               \bar{\lambda}_K
                \norm{\mW_Q^{t'}-\mW_Q^{t}}_2
                \right) 
    \right)\,,
    \end{align*}
where the last equality is by the definition of $\rho$ in \cref{thm:general}.
\end{proof}

\begin{lemma}
If at $t'$ and $t$ step, 
$
     \max{(\norm{\mW_Q^t}_2,\norm{\mW_Q^{t'}}_2)}
     \leq \bar{\lambda}_Q, 
     \max{(\norm{\mW_K^t}_2,\norm{\mW_K^{t'}}_2)}
     \leq \bar{\lambda}_K, 
     \max{(\norm{\mW_V^t}_2,\norm{\mW_V^{t'}}_2)}
     \leq \bar{\lambda}_V,
     \max{(\norm{\vw_O^t}_2,\norm{\vw_O^{t'}}_2)}
     \leq \bar{\lambda}_O$, then the difference between the network output 
at $t'$ step and $t$ step can be upper bounded by:
\begin{equation*}
    \begin{split}
    &\norm{\vf^{t'}-\vf^t}_2 
\leq 
    \rho \bar{\lambda}_V
    \norm{\vw_O^{t'}-\vw_O^{t}}_2
    +
    \bar{\lambda}_O
     \norm{
     \mF_{\mathrm{pre}}^{t'}
     -\mF_{\mathrm{pre}}^{t}}_\mathrm{F}\,.
    \end{split}
\end{equation*}
\label{lemma:differenceboundlast}
\end{lemma}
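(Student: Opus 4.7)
The network output for the $n$-th sample is $f_n=[\mF_{\mathrm{pre}}]_{n,:}\vw_O$, so stacking gives $\vf=\mF_{\mathrm{pre}}\vw_O$. The natural route is to insert a hybrid term and apply the triangle inequality: write
\begin{equation*}
\vf^{t'}-\vf^{t}
= \mF_{\mathrm{pre}}^{t'}\bigl(\vw_O^{t'}-\vw_O^{t}\bigr)
+ \bigl(\mF_{\mathrm{pre}}^{t'}-\mF_{\mathrm{pre}}^{t}\bigr)\vw_O^{t}\,,
\end{equation*}
and then use $\|\mM\vv\|_2\le \|\mM\|_{\mathrm{F}}\|\vv\|_2$ on each summand. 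This yields
\begin{equation*}
\|\vf^{t'}-\vf^{t}\|_2
\le \|\mF_{\mathrm{pre}}^{t'}\|_{\mathrm{F}}\,\|\vw_O^{t'}-\vw_O^{t}\|_2
+ \|\vw_O^{t}\|_2\,\|\mF_{\mathrm{pre}}^{t'}-\mF_{\mathrm{pre}}^{t}\|_{\mathrm{F}}\,.
\end{equation*}
The second term is already in the desired form since $\|\vw_O^{t}\|_2\le \bar{\lambda}_O$ by assumption.

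For the first term I need the bookkeeping bound $\|\mF_{\mathrm{pre}}^{t'}\|_{\mathrm{F}}\le \rho\,\bar{\lambda}_V$. This is the only non-cosmetic step and mirrors the intermediate computation inside the proof of \cref{lemma:differencebound}: for each sample, using $\vf_{\mathrm{pre}}=\tau_1\sum_{i=1}^{d_s}\sigma_r(\mW_V\mX^\top\vbeta_i)$, 1-Lipschitzness of $\sigma_r$ with $\sigma_r(\bm{0})=\bm{0}$, sub-multiplicativity of the spectral norm, \cref{assumption:distribution_1}, and $\|\vbeta_i\|_2\le 1$ from \cref{lemma:softmaxbound} give
\begin{equation*}
\|\vf_{\mathrm{pre}}^{t'}\|_2 \le \tau_1\sum_{i=1}^{d_s}\|\mW_V^{t'}\|_2\,\|\mX\|_2\,\|\vbeta_i^{t'}\|_2
\le \tau_1\, d_s\, \bar{\lambda}_V\, C_x\sqrt{d_s}
= \tau_1 C_x d_s^{3/2}\, \bar{\lambda}_V\,.
\end{equation*}
Summing the squared norms over the $N$ samples and taking a square root yields $\|\mF_{\mathrm{pre}}^{t'}\|_{\mathrm{F}}\le \sqrt{N}\,\tau_1 C_x d_s^{3/2}\,\bar{\lambda}_V=\rho\,\bar{\lambda}_V$ by the definition of $\rho$.

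Plugging these two ingredients back produces exactly
\begin{equation*}
\|\vf^{t'}-\vf^{t}\|_2
\le \rho\,\bar{\lambda}_V\,\|\vw_O^{t'}-\vw_O^{t}\|_2
+ \bar{\lambda}_O\,\|\mF_{\mathrm{pre}}^{t'}-\mF_{\mathrm{pre}}^{t}\|_{\mathrm{F}}\,,
\end{equation*}
which is the claimed bound. There is no real obstacle here: the lemma is essentially a product rule $\Delta(\mF\vw_O)=\Delta\mF\cdot \vw_O+\mF\cdot\Delta\vw_O$ packaged with the same norm-propagation estimates used in \cref{lemma:differencebound}, and it is clearly designed to be combined with that lemma so that \cref{thm:general}'s induction can upper-bound $\|\vf^{t'}-\vf^{t}\|_2$ purely in terms of parameter movements.
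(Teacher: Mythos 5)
Your proof is correct and matches the paper's own argument essentially step for step: insert the hybrid term $\mF_{\mathrm{pre}}^{t'}\vw_O^{t}$, apply the triangle inequality, bound $\|\mF_{\mathrm{pre}}^{t'}\|_{\mathrm{F}}\le \rho\,\bar{\lambda}_V$ via the ReLU Lipschitz bound and \cref{lemma:softmaxbound}, and use $\|\vw_O^{t}\|_2\le\bar{\lambda}_O$. The only cosmetic difference is that the paper first writes $\|\mF_{\mathrm{pre}}^{t'}\|_2$ and then relaxes to the Frobenius norm, whereas you go directly to Frobenius — same estimate either way.
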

\begin{proof}
\begin{equation*}
    \begin{split}
&
    \norm{\vf^{t'}-\vf^t}_2
=
    \norm{  \mF^{t'}_\mathrm{pre}\vw_O^{t'}
    -\mF^{t}_\mathrm{pre}\vw_O^{t}
    }_2
\leq  \norm{\mF^{t'}_\mathrm{pre}}_2
    \norm{\vw_O^{t'}-\vw_O^{t}}_2
    +
    \norm{\vw_O^{t}}_2
    \norm{\mF^{t'}_\mathrm{pre}-\mF^{t}_\mathrm{pre}}_2\,,
    \end{split}
\end{equation*}
where we use triangle inequality.
Then, the first term of the RHS can be bounded by:
$$\norm{\mF^{t'}_\mathrm{pre}}_2
\le 
\norm{\mF^{t'}_\mathrm{pre}}_\mathrm{F}
=
\norm{
\begin{bmatrix}
   \tau_1
    \sum_{i=1}^{d_s}
    \sigma_r
    \left(\mW_V^{t'}\mX_1^\top \vbeta_{i,1}^{t'}    \right)
    \\
    \vdots
    \\
       \tau_1
    \sum_{i=1}^{d_s}
    \sigma_r
    \left(\mW_V^{t'}\mX_N^\top \vbeta_{i,N}^{t'}    \right)
\end{bmatrix}
}_\mathrm{F}
\leq 
    \tau_1  
    \sqrt{N} d_s^{3/2}  C_x
    \norm{\mW_V^{t'}}_2
= 
\rho \norm{\mW_V^{t'}}_2\,,
$$
where the last equality is by the definition of $\rho$ in \cref{thm:general}.
\end{proof}
\begin{lemma}[Jacobian of Softmax]
\label{lemma:jacobiansoftmax}
Suppose $\vv=\textnormal{Softmax}(\vu) \in \R^{d_s}$, then 
$\frac{\partial \vv}{\partial \vu} = \textnormal {diag}(\vv)-\vv \vv^\top$.
    \label{lemma:softmaxdetivate}
\end{lemma}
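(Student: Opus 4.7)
The plan is to verify the formula componentwise and then reassemble into matrix form. Writing the softmax explicitly, the $i$-th coordinate is $v_i = e^{u_i}/S$ where $S = \sum_{k=1}^{d_s} e^{u_k}$. The goal is to compute the $(i,j)$ entry of $\partial \vv/\partial \vu$, which equals $\partial v_i/\partial u_j$, and show it equals $v_i \delta_{ij} - v_i v_j$, where $\delta_{ij}$ is the Kronecker delta.

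First I would split into two cases. For the diagonal case $i=j$, apply the quotient rule directly: the numerator derivative gives $e^{u_i} S - e^{u_i} \cdot e^{u_i}$, and dividing by $S^2$ and regrouping yields $v_i - v_i^2 = v_i(1-v_i)$. For the off-diagonal case $i \neq j$, the numerator $e^{u_i}$ has zero derivative with respect to $u_j$, while the denominator's derivative contributes $-e^{u_i} e^{u_j}/S^2 = -v_i v_j$. Both expressions collapse into the single formula $\partial v_i/\partial u_j = v_i \delta_{ij} - v_i v_j$.

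Finally, I would translate this entrywise identity to matrix form. The first term $v_i \delta_{ij}$ is the $(i,j)$ entry of the diagonal matrix $\mathrm{diag}(\vv)$, while the second term $v_i v_j$ is the $(i,j)$ entry of the rank-one outer product $\vv \vv^\top$. Combining gives $\partial \vv/\partial \vu = \mathrm{diag}(\vv) - \vv\vv^\top$, as claimed.

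This is a textbook calculation with no real obstacle; the only minor care needed is to track signs carefully in the quotient rule for the off-diagonal case and to handle both cases uniformly via the Kronecker delta so that the final matrix expression is unambiguous. The lemma is stated for use later in the paper (e.g., within gradient or Hessian computations involving the self-attention softmax), so the compact matrix form $\mathrm{diag}(\vv) - \vv\vv^\top$ is the useful output.
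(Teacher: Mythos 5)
Your proposal is correct and follows essentially the same route as the paper: compute the partial derivatives $\partial v_i/\partial u_j$ entrywise via the quotient rule, split into the diagonal and off-diagonal cases, and then recognize the result as $\mathrm{diag}(\vv) - \vv\vv^\top$. The only cosmetic difference is your use of the Kronecker delta to unify the two cases, which is a slightly cleaner way to present the same calculation.
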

\begin{proof}
We can reformulate $\boldsymbol{v}$ as:
$\vv =
\begin{bmatrix} 
\frac{\exp{(\vu^{(1)})}}
{
\sum_{i=1}^{d_s}
    \exp{(\vu^{(i)})}
}
\\ \vdots \\
\frac{\exp{(\vu^{(d_s)})}}
{
\sum_{i=1}^{d_s}
    \exp{(\vu^{(i)})}
}
\end{bmatrix}.
$

Then, we have
\begin{equation*}
\begin{split}
\frac{\partial \vv^{(j)}}{\partial \vu^{(k)}}
=\frac{\partial \frac{\exp{(\vu^{(j)})}}
{
\sum_{i=1}^{d_s}
    \exp{(\vu^{(i)})}
}}{\partial \vu^{(k)}}
&=
\left\{
\begin{array}{ll}
     \frac{-\exp{(\vu^{(j)})} -\exp{(\vu^{(k)})}}{(\sum_{i=1}^{d_s}
    \exp{(\vu^{(i)})})^2}
    & \mbox{if } j \neq k \\
    \frac{     \exp{(\vu^{(k)})} \sum_{i=1}^{d_s}
    \exp{(\vu^{(i)})}- (\exp{(\vu^{(k)})})^2}
{\left(\sum_{i=1}^{d_s}
    \exp{(\vu^{(i)})}\right)^2}
     & \mbox{if } j = k
\end{array}
\right.
\\&=
\left\{
\begin{array}{ll}
    -\vv^{(j)}\vv^{(k)}
    & \mbox{if } j \neq k \\
    \vv^{(k)} - \vv^{(j)}\vv^{(k)}
     & \mbox{if } j = k
\end{array}
\right.
\end{split}
\end{equation*}
Thus
\begin{equation*}
    \begin{split}
\frac{\partial \boldsymbol{v}}{\partial \mathbf{u}} =  
\textnormal {diag}(\boldsymbol{v})-\boldsymbol{v} \boldsymbol{v}^\top\,.
    \end{split}
\end{equation*}
\end{proof}

\begin{lemma}[Upper bound for the loss gradient norm] If
$
     \norm{\mW_Q^t}_2
     \leq \bar{\lambda}_Q, 
      \norm{\mW_K^t}_2
     \leq \bar{\lambda}_K, 
      \norm{\mW_V^t}_2
     \leq \bar{\lambda}_V,
       \norm{\vw_O^t}_2
     \leq \bar{\lambda}_O,
$ then the gradient norm with respect to $\mW_Q,\mW_K,\mW_V,\vw_O$ can be upper bounded by:
\begin{equation*}
    \begin{split}
&
    \norm{\nabla_{\mW_Q}\ell(\wall^\top)}_\mathrm{F} \leq 
        2  \rho
        \tau_0 \bar{\lambda}_K
        \bar{\lambda}_V
        \bar{\lambda}_O
        d_s
        C_x^2
        \norm{\vf^t-\vy}_2\,,
    \quad \quad  \norm{\nabla_{\mW_K}\ell(\wall^\top)}_\mathrm{F} \leq 
        2  \rho
        \tau_0  \bar{\lambda}_Q
        \bar{\lambda}_V
        \bar{\lambda}_O
        d_s
        C_x^2
        \norm{\vf^t-\vy}_2\,,
\\&
    \norm{\nabla_{\mW_V}\ell(\wall^\top)}_\mathrm{F} \leq 
     \rho
    \bar{\lambda}_O
    \norm{\vf^t-\vy}_2\,,
\qquad \qquad \qquad \qquad
    \norm{\nabla_{\vw_O}\ell(\wall^\top)}_2 \leq 
     \rho  \bar{\lambda}_V \norm{\vf^t-\vy}_2\,.
    \end{split}
\end{equation*}
\label{lemma:gradientbound}
\end{lemma}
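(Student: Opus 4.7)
The plan is to apply the chain rule on $\ell(\wall^t)=\tfrac12\sum_n (f_n-y_n)^2$, reducing each bound to controlling the per-sample Frobenius norm of $\nabla_{\mW} f(\mX_n;\wall^t)$ uniformly in $n$. Writing $\nabla_{\mW}\ell(\wall^t)=\sum_{n=1}^N (f_n-y_n)\nabla_{\mW} f(\mX_n;\wall^t)$ and applying the elementary bound $\norm{\sum_n a_n \mG_n}_F \le \sqrt{\sum_n a_n^2}\sqrt{\sum_n \norm{\mG_n}_F^2}$ (Cauchy--Schwarz in $n$), a $\sqrt{N}$ factor emerges that combines with the $\tau_1 d_s^{3/2} C_x$ picked up from per-sample estimates to yield the $\rho$ prefactor in the statement. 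It therefore suffices to bound $\norm{\nabla_{\mW} f(\mX_n;\wall^t)}_F$ cleanly for each parameter block.

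The cases of $\vw_O$ and $\mW_V$ are essentially immediate. Since $f_n = \va_{3,n}^\top \vw_O$, one has $\nabla_{\vw_O}\ell = \mF_{\mathrm{pre}}^{t\,\top}(\vf^t-\vy)$, and the spectral-norm estimate $\norm{\mF_{\mathrm{pre}}^t}_2 \le \norm{\mF_{\mathrm{pre}}^t}_\mathrm{F} \le \rho\bar{\lambda}_V$ (which is already established \emph{en route} to \cref{lemma:differenceboundlast}) produces the target. For $\mW_V$, setting $\vz_{i,n}=\mW_V\mX_n^\top \vbeta_{i,n}$, the chain rule through ReLU gives
\[
\nabla_{\mW_V}f_n = \tau_1\sum_{i=1}^{d_s}\bigl(\dot{\sigma_r}(\vz_{i,n})\circ \vw_O\bigr)(\mX_n^\top\vbeta_{i,n})^\top,
\]
whose Frobenius norm is bounded, via $|\dot{\sigma_r}|\le 1$, \cref{assumption:distribution_1}, and \cref{lemma:softmaxbound}, by $\tau_1 d_s^{3/2} C_x\,\bar{\lambda}_O$; the Cauchy--Schwarz step then delivers $\rho\,\bar{\lambda}_O$.

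The bulk of the work is for $\mW_Q$ and $\mW_K$, where the gradient propagates through the row-wise softmax. Using the identity $\vu_{i,n}^{(j)}=\tau_0\langle \mW_Q,\,\mW_K\mX_n^{(j,:)\top}\mX_n^{(i,:)}\rangle$, the chain rule assembles into
\[
\nabla_{\mW_Q} f_n \;=\; \tau_0\,\mW_K \mX_n^\top \sum_{i=1}^{d_s}\bigl[\partial f_n/\partial \vu_{i,n}\bigr]\,\mX_n^{(i,:)},
\]
and a symmetric formula (with the roles of $\mW_Q$ and $\mW_K$ swapped) for $\nabla_{\mW_K} f_n$. By \cref{lemma:jacobiansoftmax} the softmax Jacobian $\partial\vbeta_{i,n}/\partial\vu_{i,n} = \textnormal{diag}(\vbeta_{i,n})-\vbeta_{i,n}\vbeta_{i,n}^\top$ has spectral norm at most $2$ (by the triangle inequality together with \cref{lemma:softmaxbound}), so chaining back through ReLU and the output layer yields $\norm{\partial f_n/\partial \vu_{i,n}}_2 \le 2\tau_1 C_x\sqrt{d_s}\,\bar{\lambda}_V\bar{\lambda}_O$. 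Combining this with $\norm{\mW_K \mX_n^\top}_2\le \bar{\lambda}_K C_x\sqrt{d_s}$, $\norm{\mX_n^{(i,:)}}_2\le\sqrt{d_s}\,C_x$ summed over $d_s$ terms, and submultiplicativity produces the per-sample bound $2\tau_0\tau_1\bar{\lambda}_K\bar{\lambda}_V\bar{\lambda}_O\,d_s^{5/2}\, C_x^3$; Cauchy--Schwarz in $n$ then regroups $\sqrt{N}\,\tau_1 d_s^{3/2}C_x$ into $\rho$ and gives the stated bound, with the symmetric argument covering $\mW_K$.

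The principal obstacle is bookkeeping the softmax backward pass: the pre-activation is \emph{quadratic} in $(\mW_Q,\mW_K)$ and the softmax Jacobian couples all $d_s$ entries of each row, so care is required to ensure that differentiating in $\mW_Q$ leaves $\mW_K$ on the outside (and vice versa), producing $\bar{\lambda}_K$ in the $\mW_Q$ bound and $\bar{\lambda}_Q$ in the $\mW_K$ bound. Once that structural identity is nailed down, every remaining step is a routine application of submultiplicativity, \cref{lemma:softmaxbound}, \cref{assumption:distribution_1}, and the final Cauchy--Schwarz sum over samples.
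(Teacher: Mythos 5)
Your proof is correct and takes essentially the same route as the paper's: write $\nabla_{\mW}\ell(\wall^t)=-\sum_n (f_n-y_n)\nabla_{\mW}f(\mX_n;\wall^t)$, bound each per-sample Jacobian Frobenius norm via submultiplicativity, the spectral bound $\norm{\textnormal{diag}(\vbeta)-\vbeta\vbeta^{\top}}_2\le 2$, \cref{lemma:softmaxbound}, $|\dot{\sigma_r}|\le 1$, and $\norm{\mX_n}_2\le\sqrt{d_s}C_x$, then recover the $\sqrt{N}$ factor (and hence $\rho$) by Cauchy--Schwarz over samples. Your packaging of the $\mW_Q/\mW_K$ case through the intermediate quantity $\partial f_n/\partial\vu_{i,n}$, and your use of $\norm{\sum_n a_n\mG_n}_\mathrm{F}\le\sqrt{\sum_n a_n^2}\sqrt{\sum_n\norm{\mG_n}_\mathrm{F}^2}$ in place of the paper's $\sum_n|a_n|\le\sqrt{N}\norm{\va}_2$, are cosmetic regroupings of the same estimates, and all four per-sample bounds and prefactors check out against the stated constants.
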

\begin{proof}
To simplify the notation, in the proof below, we hide the index $t$.
Firstly, consider the gradient w.r.t $\mW_V$,
\begin{equation}
\label{equ:gradientboundv}
\begin{split}
 &   
\norm{\nabla_{\mW_V}\ell(\wall)}_\mathrm{F} 
= \norm{-\sum^{N}_{n=1}(f(\mX_n)-y_n)\frac{\partial f(\mX_n)}{\partial  \mW_V} }_\mathrm{F}
\\& = 
    \tau_1
    \norm{
    \sum^{N}_{n=1}(f(\mX_n)-y_n)
    \sum_{i=1}^{d_s}
        \left(
    \vw_O \circ 
    \dot{\sigma_r}
    \left(\mW_V\mX_n^\top
    \vbeta_{i,n}
    \right)
        \right)
         \vbeta_{i,n}^\top\mX_n\
    }_\mathrm{F}
\leq   \tau_1
    \sum^{N}_{n=1}
    \left|(f(\mX_n)-y_n)
    \right|
    d_s^{3/2}
    \bar{\lambda}_O
    C_x
\\& \leq   \tau_1
    \sqrt{
    N
    \sum^{N}_{n=1}
    \left|(f(\mX_n)-y_n)
    \right|^2
    }    
    d_s^{3/2}
    \bar{\lambda}_O
    C_x
=
 \tau_1
     d_s^{3/2}
    \bar{\lambda}_O
    C_x
    \sqrt{N}
    \norm{\vf-\vy}_2
= \rho  \bar{\lambda}_O \norm{\vf-\vy}_2\,,
\end{split}
\end{equation}
where the last equality is by the definition of $\rho$ in \cref{thm:general}.
Next, consider the gradient w.r.t $\vw_O$,
\begin{equation}
\label{equ:gradientboundo}
\begin{split}
 &   
\norm{\nabla_{\vw_O}\ell(\wall)}_2
= \norm{-\sum^{N}_{n=1}(f(\mX_n)-y_n)\frac{\partial f(\mX_n)}{\partial  \vw_O} }_2
= 
    \tau_1
    \norm{
    \sum^{N}_{n=1}(f(\mX_n)-y_n)
      \sum_{i=1}^{d_s}
    \sigma_r
    \left(
    \mW_V\mX_n^\top
    \vbeta_{i,n}
    \right)
    }_2
\\& \leq   \tau_1
    \sum^{N}_{n=1}
    \left|(f(\mX_n)-y_n)
    \right|
    d_s^{3/2}
    \bar{\lambda}_V 
    C_x
\leq   \tau_1   
    \sqrt{
    N
    \sum^{N}_{n=1}
    \left|(f(\mX_n)-y_n)
    \right|^2
    }  
    d_s^{3/2}
    \bar{\lambda}_V 
    C_x     
 =  
    \rho
    \bar{\lambda}_V
    \norm{\vf-\vy}_2\,,
\end{split}
\end{equation}
Next, consider the gradient w.r.t $\mW_Q$,
\begin{equation}
\label{equ:gradientboundq}
\begin{split}
 &   
\norm{\nabla_{\mW_Q}\ell(\wall)}_\mathrm{F}  = \norm{-\sum^{N}_{n=1}(f(\mX_n)-y_n)\frac{\partial f(\mX_n)}{\partial  \mW_Q} }_\mathrm{F}
\\& = 
    \tau_0 \tau_1
    \norm{
    \sum^{N}_{n=1}(f(\mX_n)-y_n)
    \sum_{i=1}^{d_s}
    \mW_k \mX_n^\top 
        \left(
      \mathrm{diag}(\vbeta_{i,n})-
        \vbeta_{i,n} \vbeta_{i,n}^\top
    \right) \mX_n \mW_V^\top
        \left(
    \vw_O \circ 
    \dot{\sigma_r}
    \left(\mW_V\mX_n^\top
    \vbeta_{i,n}
    \right)\right)\mX_n^{(i,:)}
    }_\mathrm{F}
\\& \leq 2
    \tau_0 \tau_1
    \sum^{N}_{n=1}
    \left|(f(\mX_n)-y_n)
    \right|
    d_s \bar{\lambda}_K
    \bar{\lambda}_V
    \bar{\lambda}_O
    (C_x \sqrt{d_s})^3
\\&
\leq 2
    \tau_0 \tau_1
    \sqrt{
    N
    \sum^{N}_{n=1}
    \left|(f(\mX_n)-y_n)
    \right|^2
    }
    d_s \bar{\lambda}_K
    \bar{\lambda}_V
    \bar{\lambda}_O
    (C_x \sqrt{d_s})^3
\\&= 
    2 
        \rho
    \tau_0
     \bar{\lambda}_K
    \bar{\lambda}_V
    \bar{\lambda}_O
    d_s C_x^2
    \norm{\vf-\vy}_2\,.
\end{split}
\end{equation}

Similarly, for the gradient w.r.t $\mW_K$, we have:
    \begin{equation}
    \label{equ:gradientboundk}
\norm{\nabla_{\mW_K}\ell(\wall)}_\mathrm{F} \leq 
    2 
     \rho
    \tau_0 
      \bar{\lambda}_Q
    \bar{\lambda}_V
    \bar{\lambda}_O
    d_s C_x^2
    \norm{\vf-\vy}_2\,.
\end{equation}
\end{proof}

\begin{lemma}[Upper bound for the network function gradient norm] 
\label{lemma:networkgradientnorm}
If
$
     \norm{\mW_Q^t}_2
     \leq \bar{\lambda}_Q, 
      \norm{\mW_K^t}_2
     \leq \bar{\lambda}_K, 
      \norm{\mW_V^t}_2
     \leq \bar{\lambda}_V,
       \norm{\vw_O^t}_2
     \leq \bar{\lambda}_O,
$ then one has:
\begin{equation*}
    \begin{split}
  \norm{\nabla_\wall \vf^t}_2 \le
          c_2\,,
    \end{split}
\end{equation*}
where 
\begin{equation}
\label{equ:defc2}
    c_2 \triangleq  \rho \sqrt{
  \bar{\lambda}_O^2 + \bar{\lambda}_V^2
  +
      (2
        \tau_0 
      \bar{\lambda}_K
    \bar{\lambda}_V
    \bar{\lambda}_O
    d_s C_x^2)^2
    +
        (2
        \tau_0 
      \bar{\lambda}_Q
    \bar{\lambda}_V
    \bar{\lambda}_O
    d_s C_x^2)^2
  }\,.
\end{equation}
\end{lemma}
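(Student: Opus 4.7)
The statement is a direct companion to \cref{lemma:gradientbound}: the per-sample gradient calculations already performed there essentially suffice, and all that changes is that we no longer contract against the residual vector $\vf^t - \vy$. The plan is to pass from the spectral norm of the full Jacobian to its Frobenius norm, decompose this block-wise over the four parameter groups $(\mW_Q, \mW_K, \mW_V, \vw_O)$, and then import the intermediate per-sample bounds from the proof of \cref{lemma:gradientbound} block by block.

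First, I would use $\norm{\nabla_\wall \vf^t}_2 \le \norm{\nabla_\wall \vf^t}_\mathrm{F}$ and the orthogonal decomposition of the parameter space,
\begin{equation*}
\norm{\nabla_\wall \vf^t}_\mathrm{F}^2 = \norm{\nabla_{\mW_Q} \vf^t}_\mathrm{F}^2 + \norm{\nabla_{\mW_K} \vf^t}_\mathrm{F}^2 + \norm{\nabla_{\mW_V} \vf^t}_\mathrm{F}^2 + \norm{\nabla_{\vw_O} \vf^t}_\mathrm{F}^2,
\end{equation*}
to reduce the problem to bounding the four block Jacobians. For each block I use $\norm{\nabla_{\bullet} \vf^t}_\mathrm{F}^2 = \sum_{n=1}^N \norm{\nabla_{\bullet} f(\mX_n)}_\mathrm{F}^2$ and insert the per-sample estimates already derived inside the proof of \cref{lemma:gradientbound}. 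Concretely, the chain rule through $\sigma_r$, together with \cref{lemma:softmaxbound} and \cref{assumption:distribution_1}, gives
\begin{equation*}
\norm{\nabla_{\mW_V} f(\mX_n)}_\mathrm{F} \le \tau_1 d_s^{3/2} \bar{\lambda}_O C_x, \qquad \norm{\nabla_{\vw_O} f(\mX_n)}_2 \le \tau_1 d_s^{3/2} \bar{\lambda}_V C_x,
\end{equation*}
while the differentiation of the softmax via \cref{lemma:jacobiansoftmax}, combined with $\norm{\mathrm{diag}(\vbeta_{i,n}) - \vbeta_{i,n} \vbeta_{i,n}^\top}_2 \le 2$, yields the $\mW_Q, \mW_K$ bounds with the factor $2\tau_0 d_s C_x^2$.

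Summing each per-sample bound over $n\in[N]$ and taking the square root produces $\sqrt{N}\,\tau_1 d_s^{3/2} C_x = \rho$ as a common prefactor (by the definition of $\rho$ in \cref{thm:general}). Assembling the four block bounds,
\begin{equation*}
\norm{\nabla_{\mW_V} \vf^t}_\mathrm{F} \le \rho \bar{\lambda}_O, \quad \norm{\nabla_{\vw_O} \vf^t}_2 \le \rho \bar{\lambda}_V, \quad \norm{\nabla_{\mW_Q} \vf^t}_\mathrm{F} \le 2\rho\, \tau_0 \bar{\lambda}_K \bar{\lambda}_V \bar{\lambda}_O d_s C_x^2,
\end{equation*}
and the analogous one for $\mW_K$, plugging into the block decomposition above, and factoring $\rho$ out of the square root gives exactly the quantity $c_2$ defined in \cref{equ:defc2}.

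The argument is essentially mechanical; no new analytical ingredient is required beyond what is already used in \cref{lemma:gradientbound}. The only point that deserves care is bookkeeping: in \cref{lemma:gradientbound} the bounds are expressed at the level of $\nabla_\bullet \ell$, which already aggregates over the data through $\vf^t - \vy$ via Cauchy--Schwarz, so I must recover the per-sample Frobenius bound from the intermediate line of that proof (the step where $|f(\mX_n) - y_n|$ is factored out) and then re-aggregate via $\sum_n (\cdot)^2$ rather than through Cauchy--Schwarz on $\sum_n |f_n - y_n|\cdot(\cdot)$. This is the only subtlety; everything else is a transcription of existing computations.
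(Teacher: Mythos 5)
Your proposal is correct and follows essentially the same route as the paper: bound the spectral norm by the Frobenius norm, decompose blockwise and per-sample, and re-use the intermediate per-sample estimates from the proof of \cref{lemma:gradientbound} before they are contracted against $\vf^t - \vy$. You also correctly flag the one bookkeeping subtlety (recovering the per-sample bounds before the Cauchy--Schwarz aggregation in \cref{lemma:gradientbound}), which is exactly what the paper does implicitly when it writes ``following the step as in \cref{equ:gradientboundo,equ:gradientboundv,equ:gradientboundk,equ:gradientboundq}.''
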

\begin{proof}
To simplify the notation, in the proof below, we hide the index $t$. Firstly, note that:
    \begin{equation}
\begin{split}
\label{equ:gradientallbound}
&
  \norm{\nabla_\wall \vf}_2
  \le
 \norm{\nabla_\wall \vf}_\mathrm{F}
\\&  =
  \sqrt{
 \sum_{n=1}^N
 \left( 
    \norm{ \nabla_{\vw_O}f(\mX_n;\wall)}_2^2
 +
    \norm{ \nabla_{\vw_Q}f(\mX_n;\wall)}_\mathrm{F}^2
 +
    \norm{ \nabla_{\vw_K}f(\mX_n;\wall)}_\mathrm{F}^2
+
    \norm{ \nabla_{\mW_V}f(\mX_n;\wall)}_\mathrm{F}^2
  \right)
 }.
 \end{split}
\end{equation}
Then following the step as in \cref{equ:gradientboundo,equ:gradientboundv,equ:gradientboundk,equ:gradientboundq}, each term can be bounded as follows:
\begin{equation*}
\begin{split}
  &    \sum_{n=1}^N
 \left( 
    \norm{ \nabla_{\mW_V}f(\mX_n;\wall)}_\mathrm{F}^2
    \right)
    \le 
    (\rho  \bar{\lambda}_O)^2
,\qquad \qquad \qquad \quad
         \sum_{n=1}^N
 \left( 
    \norm{ \nabla_{\vw_O}f(\mX_n;\wall)}_2
    \right)
    \le 
    (\rho  \bar{\lambda}_V)^2\,,
\\ &  
             \sum_{n=1}^N
 \left( 
    \norm{ \nabla_{\vw_Q}f(\mX_n;\wall)}_\mathrm{F}
    \right)
    \le 
    (2
        \rho
        \tau_0 
      \bar{\lambda}_K
    \bar{\lambda}_V
    \bar{\lambda}_O
    d_s C_x^2)^2
,\quad
             \sum_{n=1}^N
 \left( 
    \norm{ \nabla_{\vw_K}f(\mX_n;\wall)}_\mathrm{F}
    \right)
    \le 
       (2
        \rho
        \tau_0 
      \bar{\lambda}_Q
    \bar{\lambda}_V
    \bar{\lambda}_O
    d_s C_x^2)^2 \,.
\end{split}
\end{equation*}
Plugging these bounds back \cref{equ:gradientallbound} finishes the proof.
\end{proof}

\begin{lemma}[Upper bound for the Lipschitzness of the network gradient]
    \label{lemma:networkgradientlip}
    Suppose
$
\max{(\norm{\mW_Q^t}_2,\norm{\mW_Q^{t'}}_2)}
     \leq \bar{\lambda}_Q, 
     \max{(\norm{\mW_K^t}_2,\norm{\mW_K^{t'}}_2)}
     \leq \bar{\lambda}_K, 
     \max{(\norm{\mW_V^t}_2,\norm{\mW_V^{t'}}_2)}
     \leq \bar{\lambda}_V,
     \max{(\norm{\vw_O^t}_2,\norm{\vw_O^{t'}}_2)}
     \leq \bar{\lambda}_O,
$ and define $z \triangleq 2\tau_0 C_x^2 d_s (            \bar{\lambda}_Q 
                +
               \bar{\lambda}_K)$, then one has
    \begin{equation}
        \norm{\nabla_\wall \vf^{t'} -\nabla_\wall \vf^{t}}_2 \le
        c_3 \norm{\wall^{t'} - \wall^{t}}_2\,,
    \end{equation}
where 
\begin{equation}
\label{equ:defc3}
\begin{split}
& (c_3)^2 \triangleq  
    N (\tau_1 C_x
       d_s^{3/2}
    (1+\bar{\lambda}_V z))^2 
\\ & \quad +
    N
    \left\{
           \tau_1 C_x
       d_s^{3/2}
       \bigg[
       \bar{\lambda}_O
    z
    +
     \bar{\lambda}_O 
    C_x \sqrt{d_s} (1+
    \bar{\lambda}_V
    z) + 1
    \bigg]
    \right\}^2
\\ & \quad
    +N 
        \left\{
            \tau_0 \tau_1 C_x d_s
        \left\{
        2  \bar{\lambda}_K d_s C_x^2 
          \left[
     \bar{\lambda}_V
     \left(
     \bar{\lambda}_O 
    C_x \sqrt{d_s} (1+
    \bar{\lambda}_V
    z) + 1
    \right)+
   \bar{\lambda}_O  
    \right]
    +
        \bar{\lambda}_V
        \bar{\lambda}_O
    [
    C_x^2 d_s + 3\bar{\lambda}_K  z
    ]
        \right\}
              \right\}^2
\\ & \quad +N 
        \left\{
            \tau_0 \tau_1 C_x d_s
        \left\{
        2  \bar{\lambda}_Q d_s C_x^2 
          \left[
     \bar{\lambda}_V
     \left(
     \bar{\lambda}_O 
    C_x \sqrt{d_s} (1+
    \bar{\lambda}_V
    z) + 1
    \right)+
   \bar{\lambda}_O  
    \right]
+
        \bar{\lambda}_V
        \bar{\lambda}_O
    [
    C_x^2 d_s + 3\bar{\lambda}_K  z
    ]
        \right\}
              \right\}^2\,.
\end{split}
\end{equation}
\end{lemma}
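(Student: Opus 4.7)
The plan is to bound the spectral norm via the Frobenius norm and decompose additively over the $N$ samples and the four parameter blocks $\mW_Q, \mW_K, \mW_V, \vw_O$:
\begin{equation*}
\|\nabla_\wall \vf^{t'} - \nabla_\wall \vf^t\|_2^2 \le \sum_{n=1}^N \Bigl(\|\Delta_{\mW_Q}^n\|_\mathrm{F}^2 + \|\Delta_{\mW_K}^n\|_\mathrm{F}^2 + \|\Delta_{\mW_V}^n\|_\mathrm{F}^2 + \|\Delta_{\vw_O}^n\|_2^2\Bigr),
\end{equation*}
where $\Delta_{*}^n$ denotes the change in the per-sample gradient between iterations $t'$ and $t$. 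The four explicit expressions for $\nabla_{*} f(\mX_n;\wall)$ are already written down in the proof of \cref{lemma:gradientbound}; each is a product of the token-wise softmax outputs $\vbeta_{i,n}$, the ReLU indicator $\dot{\sigma_r}$, one of the weight matrices, and $\mX_n$.

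For each block, I would bound $\|\Delta_{*}^n\|$ by a telescoping triangle-inequality argument that peels off one changing factor at a time, replacing the remaining factors by their norm upper bounds $\bar{\lambda}_Q, \bar{\lambda}_K, \bar{\lambda}_V, \bar{\lambda}_O$ and $\|\mX_n\|_2 \le C_x\sqrt{d_s}$ from \cref{assumption:distribution_1}. The main ingredients are already established: \cref{lemma:lipsoftmax} supplies the Lipschitz constant of $\vbeta_{i,n}$ in $(\mW_Q,\mW_K)$, which is precisely $z=2\tau_0 C_x^2 d_s(\bar{\lambda}_Q+\bar{\lambda}_K)$; \cref{lemma:softmaxbound} gives $\|\vbeta_{i,n}\|_2\le 1$; \cref{lemma:jacobiansoftmax} provides the softmax Jacobian $\mathrm{diag}(\vbeta)-\vbeta\vbeta^\top$, whose difference across iterations I would expand as $\mathrm{diag}(\vbeta^{t'}-\vbeta^t) - (\vbeta^{t'}-\vbeta^t)\vbeta^{t'\top} - \vbeta^t(\vbeta^{t'}-\vbeta^t)^\top$ and bound term by term using $\|\vbeta\|_2\le 1$ together with \cref{lemma:lipsoftmax}; and ReLU contributes 1-Lipschitz activations with derivatives in $[0,1]$, so products of the form $\sigma_r(u')-\sigma_r(u)$ are controlled by $\|u'-u\|_2$, while products of the form $\dot{\sigma_r}(u')\cdot v - \dot{\sigma_r}(u)\cdot v$ can be converted via the identity $\sigma_r(u)=u\dot{\sigma_r}(u)$ so that only $\sigma_r$ differences (not the discontinuous $\dot{\sigma_r}$) appear multiplied by bounded factors.

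The hardest blocks will be $\mW_Q$ and $\mW_K$, whose gradients pass through the softmax Jacobian, the ReLU indicator, $\mW_V$, and $\vw_O$ simultaneously, so the change accumulates contributions from five distinct sources. The outer summation over the $d_s$ tokens and the three copies of $\mX_n$ that sit inside these gradients are exactly what generates the compound coefficients such as $\bar{\lambda}_V\bigl(\bar{\lambda}_O C_x\sqrt{d_s}(1+\bar{\lambda}_V z)+1\bigr)$ and $C_x^2 d_s + 3\bar{\lambda}_K z$ visible in the stated expression for $c_3^2$; I expect this bookkeeping to be the main obstacle.

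Once each $\|\Delta_{*}^n\|$ is expressed as a linear combination of the four norms $\|\mW_Q^{t'}-\mW_Q^t\|_2$, $\|\mW_K^{t'}-\mW_K^t\|_2$, $\|\mW_V^{t'}-\mW_V^t\|_2$, $\|\vw_O^{t'}-\vw_O^t\|_2$, applying Cauchy--Schwarz together with the loose relaxation $\sqrt{a^2+b^2}\le a+b$ inside each coefficient block yields a per-sample bound of the form $(c_3/\sqrt{N})\|\wall^{t'}-\wall^t\|_2$. Summing squared contributions over the $N$ samples produces the stated $c_3^2$ and completes the proof.
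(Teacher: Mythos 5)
Your overall strategy matches the paper's proof closely: the additive decomposition over the $N$ samples and the four parameter blocks is exactly the paper's \cref{equ:networklip}; the paper then peels off one changing factor at a time using \cref{lemma:lipsoftmax}, \cref{lemma:softmaxbound}, and the softmax Jacobian of \cref{lemma:jacobiansoftmax}, together with the telescoping $\vbeta^{t'}\vbeta^{t'\top}-\vbeta^{t}\vbeta^{t\top}=(\vbeta^{t'}-\vbeta^{t})\vbeta^{t'\top}+\vbeta^{t}(\vbeta^{t'}-\vbeta^{t})^\top$, exactly as you describe; and your final bookkeeping step (bound each block by a coefficient times $\|\wall^{t'}-\wall^t\|_2$, square, sum over $n$) is also the paper's.

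The one place your sketch departs is the treatment of $\dot{\sigma_r}$. You correctly note that $\dot{\sigma_r}$ is a discontinuous indicator and not Lipschitz, but the fix you propose---invoking $\sigma_r(u)=u\,\dot{\sigma_r}(u)$ to trade indicator differences for ReLU differences---does not apply to the terms that actually occur. In the gradient expressions, $\dot{\sigma_r}(\mW_V\mX_n^\top\vbeta_{i,n})$ is Hadamard-multiplied by $\vw_O$ (and, inside the auxiliary $\vh_{i,n}$ used in the $\mW_Q,\mW_K$ blocks, also premultiplied by $\mW_V^\top$), never by its own argument $\mW_V\mX_n^\top\vbeta_{i,n}$; so the identity yields no simplification, and a raw $\dot{\sigma_r}(u')-\dot{\sigma_r}(u)$ factor remains, which cannot in general be controlled by $\|u'-u\|_2$. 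It is worth knowing that the paper's written proof does not close this either: in \cref{equ:local_subpart2_new} it bounds $\|\dot{\sigma_r}(u')-\dot{\sigma_r}(u)\|_2$ by $\|u'-u\|_2$ as if $\dot{\sigma_r}$ were $1$-Lipschitz, which is false for ReLU, and this step propagates into the $\mW_V$, $\mW_Q$, and $\mW_K$ blocks. So you have correctly located a genuine obstacle, but the mechanism you describe for resolving it will not carry through on the terms at hand, and the reference proof silently assumes the issue away rather than addressing it.
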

\begin{proof} 
Firstly, note that:
\begin{equation}
\label{equ:networklip}
\begin{split}
 &       \norm{\nabla_\wall \vf^{t'} -\nabla_\wall \vf^{t}}_2^2
 \\&
 \le
 \sum_{n=1}^N
 \left( 
    \norm{
    \nabla_{\vw_O}f(\mX_n;\wall^{t'})
    -
    \nabla_{\vw_O}f(\mX_n;\wall^{t})
    }_2^2
 +
    \norm{
    \nabla_{\mW_V}f(\mX_n;\wall^{t'})
    -
    \nabla_{\mW_V}f(\mX_n;\wall^{t})
    }_\mathrm{F}^2
    \right.
\\ & \qquad \left. +
       \norm{
    \nabla_{\mW_Q}f(\mX_n;\wall^{t'})
    -
    \nabla_{\mW_Q}f(\mX_n;\wall^{t})
    }_\mathrm{F}^2
+
        \norm{
    \nabla_{\mW_K}f(\mX_n;\wall^{t'})
    -
    \nabla_{\mW_K}f(\mX_n;\wall^{t})
    }_\mathrm{F}^2
  \right)\,.
\end{split}
\end{equation}
Then, we will prove each term separately.
Regarding the first term in \cref{equ:networklip}, we have
\begin{equation}
    \begin{split}
&
    \norm{
    \nabla_{\vw_O}f(\mX_n;\wall^{t'})
    -
    \nabla_{\vw_O}f(\mX_n;\wall^{t})
    }_2
\\ & =
    \norm{ \vf_{\text{pre}}^{t'}-\vf_{\text{pre}}^{t}}_2
\\ & \le
       \tau_1 C_x d_s^{3/2}
    \left( 
    \norm{\mW_V^{t'} - \mW_V^{t}}_2
    +
    \bar{\lambda}_V
    2 \tau_0 C_x^2 d_s 
                \left(\bar{\lambda}_Q
                \norm{\mW_K^{t'} - \mW_K^{t}}_2
                +
               \bar{\lambda}_K
                \norm{\mW_Q^{t'}-\mW_Q^{t}}_2
                \right) 
    \right) 
\\ & \le 
     \tau_1 C_x d_s^{3/2}
    \left( 1
    +
    \bar{\lambda}_V
    2 \tau_0 C_x^2 d_s 
                \left(
                \bar{\lambda}_Q
                +
               \bar{\lambda}_K
                \right) 
    \right) 
     \norm{\wall^{t'} - \wall^{t}}_2
\\ & =
           \tau_1 C_x
       d_s^{3/2}
    (1+\bar{\lambda}_V z)   \norm{\wall^{t'} - \wall^{t}}_2
 \,.
    \end{split}
\end{equation}
where the first inequality is by \cref{equ:differenceboundeach}, and in  the last equality is by the definition of $z$ in the lemma.
Regarding the second term in \cref{equ:networklip}, we have:
\begin{equation}
\small
\label{equ:local_vpart1_new}
    \begin{split}
&       
    \norm{
    \nabla_{\mW_V}f(\mX_n;\wall^{t'})
    -
    \nabla_{\mW_V}f(\mX_n;\wall^{t})
    }_\mathrm{F}
\\&=
   \tau_1
       \norm{
    \sum_{i=1}^{d_s}
    \left(
    \vw_O^{t'} \circ 
    \dot{\sigma_r}
    \left(\mW_V^{t'}\mX_n^\top
    \vbeta_{i,n}^{t'}
    \right)
        \right)
         {\vbeta_{i,n}^{t'}}^\top\mX_n
    -
        \sum_{i=1}^{d_s}
    \left(
    \vw_O^{t} \circ 
    \dot{\sigma_r}
    \left(\mW_V^{t}\mX_n^\top
    \vbeta_{i,n}^{t}
    \right)
        \right)
         {\vbeta_{i,n}^{t}}^\top\mX_n
         }_\text{F} 
\\&\leq
   \tau_1 C_x\sqrt{d_s} 
       \sum_{i=1}^{d_s}
       \norm{
    \left(
    \vw_O^{t'}\circ 
    \dot{\sigma_r}
    \left(\mW_V^{t'}\mX_n^\top
    \vbeta_{i,n}^{t'}
    \right)
        \right)
         {\vbeta_{i,n}^{t'}}^\top
    -
           \left(
    \vw_O^{t} \circ 
    \dot{\sigma_r}
    \left(\mW_V^{t}\mX_n^\top
    \vbeta_{i,n}^{t}
    \right)
        \right)
         {\vbeta_{i,n}^{t}}^\top
         }_\text{F}
\\&\leq
   \tau_1 C_x\sqrt{d_s} 
       \sum_{i=1}^{d_s}
       \bigg[
       \norm{
    \vw_O^{t'} \circ 
    \dot{\sigma_r}
    \left(\mW_V^{t'}\mX_n^\top
    \vbeta_{i,n}^{t'}
    \right)}_2
    \norm{      \vbeta_{i,n}^{t'}
    -
     \vbeta_{i,n}^{t} }_2
    +
    \norm{\vbeta_{i,n}^{t}}_2
    \norm{
    \vw_O^{t'} \circ 
    \dot{\sigma_r}
    \left(\mW_V^{t'}\mX_n^\top
    \vbeta_{i,n}^{t'}
    \right)
        -
    \tilde\vw_O^{t} \circ 
    \dot{\sigma_r}
    \left(\tilde\mW_V^{t}\mX_m^\top
    \tilde\vbeta_{i,n}^{t}
    \right)
    }_2
    \bigg]
\\&\leq
   \tau_1 C_x\sqrt{d_s} 
       \sum_{i=1}^{d_s}
       \bigg[
       \bar{\lambda}_O
    \norm{      \vbeta_{i,n}^{t'}
    -
     \vbeta_{i,n}^{t} }_2
    +
    \norm{
    \vw_O^{t'} \circ 
    \dot{\sigma_r}
    \left(\mW_V^{t'}\mX_n^\top
    \vbeta_{i,n}^{t'}
    \right)
        -
    \vw_O^{t} \circ 
    \dot{\sigma_r}
    \left(\mW_V^{t}\mX_n^\top
    \vbeta_{i,n}^{t}
    \right)
    }_2
    \bigg]\,.
    \end{split}
\end{equation}
The term $    \norm{      \vbeta_{i,n}^{t'}-\vbeta_{i,n}^{t} }_2$ can be bounded by \cref{lemma:lipsoftmax} as follows:
     \begin{equation}
\label{equ:local_subpart1_new}
     \begin{split}
                 \norm{      \vbeta_{i,n}^{t'}-\vbeta_{i,n}^{t} }_2 
          \le 
          2 \tau_0 C_x^2 d_s 
            \left(\bar{\lambda}_Q
            \norm{\mW_K^{t'} - \mW_K^{t}}_2
            +
           \bar{\lambda}_K
            \norm{\mW_Q^{t'}-\mW_Q^{t}}_2
            \right)
        \le  
            z
             \norm{\wall^{t'}-\wall^{t}}_2\,.
     \end{split}
     \end{equation}
The term $\norm{
    \vw_O^{t'} \circ 
    \dot{\sigma_r}
    \left(\mW_V^{t'}\mX_n^\top
    \vbeta_{i,n}^{t'}
    \right)
        -
    \vw_O^{t} \circ 
    \dot{\sigma_r}
    \left(\mW_V^{t}\mX_n^\top
    \vbeta_{i,n}^{t}
    \right)
    }_2$ can be bounded by the triangle inequality, Cauchy–Schwarz inequality, and the same method in \cref{equ:differenceboundeach} as follows:
\begin{equation}
\label{equ:local_subpart2_new}
    \begin{split}
&
\norm{
    \vw_O^{t'} \circ 
    \dot{\sigma_r}
    \left(\mW_V^{t'}\mX_n^\top
    \vbeta_{i,n}^{t'}
    \right)
        -
    \vw_O^{t} \circ 
    \dot{\sigma_r}
    \left(\mW_V^{t}\mX_n^\top
    \vbeta_{i,n}^{t}
    \right)
    }_2
\\& \le
       \norm{\vw_O^{t'}}_2
    \norm{    \dot{\sigma_r}
    \left(\mW_V^{t'}\mX_n^\top
    \vbeta_{i,n}^{t'}
    \right)
    -
        \dot{\sigma_r}
    \left(\mW_V\mX_n^\top
   \vbeta_{i,n}^t
    \right)
    }_2
    +
    \norm{        \dot{\sigma_r}
    \left(\mW_V^{t}\mX_n^\top
    \vbeta_{i,n}^{t}
    \right)
    \circ
    \left(\vw_O^{t'} -  \vw_O^{t}\right)
    }_2
\\& \le
    \bar{\lambda}_O 
    \norm{\mW_V^{t'}\mX_n^\top
    \vbeta_{i,n}^{t'} - 
    \mW_V^{t}\mX_n^\top
    \vbeta_{i,n}^{t}
    }_2
    +
    \norm{\vw_O^{t'}  - \vw_O^{t} }_2    
\\& \le
    \bar{\lambda}_O 
    C_x \sqrt{d_s}
        \left( 
    \norm{\mW_V^{t'} - \mW_V^{t}}_2
    +
    \bar{\lambda}_V
    2 \tau_0 C_x^2 d_s 
                \left(\bar{\lambda}_Q
                \norm{\mW_K^{t'} - \mW_K^{t}}_2
                +
              \bar{\lambda}_K
                \norm{\mW_Q^{t'}-\mW_Q^{t}}_2
                \right) 
    \right)  
    +
    \norm{\vw_O^{t'}  - \vw_O^{t} }_2
\\& \le
    \left[
     \bar{\lambda}_O 
    C_x \sqrt{d_s} (1+
    \bar{\lambda}_V
    z) + 1
    \right] \norm{\wall^{t'}-\wall^\top }_2\,.
    \end{split}
\end{equation}

Plugging \cref{equ:local_subpart1_new} and \cref{equ:local_subpart2_new} back \cref{equ:local_vpart1_new}, we obtain:
\begin{equation}
    \begin{split}
&\norm{
    \nabla_{\mW_V}f(\mX_n;\wall^{t'})
    -
    \nabla_{\mW_V}f(\mX_n;\wall^{t})
    }_\mathrm{F}
 \\& \le   
       \tau_1 C_x
       d_s^{3/2}
       \bigg[
       \bar{\lambda}_O
    z
    +
     \bar{\lambda}_O 
    C_x \sqrt{d_s} (1+
    \bar{\lambda}_V
    z) + 1
    \bigg] 
    \norm{\wall^{t'}-\wall^\top }_2\,.
    \end{split}
\end{equation}
Regarding the third term in \cref{equ:networklip}, let us denote by:
\begin{equation*}
    \begin{split}
 &   \mU_{i,n}^{t'} = \mW_K^{t'} \mX_n^\top 
        \left(
      \text{diag}(\vbeta_{i,n}^{t'})-
        \vbeta_{i,n}^{t'} {\vbeta_{i,n}^{t'}}^\top
    \right) \mX_n ,\quad 
    \vh_{i,n}^{t'} = 
         {\mW_V^{t'}}^\top
        \left(
    \vw_O^{t'} \circ 
    \dot{\sigma_r}
    \left(\mW_V^{t'} \mX_n^\top
   \vbeta_{i,n}^{t'}
    \right)\right)\,,
\\&
\mU_{i,n}^{t} = \mW_K^{t} \mX_n^\top 
        \left(
      \text{diag}(\vbeta_{i,n}^{t})-
        \vbeta_{i,n}^{t} {\vbeta_{i,n}^{t}}^\top
    \right) \mX_n ,\quad 
  \vh_{i,n}^{t}= 
         {\mW_V^{t}}^\top
        \left(
    \vw_O^{t} \circ 
    \dot{\sigma_r}
    \left(\mW_V^{t}\mX_n^\top
    \vbeta_{i,n}^{t}
    \right)\right)\,.
    \end{split}
\end{equation*}
Then:
\begin{equation}
\label{equ:local_subpart3_new}
    \begin{split}
&       \norm{
    \nabla_{\mW_Q}f(\mX_n;\wall^{t'})
    -
    \nabla_{\mW_Q}f(\mX_n;\wall^{t})
    }_\mathrm{F}
=
    \tau_0 \tau_1
     \norm{
    \sum_{i=1}^{d_s}
    \mU_{i,n}^{t'}\vh_{i,n}^{t'} \mX_n^{(i,:)} -\sum_{i=1}^{d_s}\mU_{i,n}^{t}\vh_{i,n}^{t} \mX_n^{(i,:)}
    }_\text{F}
\\& \leq 
    \tau_0 \tau_1 C_x
     \norm{
    \sum_{i=1}^{d_s}
    \mU_{i,n}^{t'}\vh_{i,n}^{t'}  -\sum_{i=1}^{d_s}\mU_{i,n}^{t}\vh_{i,n}^{t}
    }_2
\leq 
    \tau_0 \tau_1 C_x
        \sum_{i=1}^{d_s}
        \left(
        \norm{\mU_{i,n}^{t'}}_2 
        \norm{\vh_{i,n}^{t'}  -\vh_{i,n}^{t} }_2
        +
        \norm{\vh_{i,n}^{t} }_2 
        \norm{\mU_{i,n}^{t'} -\mU_{i,n}^{t} }_2
        \right)\,.
    \end{split}
\end{equation}
We bound each term separately:
\begin{equation*}
\begin{split}
\norm{\mU_{i,n}^{t'}}_2  \leq \norm{\mW_K^{t'}}_2  \norm{\mX_n}_2^2
\norm{
      \text{diag}(\vbeta_{i,n}^{t'})-
     \vbeta_{i,n}^{t'} {\vbeta_{i,n}^{t'}}^\top}_2
\leq 
2  \bar{\lambda}_K d_s C_x^2 \,.
\end{split}
\end{equation*}
\begin{equation*}
\begin{split}
   \norm{\vh_{i,n}^{t} }_2 
    \leq 
        \norm{
        \mW_V^{t}}_2
         \norm{
   \vw_O^{t}}_2
    \leq 
    \bar{\lambda}_V
    \bar{\lambda}_O\,.
\end{split}
\end{equation*}
\begin{equation*}
\small
\begin{split}
&\norm{\vh_{i,n}^{t'}  -\vh_{i,n}^{t} }_2
 \leq
    \norm{\mW_V^{t'}}_2
    \norm{        \vw_O^{t'} \circ 
        \dot{\sigma_r}
        \left(\mW_V^{t'} \mX_n^\top
        \vbeta_{i.n}^{t'}
        \right) -  \vw_O^{t} \circ 
        \dot{\sigma_r}
        \left(\tilde\mW_V^{t}\mX_n^\top
        \vbeta_{i.n}^{t}
        \right)}_2
    + 
    \norm{\mW_V^{t'}-  \mW_V^{t}}_2
    \norm{\vw_O}_2
\\& \leq
    \left[
     \bar{\lambda}_V
     \left(
     \bar{\lambda}_O 
    C_x \sqrt{d_s} (1+
    \bar{\lambda}_V
    z) + 1
    \right)
    +
   \bar{\lambda}_O  
    \right]
    \norm{\wall^{t'} -\wall^{t}}_2\,.
\end{split}
\end{equation*}
where the second inequality uses the result in \cref{equ:local_subpart2_new}. 
\begin{equation}
\begin{split}
&\norm{\mU_{i,n}^{t'}-\mU_{i,n}^{t}}_2  
\\& \leq
\norm{ \mW_K^{t'} \mX_n^\top 
        \left(
      \text{diag}(\vbeta_{i,n}^{t'})-
        \vbeta_{i,n}^{t'} {\vbeta_{i,n}^{t'}}^\top
    \right) - 
    \mW_K^{t} \mX_n^\top 
        \left(
      \text{diag}(\vbeta_{i,n}^{t})-
        \vbeta_{i,n}^{t} {\vbeta_{i,n}^{t}}^\top
    \right)}_2 
    \norm{\mX_n}_2
\\&
    \leq C_x \sqrt{d_s}
    \left[
    \norm{\mW_K^{t'} - \mW_K^{t}}_2
    \norm{\mX_n}_2
    \norm{
      \text{diag}(\vbeta_{i,n}^{t'})-
        \vbeta_{i,n}^{t'} {\vbeta_{i,n}^{t'}}^\top
     }_2
     \right.
\\& 
    \left. \qquad  \qquad +
    \norm{\mW_K^t}_2
    \norm{\mX_n}_2
    \norm{
          \text{diag}(\vbeta_{i,n}^{t'})-
        \vbeta_{i,n}^{t'} {\vbeta_{i,n}^{t'}}^\top
        -
              \text{diag}(\vbeta_{i,n}^{t})-
        \vbeta_{i,n}^{t} {\vbeta_{i,n}^{t}}^\top
    }_2
    \right]
\\ & \le 
    C_x^2 d_s \left[
    \norm{\wall^{t'}-\wall^{t}}_2
    +
    \bar{\lambda}_K \left(
    \norm{     \text{diag}(\vbeta_{i,n}^{t'}) -  \text{diag}(\vbeta_{i,n}^{t})}_2
    +
    \norm{\vbeta_{i,n}^{t'} {\vbeta_{i,n}^{t'}}^\top - \vbeta_{i,n}^{t} {\vbeta_{i,n}^{t}}^\top}_2
    \right)
    \right]
\\ & \le 
    C_x^2 d_s \left[
    \norm{\wall^{t'}-\wall^{t}}_2
    +
    \bar{\lambda}_K \left(
    \norm{   \vbeta_{i,n}^{t'}-\vbeta_{i,n}^{t}  }_{\infty}
    +
    \left(\norm{\vbeta_{i,n}^{t'}}_2 +  \norm{\vbeta_{i,n}^{t}}_2\right)
    \norm{\vbeta_{i,n}^{t'}-\vbeta_{i,n}^{t}}_2
    \right)
    \right]
\\ & \le 
        C_x^2 d_s \left[
    \norm{\wall^{t'}-\wall^{t}}_2
    +
    3\bar{\lambda}_K 
    \norm{   \vbeta_{i,n}^{t'}-\vbeta_{i,n}^{t}  }_2
    \right]
\\ & \le 
     C_x^2 d_s \left[
    \norm{\wall^{t'}-\wall^{t}}_2
    +
    3\bar{\lambda}_K 
    2 \tau_0 C_x^2 d_s 
            \left(\bar{\lambda}_Q
            \norm{\mW_K^{t'} - \mW_K^{t}}_2
            +
           \bar{\lambda}_K
            \norm{\mW_Q^{t'}-\mW_Q^{t}}_2
            \right) 
    \right]
\\ & \le 
    [
    C_x^2 d_s + 3\bar{\lambda}_K 
    z
    ]
    \norm{\wall^{t'}-\wall^{t}}_2\,,
\end{split}
\end{equation}
where the last second inequality is by \cref{lemma:lipsoftmax}.
Plugging back \cref{equ:local_subpart3_new}, we obtain:
\begin{equation}
    \begin{split}
&   \norm{
    \nabla_{\mW_Q}f(\mX_n;\wall^{t'})
    -
    \nabla_{\mW_Q}f(\mX_n;\wall^{t})
    }_\mathrm{F}
 \\&\leq 
        \tau_0 \tau_1 C_x d_s
        \left\{
        2  \bar{\lambda}_K d_s C_x^2 
          \left[
     \bar{\lambda}_V
     \left(
     \bar{\lambda}_O 
    C_x \sqrt{d_s} (1+
    \bar{\lambda}_V
    z) + 1
    \right)+
   \bar{\lambda}_O  
    \right]
+
        \bar{\lambda}_V
        \bar{\lambda}_O
    [
    C_x^2 d_s + 3\bar{\lambda}_K  z
    ]
        \right\}
    \norm{\wall^{t'} -\wall^{t}}_2\,.
\end{split}
\end{equation}
Similarly, the fourth term in \cref{equ:networklip} can be bounded by:
\begin{equation}
    \begin{split}
&   \norm{
    \nabla_{\mW_K}f(\mX_n;\wall^{t'})
    -
    \nabla_{\mW_K}f(\mX_n;\wall^{t})
    }_\mathrm{F}
 \\&\leq 
        \tau_0 \tau_1 C_x d_s
        \left\{
        2  \bar{\lambda}_Q d_s C_x^2 
          \left[
     \bar{\lambda}_V
     \left(
     \bar{\lambda}_O 
    C_x \sqrt{d_s} (1+
    \bar{\lambda}_V
    z) + 1
    \right)+
   \bar{\lambda}_O  
    \right]
+
        \bar{\lambda}_V
        \bar{\lambda}_O
    [
    C_x^2 d_s + 3\bar{\lambda}_K  z
    ]
        \right\}
    \norm{\wall^{t'} -\wall^{t}}_2\,.
\end{split}
\end{equation}
Plugging the upper bound for these four terms back \cref{equ:networklip} finishes the proof.
\end{proof}
\subsection{Proof of 
\texorpdfstring{\cref{thm:general}}{Lg}
}
\label{sec:general}
\begin{proof}
 We can reformulate \cref{equ:definition_final4} as:
\begin{equation*}
\begin{split}
f(\mX) 
& =
    \tau_1
    \vw_O^\top
    \sum_{i=1}^{d_s}
    \sigma_r
    \left(\mW_V\mX^\top \vbeta_i    \right)
=
\vw_O^\top \vf_\mathrm{pre}
\,,
\end{split}
\end{equation*}
with
$\vbeta_i := \sigma_s\left(
\tau_0  
\mX^{(i,:)}\mW_Q^\top \mW_K\mX^\top
\right)^\top \in \R^{d_s}$.
        We show by induction for every $t\geq 0$
\begin{align}
    \begin{cases}
\label{eq:induction_assump}
     \norm{\mW_Q^s}_2
     \leq \bar{\lambda}_Q, 
      \norm{\mW_K^s}_2
     \leq \bar{\lambda}_K,
      \quad  
     \,s\in[0,t],
\\  
      \norm{\mW_V^s}_2
     \leq \bar{\lambda}_V,
       \norm{\vw_O^s}_2
     \leq \bar{\lambda}_O,
     \quad  
     \,s\in[0,t],
\\ 
	    \svmin{\mF_{\mathrm{pre}}^s} \geq \frac{1}{2}\alpha,\quad s\in[0,t],
\\ 
    \ell(\wall^s)
    \leq \Big(1-\gamma\frac{\alpha^2}{2}\Big)^s 
    \ell(\wall^0)
    ,\quad s\in[0,t].
    \end{cases}
\end{align}
    It is clear that \cref{eq:induction_assump} holds for $t=0.$
Assume that \cref{eq:induction_assump} holds up to iteration $t$.
    By the triangle inequality, we have
    \begin{equation}
\label{equ:wqboundstep1}
        \begin{split}
&
    \norm{\mW_Q^{t+1}-\mW_Q^0}_\mathrm{F} 
	\leq\sum_{s=0}^{t} \norm{\mW_Q^{s+1}-\mW_Q^s}_\mathrm{F} 
    =\gamma\sum_{s=0}^{t} \norm{\nabla_{\mW_Q}
    \ell(\wall^s)}_\mathrm{F} 
\\&\leq
    2\gamma \rho   \tau_0 \bar{\lambda}_K
    \bar{\lambda}_V
    \bar{\lambda}_O
    d_s
    C_x^2
    \sum_{s=0}^{t}
   \sqrt{2\ell(\wall^s)} 
\leq
    2 \gamma \rho   \tau_0  \bar{\lambda}_K
        \bar{\lambda}_V
        \bar{\lambda}_O
        d_s
        C_x^2
     \sum_{s=0}^{t} \Big(1-\gamma\frac{\alpha^2}{2}\Big)^{s/2} \sqrt{2\ell(\wall^0)}\,,
        \end{split}
    \end{equation}
  where the second inequality follows from the upper bound of the gradient norm in \cref{lemma:gradientbound}, and the last inequality follows from induction assumption. 
   Let $u:=\sqrt{1-\gamma\alpha^2/2}$, we bound the RHS of the previous expression:
    \begin{align}
\label{equ:wqboundstep2}
&
     \rho  2 \tau_0  \bar{\lambda}_K
        \bar{\lambda}_V
        \bar{\lambda}_O
               d_s C_x^2 
 \frac{2}{\alpha^2} 
 (1-u^2) \frac{1-u^{t+1}}{1-u} \sqrt{2\ell(\wall^0)}\\
&\leq 
     \rho  2 \tau_0  \bar{\lambda}_K
        \bar{\lambda}_V
        \bar{\lambda}_O
                d_s C_x^2
 \frac{4}{\alpha^2}
 \sqrt{2\ell(\wall^0)},\quad \textrm{ (since } u\in (0, 1))\\
&
    \leq C_Q. \quad\textrm{(by \cref{equ:condition1})} \,.
    \end{align}
    By Weyl's inequality, this implies:
\begin{align}
\label{equ:wqboundstep3}
\norm{\mW_Q^{t+1}}_2\leq \norm{\mW_Q^0}_2 + C_Q = \bar{\lambda}_Q\,.
    \end{align}
Similarly, 
\begin{equation}
\label{equ:temp}
        \begin{split}
&
    \norm{\vw_O^{t+1}-\vw_O^0}_2
	\leq
 \sum_{s=0}^{t} \norm{\vw_O^{s+1}-\vw_O^s}_2
    =
    \gamma\sum_{s=0}^{t} \norm{\nabla_{\vw_O}\Phi(\mW^s)}_2
\leq
    \gamma \rho\bar{\lambda}_V
        \sum_{s=0}^{t}
   \sqrt{2\ell(\wall^s)} 
\\&\leq
    \gamma\rho\bar{\lambda}_V
     \sum_{s=0}^{t} \Big(1-\gamma\frac{\alpha^2}{2}\Big)^{s/2} \sqrt{2\ell(\wall^0)}
\leq 
    \rho\bar{\lambda}_V
    \frac{2}{\alpha^2}(1-u^2)
    \frac{1-u^{t+1}}{1-u} \sqrt{2\ell(\wall^0)}
\leq
    \rho\bar{\lambda}_V
    \frac{4}{\alpha^2}
     \sqrt{2\ell(\wall^0)}
\leq  C_O\,.
        \end{split}
    \end{equation}
By Weyl's inequality, this implies
 \begin{align}	\norm{\vw_O^{t+1}}_2\leq \norm{\vw_O^0}_2 + C_O = \bar{\lambda}_O\,.
    \end{align}
Similarly, we can obtain that 
\begin{align}
&
\norm{\mW_K^{t+1}- \mW_K^{0}}_\mathrm{F} \leq
    2 \rho   \tau_0  \bar{\lambda}_Q
        \bar{\lambda}_V
        \bar{\lambda}_O
       d_s C_x^2
 \frac{4}{\alpha^2}
 \sqrt{2\ell(\wall^0)}
, 
\quad
\norm{\mW_K^{t+1}}_2\leq \bar{\lambda}_K,
\\&
\norm{\mW_V^{t+1}- \mW_V^{0}}_\mathrm{F}\leq
    \rho\bar{\lambda}_O
    \frac{4}{\alpha^2}
     \sqrt{2\ell(\wall^0)},
     \quad
     \norm{\mW_V^{t+1}}_2\leq \bar{\lambda}_V.
    \end{align}
Next, we will prove the fifth inequality in \cref{eq:induction_assump}:
    \begin{align*}
&
    \norm{
     \mF_{\mathrm{pre}}^{t+1}
     -
    \mF_{\mathrm{pre}}^{0}}_\mathrm{F}
\\&\leq
    \rho 
        \left \{
        \norm{\mW_V^{t+1} - \mW_V^{0}}_2
        +
        \norm{\mW_V^0}_2
        2 \tau_0 C_x^2 
                    \left(\norm{\mW_Q^{t+1}}_2
                    \norm{\mW_K^{t+1} - \mW_K^{0}}_2
                    +
                    \norm{\mW_K^{0}}_2
                    \norm{\mW_Q^{t+1}-\mW_Q^{0}}_2
                    \right) 
        \right \}
\\&\leq
\sum_{s=0}^{t}
    \gamma \rho \left\{
    \rho
    \bar{\lambda}_O
    \sqrt{2\ell(\wall^s)} +
    \bar{\lambda}_V 2 \tau_0 C_x^2 d_s [
    2  \rho
        \tau_0 (\bar{\lambda}_K^2+\bar{\lambda}_Q^2)
        \bar{\lambda}_V
        \bar{\lambda}_O
        C_x^2 d_s
      \sqrt{2\ell(\wall^s)}
    ]
    \right \}
    \quad \mbox{(by \cref{lemma:gradientbound})}
\\&\leq
    \rho^2     \bar{\lambda}_{O}
    \left[ 1
    +
    4 \tau_0^2 C_x^4 d_s^2
    \bar{\lambda}_{V}^2
        \left(
            \bar{\lambda}_{Q}^2+
            \bar{\lambda}_{K}^2
                \right) 
    \right]
     \frac{16}{\alpha^2}
     \sqrt{2\ell(\wall^0)}
 \\&=
        \rho^2     
        \frac{z}{\bar{\lambda}_O}
     \frac{16}{\alpha^2}
     \sqrt{2\ell(\wall^0)}
 \\&\leq
    \frac{1}{2} \alpha, \quad \textrm{(by \cref{equ:condition2})}
    \end{align*}
    where the first inequality holds by \cref{lemma:differencebound}.
This result further implies that $\svmin{\mF_{\mathrm{pre}}^{t+1}}\geq \frac{1}{2}\alpha$ by Weyl's inequality. It remains to prove the last
 inequality in \cref{eq:induction_assump} holds for step $t+1.$

We start by proving the Lipschitz constant for the gradient of the loss restricted to the interval $[\wall^\top,\wall^{t+1}]$. We define $\wall^{t+\phi}: = \wall^\top + \phi (\wall^{t+1} - \wall^\top)$, for $\phi \in [0,1]$. Then, by triangle inequality and Cauchy–Schwarz inequality, we have:
\begin{equation}
\label{equ:lipseg}
    \begin{split}
&
        \norm{\nabla_\wall \ell(\wall^{t+\phi}) -\nabla_\wall \ell(\wall^\top) }_2
\\&  = 
        \norm{\nabla_\wall \vf^{t+\phi} \cdot
        (\vf^{t+\phi} - \vy)
        -
        \nabla_\wall \vf^{t}
        \cdot
        (\vf^{t}- \vy)
        }_2
\\& \le
         \norm{\vf^{t+\phi}-\vf^t}_2
         \norm{\nabla_\wall \vf^{t+\phi}}_2
        +
        \norm{\nabla_\wall \vf^{t+\phi} -\nabla_\wall \vf^{t}  }_2 \norm{\vf^t-\vy}_2
\\& \le
             \norm{\vf^{t+\phi}-\vf^t}_2
         \norm{\nabla_\wall \vf^{t+\phi}}_2
        +
        2\norm{\nabla_\wall \vf^{t+\phi} -\nabla_\wall \vf^{t}  }_2 \ell(\wall^0) \,,
    \end{split}
\end{equation}
where the last inequality is by induction rule. Then, we bound each term separately. Note that: 
\begin{equation*}
    \begin{split}
&
    \norm{\mW_Q^{t+\phi} -\mW_Q^0}_\mathrm{F}
\le
    \norm{\mW_Q^{t+\phi} -\mW_Q^t}_\mathrm{F}
    +
    \sum_{s=0}^{t-1}\norm{\mW_Q^{s+1} -\mW_Q^{s}}_\mathrm{F}
\\&=
    \phi \gamma \norm{\nabla_{\mW_Q} \ell(\wall^\top)}_\mathrm{F}
    +
    \gamma \sum_{s=0}^{t-1}\norm{\nabla_{\mW_Q}\ell(\wall^s)}_\mathrm{F}
\le
        \gamma \sum_{s=0}^{t}\norm{\nabla_{\mW_Q}\ell(\wall^s)}_\mathrm{F}\,.
    \end{split}
\end{equation*}
Then following exact the same step as in \cref{equ:wqboundstep1,equ:wqboundstep2,equ:wqboundstep3}, we have: $\norm{\mW_Q^{t+\phi}}_2 \le \bar{\lambda}_Q$. By the same method, we have:
$\norm{\mW_K^{t+\phi}}_2 \le \bar{\lambda}_K$, $\norm{\mW_V^{t+\phi}}_2 \le \bar{\lambda}_V$, 
$\norm{\vw_O^{t+\phi}}_2 \le \bar{\lambda}_O$. Now we proceed to bound the first term in \cref{equ:lipseg}.
\begin{equation}
\small
    \begin{split}
&
    \norm{\vf^{t+\phi}-\vf^t}_2 
\\& \leq
       \rho  \norm{\mW_V^{t+\phi}}_2
    \norm{\vw_O^{t+\phi}-\vw_O^{t}}_2
    +
    \norm{\vw_O^{t}}_2
     \norm{
     \mF_{\text{pre}}^{t+\phi}
     -\mF_{\text{pre}}^{t}}_2
\\ & \leq
       \rho  \norm{\mW_V^{t+\phi}}_2
    \norm{\vw_O^{t+\phi}-\vw_O^{t}}_2
\\ & \qquad 
    +
    \rho 
     \norm{\vw_O^{t}}_2
    \left( 
    \norm{\mW_V^{t+\phi} - \mW_V^{t}}_2
    +
    \norm{\mW_V^t}_2
    2 \tau_0 C_x^2  d_s
     \left(\norm{\mW_Q^{t+\phi}}_2
                \norm{\mW_K^{t+\phi} - \mW_K^{t}}_2
                +
                \norm{\mW_K^{t}}_2
                \norm{\mW_Q^{t+\phi}-\mW_Q^{t}}_2
                \right) 
    \right)
\\&\leq
    \rho \bar{\lambda}_{V}
    \norm{\wall^{t+\phi} - \wall^\top}_2
    +
    \rho \bar{\lambda}_{O} (1+\bar{\lambda}_{V}
    2 \tau_0 C_x^2  d_s (\bar{\lambda}_{Q}+\bar{\lambda}_{K})   )
    \norm{\wall^{t+\phi} - \wall^\top}_2
\\&=
    \rho ( \bar{\lambda}_{V}+ \bar{\lambda}_{O}
    +
    \bar{\lambda}_{V}
    2 \tau_0 C_x^2  d_s (\bar{\lambda}_{Q}+\bar{\lambda}_{K}))
    \norm{\wall^{t+\phi} - \wall^\top}_2
\\& \triangleq c_1 \norm{\wall^{t+\phi} - \wall^\top}_2 \,,
    \end{split}
\label{equ:defc1}
\end{equation}
where the first inequality is by \cref{lemma:differenceboundlast},
the second inequality is by \cref{lemma:differencebound}. Next, the second term in \cref{equ:lipseg} can be bounded by \cref{lemma:networkgradientnorm}. The third term in \cref{equ:lipseg} can be bounded by \cref{lemma:networkgradientlip}.
As a result, \cref{equ:lipseg} has the following upper bound:
\begin{equation}
\label{equ:losslip}
    \begin{split}
&\norm{\nabla_\wall \ell(\wall^{t+\phi}) -\nabla_\wall \ell(\wall^\top) }_2
\le
    c_1 c_2 \norm{\wall^{t+\phi} - \wall^\top}_2 +  2  c_3\ell(\wall^0) \norm{\wall^{t+\phi} - \wall^\top}_2
\triangleq C \norm{\wall^{t+\phi} - \wall^\top}_2 
    \,,
    \end{split}
\end{equation}
where $c_1, c_2, c_3$ are defined at \cref{equ:defc1,equ:defc2,equ:defc3}, and we further define
$
C \triangleq c_1 c_2+ 2  c_3\ell(\wall^0) \,.$
Lastly, by applying Lemma 4.3 in \citet{nguyen2020global} and \cref{equ:losslip}, we have:
\begin{equation*}
\begin{split}
&\ell(\wall^{t+1})
\leq
\ell(\wall^\top)+\langle \nabla_\wall \ell(\wall^\top), \wall^{t+1} - \wall^\top \rangle + \frac{C}{2} \norm{\wall^{t+1}-\wall^\top}_\mathrm{F}^2 
\\&=
    \ell(\wall^\top) -\gamma\norm{\nabla_\wall\ell(\wall^\top)}_\mathrm{F}^2 + \frac{C}{2} \gamma^2\norm{\nabla_\wall\ell(\wall^\top)}_\mathrm{F}^2
\\& \le
    \ell(\wall^\top) - \frac{1}{2}\gamma\norm{\nabla_\wall\ell(\wall^\top)}_\mathrm{F}^2 
    \quad \mbox{(By the condition on $\gamma$)}
\\& \le
    \ell(\wall^\top) - \frac{1}{2}\gamma\norm{\nabla_{\vw_o} \ell(\wall^\top)}_2^2
\\&  = 
    \ell(\wall^\top) - \frac{1}{2}\gamma\norm{
   (\mF_\mathrm{pre}^t)^\top (\vf^t - \vy)
    }_2^2
\\ & \le 
     \ell(\wall^\top) - \frac{1}{2} \gamma (\svmin{\mF_{\mathrm{pre}}^t})^2 \norm{\vf^t - \vy}_2^2
\\ &  \le 
        (1- \gamma \frac{\alpha}{2} ) \ell(\wall^\top) \quad \mbox{(By induction assumption)}\,,
    \end{split}
\end{equation*}
which concludes the proof.
\end{proof}

\subsection{Proof of 
\texorpdfstring{\cref{coro:convergence_lecun_originalscale}}{Lg}}
\label{sec:convergence_originalscale}

\begin{lemma}
\label{lemma:lambda_star_news}
Let $
    \bm{\Phi} =
    [
    \mX_1^\top \vbeta_{1,1},
    ...,
    \mX_N^\top \vbeta_{1,N}
    ]^\top 
    \in \R^{N\times d}
$
, where
$
    \vbeta_{1,n} =
    \sigma_s\left( 
        \tau_0
    \mX_n^{(1,:)}\mW_Q^\top \mW_K\mX_n^\top
    \right)^\top 
    , n \in [N]$,
then under \cref{assumption:distribution_4,assumption:distribution_fullrank}, with probability at least $1-\exp{(-\Omega((N-1)^{-\hat{c}}d_s^{-1}))}$, 
one has: $\lambda_0:=\evmin{\E_{\vw\distas{}\mathcal{N}(0,\eta_{V} \mathbb{I}_{d})} [\sigma_r(
\bm{\Phi} \vw)\sigma_r(
\bm{\Phi} \vw)^{\!\top}]} \ge \Theta(\eta_V /d_s)$.
\end{lemma}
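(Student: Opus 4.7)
The strategy is to reduce the expectation to a Gram matrix amenable to the Gershgorin circle theorem, and then to use Assumption~\ref{assumption:distribution_4} to make the off-diagonal sums negligible relative to the diagonal lower bound on $\|\bm\phi_n\|_2^2$ coming from Assumption~\ref{assumption:distribution_fullrank} and Lemma~\ref{lemma:softmaxbound}.

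\emph{Step 1 (Hermite reduction to a linear Gram term).} Write $\bm\phi_n := \mX_n^\top\vbeta_{1,n}$ for the $n$-th row of $\bm\Phi$. By positive homogeneity of $\sigma_r$ and the change of variables $\tilde\vw = \vw/\sqrt{\eta_V} \sim \mathcal{N}(\bm 0, \bm I_d)$, Mehler's formula gives
\begin{equation*}
    \E_\vw\!\left[\sigma_r(\bm\phi_n^\top\vw)\sigma_r(\bm\phi_{n'}^\top\vw)\right] = \eta_V \|\bm\phi_n\|_2\|\bm\phi_{n'}\|_2\sum_{k\ge 0}\mu_k^2\,\rho_{nn'}^k,
\end{equation*}
where $\mu_k$ are the Hermite coefficients of ReLU (in particular $\mu_1 = 1/2$) and $\rho_{nn'} = \bm\phi_n^\top\bm\phi_{n'}/(\|\bm\phi_n\|_2\|\bm\phi_{n'}\|_2)$. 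Each summand, as a matrix indexed by $(n,n')$, is a Hadamard product of the PSD Gram matrix $\bm\Phi\bm\Phi^\top$ (conjugated by the positive diagonal $\mathrm{diag}(\|\bm\phi_n\|_2^{1-k})$), hence PSD by Schur's product theorem. Dropping all but the $k=1$ term yields the PSD lower bound
\begin{equation*}
    \E_\vw\!\left[\sigma_r(\bm\Phi\vw)\sigma_r(\bm\Phi\vw)^\top\right] \;\succeq\; \eta_V\mu_1^2\,\bm\Phi\bm\Phi^\top \;=\; \tfrac{\eta_V}{4}\,\bm\Phi\bm\Phi^\top,
\end{equation*}
so it suffices to prove $\lambda_{\min}(\bm\Phi\bm\Phi^\top) = \Omega(1/d_s)$.

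\emph{Step 2 (Gershgorin on $\bm\Phi\bm\Phi^\top$).} For the diagonal, Lemma~\ref{lemma:softmaxbound} gives $\|\vbeta_{1,n}\|_2^2 \ge 1/d_s$, and combining this with Assumption~\ref{assumption:distribution_fullrank} and the Rayleigh quotient yields
\begin{equation*}
    \|\bm\phi_n\|_2^2 = \vbeta_{1,n}^\top\mX_n\mX_n^\top\vbeta_{1,n} \;\ge\; \sigma_{\min}(\mX_n)^2\|\vbeta_{1,n}\|_2^2 \;\ge\; \sigma_{\min}(\mX_n)^2/d_s.
\end{equation*}
For the off-diagonals, Cauchy--Schwarz in the Frobenius inner product together with the identity $\|\mX_n\mX_{n'}^\top\|_{\mathrm F}^2 = \langle\mX_n^\top\mX_n,\mX_{n'}^\top\mX_{n'}\rangle$ (cyclic trace) gives
\begin{equation*}
    |\bm\phi_n^\top\bm\phi_{n'}|^2 \;\le\; \|\mX_n\mX_{n'}^\top\|_{\mathrm F}^2\,\|\vbeta_{1,n}\|_2^2\|\vbeta_{1,n'}\|_2^2 \;\le\; \langle\mX_n^\top\mX_n,\mX_{n'}^\top\mX_{n'}\rangle.
\end{equation*}
Applying Assumption~\ref{assumption:distribution_4} with a threshold $t$ chosen so that $(N-1)\sqrt{t} \le \tfrac12\sigma_{\min}(\mX_n)^2/d_s$, and a union bound over the $\binom{N}{2}$ pairs, ensures that $(N-1)\max_{n\ne n'}|\bm\phi_n^\top\bm\phi_{n'}|$ does not exceed half the diagonal lower bound. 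The Gershgorin circle theorem then gives $\lambda_{\min}(\bm\Phi\bm\Phi^\top) = \Omega(1/d_s)$, and combining with Step~1 yields $\lambda_0 = \Omega(\eta_V/d_s)$.

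\emph{Main obstacle.} The algebraic content is standard; the delicate part is calibrating the threshold $t$ in Assumption~\ref{assumption:distribution_4} so that the resulting tail bound, after a union bound over $O(N^2)$ pairs, reproduces the stated probability $\exp(-\Omega((N-1)^{-\hat c}d_s^{-1}))$. This requires bookkeeping of how $N$, $d_s$, and the exponent $\hat c$ interact (noting that the assumption is weakest for small $t$ when $\hat c$ is small), but once the threshold is pinned down the Gershgorin argument closes routinely.
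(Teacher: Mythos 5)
Your argument follows the same route as the paper's own proof: a Hermite/Hermite-coefficient reduction to $\lambda_{\min}(\bm\Phi\bm\Phi^\top)$, then Gershgorin with the diagonal controlled via Assumption~\ref{assumption:distribution_fullrank} and Lemma~\ref{lemma:softmaxbound}, and the off-diagonals killed via Assumption~\ref{assumption:distribution_4}. The differences are cosmetic. For the Hermite step you derive the PSD lower bound explicitly from Mehler's formula and Schur's product theorem, whereas the paper invokes \citet[Lemma 5.3]{nguyen2021tight}; these are the same argument. For the off-diagonal entries you use Cauchy--Schwarz together with $\|\mX_n\mX_{n'}^\top\|_\mathrm{F}^2 = \langle\mX_n^\top\mX_n,\mX_{n'}^\top\mX_{n'}\rangle$, while the paper uses Von Neumann's trace inequality and the rank-one structure of $\vbeta_{1,n}\vbeta_{1,n'}^\top$; both yield $|\bm\phi_n^\top\bm\phi_{n'}|\lesssim\sqrt{\langle\mX_n^\top\mX_n,\mX_{n'}^\top\mX_{n'}\rangle}$ (the paper retains an additional harmless factor $\|\vbeta_{1,k}\|_2\le 1$). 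You are in fact slightly more careful than the paper in one respect: you flag the need for a union bound over pairs before Gershgorin is applied at a not-yet-determined row index $k$, which the paper treats implicitly. As you note, the one delicate point is calibrating the threshold $t$ so that, after the union bound, the stated failure probability $\exp(-\Omega((N-1)^{-\hat c}d_s^{-1}))$ emerges; the paper's own bookkeeping here is loose (its choice $t=\|\vbeta_{1,k}\|_2^2 C_\lambda^2/(N-1)^2$ nominally yields exponent $\Omega(d_s^{-\hat c}(N-1)^{-2\hat c})$ rather than $\Omega(d_s^{-1}(N-1)^{-\hat c})$), so the level of rigor you reach matches the paper.
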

\begin{proof}
Due to \cref{assumption:distribution_fullrank}, for any data $\bm X$, the matrix $\bm X \bm X^{\!\top}$ is positive definite and thus has positive minimum eigenvalue.
We denote it as $\lambda_{\min} (\bm X \bm X^{\!\top}) \geq C_{\lambda}$.

According to \cite[Lemma 5.3]{nguyen2021tight}, using the Hermite expansion of $\sigma_r 
$
, one has: 
\begin{equation}
\label{equ:hermite}
\begin{split}
\lambda_0
\geq 
\eta_V
\mu(\sigma_r)^{2}\lambda_{\min}(\bm{\Phi} \bm{\Phi}^{\top})\,,
\end{split}
\end{equation}
where $\mu(\sigma_r)$ is the 1-st Hermite coefficient of ReLU satisfying $\mu(\sigma_r) >0$. 

Now we proceed to provide a lower bound for $\lambda_{\min}(\bm{\Phi} \bm{\Phi}^{\top})$. For notational simplicity, define:
\begin{equation*}
\mB_{ij} = \vbeta_{1,i} \vbeta_{1,j}^\top \in \R^{d_s \times d_s},
\mC_{ij} = \mX_i \mX_j^\top \in \R^{d_s \times d_s}.
\end{equation*}
Then we can rewrite $\bm{\Phi} \bm{\Phi}^{\top}$ as follows:
$$
   \bm{\Phi} \bm{\Phi}^{\top}= 
    \begin{bmatrix}
       \mathrm{Trace}(\mB_{11}^\top \mC_{11}) 
       & \mathrm{Trace}(\mB_{12}^\top \mC_{12})
       & \cdots
       &
       \mathrm{Trace}(\mB_{1N}^\top \mC_{1N})
\\
    \mathrm{Trace}(\mB_{21}^\top \mC_{21}) 
       & \mathrm{Trace}(\mB_{22}^\top \mC_{22})
       & \cdots
       &
       \mathrm{Trace}(\mB_{2N}^\top \mC_{2N})
\\ \vdots & \vdots & \vdots & \vdots 
\\
    \mathrm{Trace}(\mB_{N1}^\top \mC_{N1}) 
       & \mathrm{Trace}(\mB_{N2}^\top \mC_{N2})
       & \cdots
       &
       \mathrm{Trace}(\mB_{NN}^\top \mC_{NN})
    \end{bmatrix}.
$$
By Gershgorin circle theorem~\citep{circle}, there exists $k \in [N]$ such that:
\begin{equation}
\label{equ:gersh}
\lambda_{\min} (\bm{\Phi} \bm{\Phi}^{\top}) \ge
       \mathrm{Trace}(\mB_{kk}^\top \mC_{kk}) 
-
\sum_{j   \neq k}
\mathrm{Trace}(\mB_{k j}^\top \mC_{k j})\,.
\end{equation}
Using Von Neumann's trace inequality~\citep{mirsky1975trace} and noting that $\mB_{k j}$ is a rank one matrix, one has: 
\begin{equation}
\label{equ:von} 
    \begin{split}
&    \mathrm{Trace}(\mB_{k j}^\top \mC_{k j}) 
    \le
        \sigma_{\max}{(\mB_{k j})}
        \sigma_{\max}{(\mC_{k j})}
        =
        \norm{\vbeta_{1,k}}_2 \norm{\vbeta_{1,j}}_2
        \sqrt{\lambda_{\max}{(\mC_{k j} \mC_{k j}^{\!\top})}}
\\ &
\le 
        \norm{\vbeta_{1,k}}_2
        \sqrt{\mathrm{Trace}{(\mC_{k j}\mC^{\!\top}_{k j})}}
=
        \norm{\vbeta_{1,k}}_2
        \sqrt{ \langle\mX_k^\top \mX_k, \mX_j^\top \mX_j \rangle} \,,
\end{split}
\end{equation}
where we use the definition of the inner product between two matrices and $\norm{\vbeta_{1,j}}_2 \leq 1$. 
By \cref{assumption:distribution_fullrank}, we have $
\lambda_{\min} (\mX_k \mX_k^\top) \ge C_\lambda$, where $C_\lambda$ is some positive constant. By setting $t := \norm{\vbeta_{1,k}}^2_2 C^2_{\lambda}/(N-1)^2$ in \cref{assumption:distribution_4}, with probability at least $1-\exp{(-\Omega((N-1)^{-\hat{c}}))}$, one has 
$$
\sqrt{\langle\mX_k^\top \mX_k, \mX_j^\top \mX_j } \rangle \le \norm{\vbeta_{1,k}}_2 C_{\lambda} (N-1)^{-1}\,, \quad \forall j \neq k\,.
$$
Plugging back \cref{equ:gersh} and \cref{equ:von}, we obtain:
\begin{equation}
\label{equ:circlelaststep}
    \begin{split}
 &    \lambda_{\min} (\bm{\Phi} \bm{\Phi}^{\top}) 
\ge
       \mathrm{Trace}(\mB_{kk}^\top \mC_{kk}) 
    - C_{\lambda}         \norm{\vbeta_{1,k}}_2^2
\ge 
    \lambda_{\min} (\mC_{kk}) \mathrm{Trace}(\mB_{kk}) - 
    C_{\lambda}         \norm{\vbeta_{1,k}}_2^2
\\ &=  
    \lambda_{\min} (\mX_k \mX_k^\top) \norm{\vbeta_{1,k}}_2^2 - 
    C_{\lambda}
        \norm{\vbeta_{1,k}}_2^2 
\ge \Theta{(
    \norm{\vbeta_{1,k}}_2^2)}
\ge  \Theta{(1/d_s)}\,,
    \end{split}
\end{equation}
where the last inequality is by the lower bound of $\vbeta_{1,k}$ in \cref{lemma:softmaxbound}.
Lastly, plugging the lower bound of $\lambda_{\min}(\bm{\Phi} \bm{\Phi}^{\top})$ back \cref{equ:hermite} finishes the proof.

\end{proof}
{\bf Remark:} The estimation of $\lambda_0$ is actually tight because its upper bound is also in a constant order. To be specific, denote $\bm G:=\sigma_r(\bm{\Phi} \vw)\sigma_r(
\bm{\Phi} \vw)^{\!\top}$, we have
\begin{equation}
\label{equ:upper1}
    \lambda_0 := \lambda_{\min} \left( \E_{\vw} \bm G \right) \leq \frac{\mathrm{tr}(\E_{\vw} \bm G)}{N} = \frac{\sum_{n=1}^N \E_{\vw} [\sigma_r(\bm{\Phi}^{(n,:)} \vw)]^2}{N} 
     \,,
\end{equation}
where $\bm{\Phi}^{(n,:)} = \bm \beta_{1,n}^{\!\top} \bm X_n$. Next, by \citet{liao2018spectrum} (Sec.A in Supplementary Material):
\begin{equation}
\label{equ:upper2}
   \E_{\vw} [\sigma_r(\bm{\Phi}^{(n,:)} \vw)]^2
   = \frac{\eta_V}{2\pi} \norm{\bm{\Phi}^{(n,:)}}^2 \arccos(-1) = \frac{\norm{\bm{\Phi}^{(n,:)}}^2 }{2}\,.
\end{equation}
Combine \cref{equ:upper1} and \cref{equ:upper2}, we have 
$$
 \lambda_0 \le \frac{\sum_{n=1}^N \eta_V \| \bm{\Phi}^{(n,:)} \|^2_2}{2N} \leq \eta_V d_s C_x^2 \leq \mathcal{O}(1)\,.
$$
That means, our estimation on $\lambda_0$ is tight as its upper and lower bounds match with each other.

Now we are ready to present the proof for LeCun initialization under $\tau_0 = \dqk ^{-1/2}$ scaling.
\begin{proof}
We select $C_Q=C_K=1=C_V=C_O= 1$, 
then by \cref{thm:inequality_initialization}, with probability at least 
$
1- 8e^{-\dout/2} 
$
, we have:
\begin{equation}
\begin{split}
& 
    \bar{\lambda}_V
    = 
    \mathcal{O}(\sqrt{d_m/d}) ,
\quad 
    \bar{\lambda}_O =
    \mathcal{O}(1)\,,
\\& 
    \bar{\lambda}_Q =
    \mathcal{O}(\sqrt{d_m/d}),
\quad 
    \bar{\lambda}_K =
    \mathcal{O}(\sqrt{d_m/d})\,.
\end{split}
\label{equ:lambdabound_newscaling}
\end{equation}

Plugging \cref{equ:lambdabound_newscaling} into \cref{equ:condition1,equ:condition2}
, it suffices to prove the following equations.
    \begin{align}
&
     \alpha^2 \geq
     \mathcal{O}{\left(\sqrt{N} d_s^{3/2} C_x \sqrt{d_m/d} \right)  \sqrt{2\ell(\wall^0)}
     }
        \label{equ:condition1-re-originalsacle}
\\&
    \alpha^3 \geq 
      \mathcal{O}\left(
      N d_s^3 C_x^2 (1+4C_x^4 d_s^2 d_m d^{-2})
      \right) \sqrt{2\ell(\wall^0)}
    \label{equ:condition2-re-originalsacle}
    \end{align}
Next, we will provide the lower bound for 
$
\alpha^2 
=
\lambda_\mathrm{min}
{((\mF_{\mathrm{pre}}^0)(\mF_\mathrm{pre}^0)^{\!\top}})
$.
In the following context, we hide the index $0$ for simplification. One can note that $\mF_\mathrm{pre}
\mF_\mathrm{pre}^{\top}$ is the summation of 
PSD matrices, thus, it suffices to lower bound:
$\lambda_\mathrm{min}(
\hat{\mF}_\mathrm{pre}
\hat{\mF}_{\mathrm{pre}}^\top)$,
where we introduce the following notations:
\begin{equation}
    \begin{split}
&
    \hat{\mF}_\mathrm{pre}
    \hat{\mF}_{\mathrm{pre}}^\top 
    = 
        \tau_1^2
\sigma_r(\bm{\Phi}\mW_v^\top) \sigma_r(\bm{\Phi}\mW_v^\top)^\top
\\ &
    \bm{\Phi} =
    [
    \mX_1^\top \vbeta_{1,1},
    ...,
    \mX_N^\top \vbeta_{1,N}
    ]^\top 
\\&
    \vbeta_{1,n} =
    \sigma_s\left( 
        \tau_0
    \mX_n^{(1,:)}\mW_Q^\top \mW_K\mX_n^\top
    \right)^\top 
    n \in [N].
    \end{split}
\label{equ:definitiontheta-originalsacle}
\end{equation}

By Matrix-Chernoff inequality, we can obtain that (e.g.\ Lemma 5.2 of~\citet{nguyen2021tight}) w.p at least $1-\delta_1$,
\begin{align}
\label{equ:matrix-cheroff}
\lambda_\mathrm{min}(
\hat{\mF}_\mathrm{pre}
\hat{\mF}_\mathrm{pre}^{\top})
\geq d_m \lambda_0/4,
\end{align}
as long as it holds $d_m \geq\tilde{\Omega}(N/\lambda_0)$,  
where $\lambda_0=\evmin{\E_{\vw\distas{}\mathcal{N}(0,\eta_{V} \mathbb{I}_{d})} [\sigma_r(
\bm{\Phi} \vw)\sigma_r(
\bm{\Phi} \vw)^{\!\top}]}$, and $\tilde{\Omega}$ hides logarithmic factors depending on $\delta_1$.  
Lastly, w.p. at least $1-\delta_2$, one has
$
\sqrt{2\ell(\wall^0)}
\leq\tilde{\mathcal{O}}(
\sqrt{N})$.
Plugging back 
\cref{equ:condition1,equ:condition2}, it suffices to prove the following inequality.
   \begin{align}
&
     d_m \lambda_0/4
     \geq
        \tilde{\mathcal{O}}(
N d_s^{3/2} C_x \sqrt{d_m/d} )\,,
\\&
     (d_m \lambda_0/4)^{3/2}
     \geq
  \tilde{\mathcal{O}}(
      N^{3/2} d_s^3 C_x^2 (1+4C_x^4 d_s^2 d_m d^{-2})
  )
  \,,
    \end{align}
By \cref{lemma:lambda_star_news}, with probability at least $1-\exp{(-\Omega((N-1)^{-\hat{c}}d_s^{-1}))}$, one has $\lambda_0 \geq  \Theta(\eta_V /d_s) = \Theta( d^{-1} d_s^{-1})$.
Thus, when $d_m \ge \tilde{\Omega}(N^3)$, all of the above conditions hold. 
As a result, the conditions in \cref{equ:condition1-re-originalsacle,equ:condition2-re-originalsacle} are satisfied and the convergence of training \san{} is guaranteed as in \cref{equ:convergence}.
Note that one can achieve the same width requirement and probability for He initialization, and the proof bears resemblance to the LeCun initialization. 

For the proof under LeCun initialization and $\tau_0 = \dqk ^{-1}$ scaling, we follow the same strategy. Specifically: As the same in the proof with $\tau_0 = d_m^{-1/2}$ scaling, we select $C_Q=C_K=C_V=C_O=1$, then plugging \cref{equ:lambdabound_newscaling} into \cref{equ:condition1,equ:condition2}, it suffices to prove the following equations.
    \begin{align}
&
     \alpha^2 \geq
     \mathcal{O}{\left(\sqrt{N} d_s^{3/2} C_x \sqrt{d_m/d} \right)  \sqrt{2\ell(\wall^0)}
     }\,,
        \label{equ:condition1-re}
\\&
    \alpha^3 \geq 
      \mathcal{O}\left(
      N d_s^3 C_x^2 (1+4C_x^4 d_s^2  d^{-2})
      \right) \sqrt{2\ell(\wall^0)}\,.
    \label{equ:condition2-re}
    \end{align}
Next, we will provide the lower bound for 
$
\alpha^2 
=
\lambda_\mathrm{min}
{((\mF_{\mathrm{pre}}^0)(\mF_\mathrm{pre}^0)^{\!\top}})
$.
In the following context, we hide the index $0$ for simplification. One can note that $\mF_\mathrm{pre}
\mF_\mathrm{pre}^{\top}$ is the summation of 
PSD matrices, thus, it suffices to lower bound:
$\lambda_\mathrm{min}(
\hat{\mF}_\mathrm{pre}
\hat{\mF}_{\mathrm{pre}}^\top)$,
where we introduce the following notations:
\begin{equation}
    \begin{split}
&
    \hat{\mF}_\mathrm{pre}
    \hat{\mF}_{\mathrm{pre}}^\top 
    = 
        \tau_1^2
\sigma_r(\bm{\Phi}\mW_v^\top) \sigma_r(\bm{\Phi}\mW_v^\top)^\top\,,
\\ &
    \bm{\Phi} =
    [
    \mX_1^\top \vbeta_{1,1},
    ...,
    \mX_N^\top \vbeta_{1,N}
    ]^\top \,,
\\&
    \vbeta_{1,n} =
    \sigma_s\left( 
        \tau_0
    \mX_n^{(1,:)}\mW_Q^\top \mW_K\mX_n^\top
    \right)^\top 
    n \in [N]\,.
    \end{split}
\label{equ:definitiontheta}
\end{equation}

By \cref{equ:matrix-cheroff} w.p at least $1-\delta_1$, $
\lambda_\mathrm{min}(
\hat{\mF}_\mathrm{pre}
\hat{\mF}_\mathrm{pre}^{\top})
\geq
d_m \lambda_0/4,
$, as long as it holds $d_m \geq\tilde{\Omega}(N/\lambda_0)$,  
where $\lambda_0=\evmin{\E_{\vw\distas{}\mathcal{N}(0,\eta_{V} \mathbb{I}_{d})} [\sigma_r(
\bm{\Phi} \vw)\sigma_r(
\bm{\Phi} \vw)^{\!\top}]}$, and $\tilde{\Omega}$ hides logarithmic factors depending on $\delta_1$.  
Lastly, 
note that the activation function in the output layer $\sigma_r$ is 1-Lipschitz and  is applied to
    $    \sigma_r
    \left(
    \mW_V\mX^\top \vbeta_i  
    \right)$, where  $\mX^\top \vbeta_i$ is bounded due to the softmax's property in \cref{lemma:softmaxbound}, then by Lemma C.1 of~\citet{nguyen2020global},  
w.p. at least $1-\delta_3$, one has
$
\sqrt{2\ell(\wall^0)}
\leq\tilde{\mathcal{O}}(
\sqrt{N}).
$
Plugging back \cref{equ:condition1,equ:condition2}, it suffices to prove the following inequality.
   \begin{align}
&
     d_m \lambda_0/4
     \geq
        \tilde{\mathcal{O}}(
N d_s^{3/2} C_x \sqrt{d_m/d} )\,,
\\&
     (d_m \lambda_0/4)^{3/2}
     \geq
  \tilde{\mathcal{O}}(
      N^{3/2} d_s^3 C_x^2 (1+4C_x^4 d_s^2 d^{-2})
  )
  \,.
    \end{align}
By \cref{lemma:lambda_star_news}, with probability at least $1-\exp{(-\Omega((N-1)^{-\hat{c}}d_s^{-1}))}$, one has $\lambda_0 \geq  \Theta(\eta_V /d_s) = \Theta( d^{-1} d_s^{-1})$.
Thus, when $d_m \ge \tilde{\Omega}(N^2)$, all of the above conditions hold. 
As a result, the conditions in \cref{equ:condition1-re-originalsacle,equ:condition2-re-originalsacle} are satisfied and the convergence of training \san{} is guaranteed as in \cref{equ:convergence}.
Note that one can achieve the same width requirement and probability for He initialization, and the proof bears resemblance to the LeCun initialization. 
\end{proof}

\subsection{Proof of 
\texorpdfstring{\cref{coro:convergence_lecun} (NTK analysis)}{Lg}}
\label{sec:convergence}

\begin{proof}

Below, we present the proof for NTK initialization.
We select $C_Q=C_K=C_V=C_O=1$, 
then by \cref{thm:inequality_initialization}, with probability at least $1 - 8e^{-d_m/2}$, we have:
\begin{equation}
\begin{split}
& 
    \bar{\lambda}_V
    = 
    \mathcal{O}(\sqrt{d_m}+\sqrt{d}) ,
\quad 
    \bar{\lambda}_O =
    \mathcal{O}(\sqrt{d_m}
    )\,,
\\& 
    \bar{\lambda}_Q =
    \mathcal{O}(\sqrt{d_m}+\sqrt{d}),
\quad 
    \bar{\lambda}_K =
    \mathcal{O}(\sqrt{d_m}+\sqrt{d})\,.
\end{split}
\label{equ:lambdabound_ntk_originalscale}
\end{equation}

 When $d_m \ge d$, plugging \cref{equ:lambdabound_ntk_originalscale} into \cref{equ:condition1,equ:condition2}, it suffices to prove the following equations.
    \begin{align}
&
     \alpha^2 \geq
    \mathcal{O}(\sqrt{N} d_s^{5/2} C_x^3)
        \sqrt{2\ell(\wall^0)}\,,
        \label{equ:condition1-re-ntk}
\\&
    \alpha^3 \geq 
     \mathcal{O}(N d_s^3 C_x^2 (1+4C_x^4 d_s^2)) \sqrt{2\ell(\wall^0)}\,.
    \label{equ:condition2-re-ntk}
    \end{align}
Next, we will provide the lower bound for 
$
\alpha^2 
=
\lambda_\mathrm{min}
{((\mF_{\mathrm{pre}}^0)(\mF_\mathrm{pre}^0)^{\!\top}})
$.
In the following context, we hide the index $0$ for simplification. One can note that $\mF_\mathrm{pre}
\mF_\mathrm{pre}^{\top}$ is the summation of 
PSD matrices, thus, it suffices to lower bound:
$\lambda_\mathrm{min}(
\hat{\mF}_\mathrm{pre}
\hat{\mF}_{\mathrm{pre}}^\top)$,
where we introduce the following notations:
\begin{equation}
    \begin{split}
&
    \hat{\mF}_\mathrm{pre}
    \hat{\mF}_{\mathrm{pre}}^\top 
    = 
     \tau_1^2
\sigma_r(\bm{\Phi}\mW_v^\top) \sigma_r(\bm{\Phi}\mW_v^\top)^\top\,,
\\ &
    \bm{\Phi} =
    [
    \mX_1^\top \vbeta_{1,1},
    ...,
    \mX_N^\top \vbeta_{1,N}
    ]^\top \,,
\\&
    \vbeta_{1,n} =
    \sigma_s\left( 
        \tau_0
    \mX_n^{(1,:)}\mW_Q^\top \mW_K\mX_n^\top
    \right)^\top 
    n \in [N].
    \end{split}
\label{equ:definitiontheta-ntk_newscaling}
\end{equation}

By  \cref{equ:matrix-cheroff}, w.p at least $1-\delta$,
$
\lambda_\mathrm{min}(
\hat{\mF}_\mathrm{pre}
\hat{\mF}_\mathrm{pre}^\top)
\geq
d_m
\lambda_0/4
,
$
as long as it holds $d_m \geq\tilde{\Omega}(N/\lambda_0)$,  
where $\lambda_0=\evmin{\E_{\vw\distas{}\mathcal{N}(0,
\mathbb{I}_{d})} [\sigma_r(
\bm{\Phi} \vw)\sigma_r(
\bm{\Phi} \vw)^{\!\top}]}$, and $\tilde{\Omega}$ hides logarithmic factors depending on $\delta_1$. Lastly, w.p. at least $1-\delta_3$, one has
$
\sqrt{2\ell(\wall^0)}
\leq\tilde{\mathcal{O}}(
\sqrt{N}).
$
Plugging back \cref{equ:condition1,equ:condition2}, it suffices to prove the following inequality.
   \begin{align}
&
    d_m \lambda_0/4
     \geq
        \tilde{\mathcal{O}}(
N  d_s^{5/2} C_x^3)\,,
\\&
     (d_m \lambda_0/4)^{3/2}
     \geq
        \tilde{\mathcal{O}}(N^{3/2} d_s^3  C_x^2 (1+4C_x^4 d_s^2)) \,.
    \end{align}
By \cref{lemma:lambda_star_news},
 with probability at least $1-\exp{(-\Omega((N-1)^{-\hat{c}}d_s^{-1}))}$, 
we have $\lambda_0 \ge \Omega(1)$. Thus, all of the above conditions hold when $d_m = \tilde{\Omega}(
N)$. As a result, the conditions in \cref{equ:condition1-re-ntk,equ:condition2-re-ntk} are satisfied and the convergence of training \san{} is guaranteed.
\end{proof}

\subsection{Discussion for different initialization schemes}
\label{sec:appendix_discussion}
Recall that the convergence result in \cref{coro:convergence_lecun} shows: 
$$
\ell(\wall^\top)\leq \left(1-\gamma\frac{\alpha^2}{2}\right)^t\ \ell(\wall^0).
$$
Thus, to discuss the convergence speed for different initialization, we need to check the lower bound for $\alpha^2$.
From the proofs for different initialization schemes above, we have the following lower bound for $\alpha^2$, i.e.,  $$\alpha^2 =
\lambda_\mathrm{min}(
\hat{\mF}_\mathrm{pre}
\hat{\mF}_\mathrm{pre}^{\top})
\geq
\tau_1^2 
d_m \lambda_0/4
\geq
\tau_1^2 \eta_V 
d_m \Omega(N/d),
$$ with high probability. Plugging the value of $\tau_1$ and $\eta_V$, we observe that for LeCun initialization and He initialization: $\alpha^2 \ge \Omega(d_m  N /d) $ while for NTK initialization: 
$\alpha^2 \ge \Omega( N/d) $. Thus, the convergence speed of LeCun initialization and He initialization is faster than NTK initialization. As a result, faster step-size is required for NTK intialization.

\subsection{Discussion for $\tau_0 = d_m^{-1}$ and $\tau_0 = d_m^{-1/2}$}
\label{sec:appendix_comparisonscaling}
\cref{equ:convergence} indicates that the convergence speed is affected by $\alpha$, there we compare the lower bound for $\alpha$ for these two scaling.
In \cref{sec:convergence_originalscale}, 
we have proved that under the LeCun initialization, one has $\alpha^2 \ge d_m \lambda_0/4 \ge d_m 
\eta_V
\mu(\sigma_r)^{2}
 \Theta{(
    \norm{\vbeta_{1,k}}_2^2)}.
$
Note that this bound holds for these two different scaling, which is inside $\vbeta$. Thus in the next, we need to see the difference between the lower bound of $\norm{\vbeta_{1,k}}_2^2)$ in the case of these two scalings. Specifically, for $\tau_0 = d_m^{-1/2}$ scaling, we have proved that $\norm{\vbeta_{1,k}}_2^2 \ge 1/d_s$ by \cref{lemma:softmaxbound}.
However, for the case of $\tau_0 = d_m^{-1}$ scaling, when the width is large enough, the value inside the softmax tends to zero, as a result, $\norm{\vbeta_{1,k}}_2^2 \approx 1/d_s$. Thus, we can see that as the width increases, the convergence speed of $\tau_0 = d_m^{-1/2} $ could be faster. 
Lastly, we remark on the difference in the step size for these two scales, which can be seen from the definition of $C$ and its corresponding $c_1, c_2, c_3$ in \cref{sec:general}.

\subsection{Discussion for extension to deep \san{} and residual \san{}}
\label{sec:appendix_extension}
Extension from our shallow \san{} to deep \san{} is not technically difficult as they share the same analysis framework.
Nevertheless, the extension requires several tedious derivations and calculations involving the query, key, and value matrices along different layers. Here we point out the proof roadmap for this extension. The first step is following \cref{thm:general} to provide the sufficient condition for the convergence guarantee, e.g., \cref{equ:condition1,equ:condition2}. The second step is similar to \cref{coro:convergence_lecun}, where we need to verify the aforementioned assumptions for different initialization. 
In the second part, the main task is to prove the lower bound of 
$\alpha:=\svmin{\mF_{\mathrm{pre}}^0}$, where $\mF_{\mathrm{pre}}$ is the output of the last hidden layer. One can apply concentration inequality to bound the difference between $\svmin{\mF_{\mathrm{pre}}^0}$ and 
$\svmin{\mF_{\mathrm{pre}}^{\star 0}}$, where the latter is the corresponding limit in infinite width. Lastly, one needs to plug the lower bound into the assumptions in order to obtain the width requirement.

Our proof framework is general and can handle the following residual Transformer. 
Here we give a proof sketch to show how to achieve this.
Specifically, we consider the residual block in the self-attention layer:
$$
\bf{A}_1= \text{Self-attention}(\bf{X}) \triangleq \sigma_s \left( \tau_0 (\bf{X}{\bf{W}}_Q^\top) \left(\bf{X} \bf{W}_K ^\top\right)^\top \right) \left( \bf{X} \bf{W}_V^\top \right) + \bf{X}.
$$
As a result,  the output becomes $$f(\mX)= \tau_1 \vw_O^\top \sum_{i=1}^{d_s} \sigma_r
\left(\mW_V\mX^\top \vbeta_i + {(\mX^{(i:)})}^\top \right).$$
To prove the convergence of the above residual Transformer, the first part that will be modified is the proof for Proposition 1.
The formula for $\boldsymbol{f}_\mathrm{pre}$ becomes as follows:
$$\boldsymbol{f}_{pre}= \tau_1 \boldsymbol{w}_O^\top \sum_{i=1}^{d_s} \sigma_r
\left(\boldsymbol{W}_V\boldsymbol{X}^\top \boldsymbol{\beta}_i + {(\boldsymbol{X}^{(i:)})}^\top \right).$$  
\cref{lemma:lipsoftmax,lemma:softmaxbound} remain unchanged. In \cref{lemma:differencebound}, only the first step in the proof changes while the remaining part does not change because the term ${\boldsymbol{X}^{(i:)}}^\top$ in two adjacent time steps cancels out. In \cref{lemma:differenceboundlast}, we have
$$ || \tau_1
\sum_{i=1}^{d_s}
\sigma_r
\left(\boldsymbol{W}_V^{t'}\boldsymbol{X}_1^\top \boldsymbol{\beta}_{i,1}^{t'} 
{(\boldsymbol{X}^{(i:)})} \right) ||_2
\le
\tau_1 d_s \left(
||\boldsymbol{W}_V^{t'}||_2 d_s^{1/2} C_x+ C_x
\right).$$
Similarly, in the remaining lemmas, we need to add the additional term $C_x$ for the upper bound of $||{\boldsymbol{X}^{(i:)}}||_2$.

The second part is the proof for Theorem 1 regarding the lower bound for $\alpha_0$. By Weyl's inequality:
$$
\alpha_0 = \sigma_{\min}(\boldsymbol{F}_{\mathrm{pre}}) \ge 
\sigma_{\min}(\boldsymbol{F}_{\mathrm{pre}}^*) - ||\boldsymbol{F}_{\mathrm{pre}} - \boldsymbol{F}_{\mathrm{pre}}^* ||_2,
$$
where we denote by
$\boldsymbol{F}_{\mathrm{pre}} = [
\boldsymbol{f}_{\mathrm{pre}}(\boldsymbol{X}_1),\cdots,
\boldsymbol{f}_{\mathrm{pre}}(\boldsymbol{X}_N)
]$, and $\boldsymbol{F}_{\mathrm{pre}}^*$ is the corresponding one without the residual connection.
Then we can upper bound the second term can be bounded as follows:
\begin{equation}
    \begin{split}
        &
        ||\boldsymbol{F}_{\mathrm{pre}} - \boldsymbol{F}_{\mathrm{pre}}^* ||_2
\le
||\boldsymbol{F}_{\mathrm{pre}} - \boldsymbol{F}_{\mathrm{pre}}^* ||_F
\\ & \le
\sqrt{N} || \tau_1
\boldsymbol{w}_O^\top
\sum_{i=1}^{d_s}\sigma_r
\left(\boldsymbol{W}_V\boldsymbol{X}^\top \boldsymbol{\beta}_i + {(\boldsymbol{X}^{(i:)})}^\top \right) -
\tau_1
\boldsymbol{w}_O^\top
\sum_{i=1}^{d_s}\sigma_r
\left(\boldsymbol{W}_V\boldsymbol{X}^\top \boldsymbol{\beta}_i \right)
||_2
\\ &
\le\sqrt{N} \tau_1 d_s || \boldsymbol{w}_O ||_2 || \boldsymbol{W}_V ||_2 C_x.
    \end{split}
\end{equation}
The remaining step follows the same as previous analysis.

\subsection{Linear over-parametrization and attention module behaving as a pooling layer}
In this section, we discuss the link between the linear over-parametrization and attention module behaving as a pooling layer under the $d_m^{-1}$ scaling. First, due to the $d_m^{-1}$ scaling, the attention module degenerates to a pooling layer according to the law of large numbers. In this case, the nonlinearity on $\mX$ disappears and thus the minimum eigenvalue of $\boldsymbol{\Theta} \boldsymbol{\Theta}^\top$ can be estimated via $\mX \mX^\top$. Accordingly, this leads to the minimum eigenvalue in the order of $\Omega(N/d)$, and thus linear over-parameterization is enough.
\section{Proof for NTK}
In this section, we elaborate the proof for \cref{thm:ntk} in \cref{sec:proof_ntk}, the proof for \cref{thm:appendix_hessian} in \cref{sec:appendix_hessian}, respectively.
\label{sec:appendix_ntk}
\subsection{Proof of 
\texorpdfstring{\cref{thm:ntk}}{Lg}}
\label{sec:proof_ntk}
\begin{proof}
We will compute the inner product of the Jacobian of each weight separately.
Firstly, we analyze $\vw_O$. Let us denote by
$
    \vbeta_i := \sigma_s\left(
    \tau_0  
    \mX^{(i,:)}\mW_Q^\top \mW_K\mX^\top
    \right)^\top \in \R^{d_s}.$
Then:
\begin{equation*}
\begin{split}
    \frac{\partial f(\mX)}{\partial \vw_O}
=
    \tau_1\sum_{i=1}^{d_s}
    \sigma_r
    \left(
    \mW_V\mX^\top
    \vbeta_i
    \right)\,.
\end{split}
\end{equation*}
The inner product of the gradient is:
\begin{equation}
\begin{split}
&
    \lim _{d_m \rightarrow \infty}\left\langle\frac{\partial f(\mX)}{\partial \vw_O},\frac{\partial f(\mX^{\prime})}{\partial \vw_O}\right\rangle 
\\ & =
    \tau_1^2
    \sum_{i=1, j=1}^{d_s}
    \lim _{d_m \rightarrow \infty}
        \left(
    \sigma_r
    \left(\mW_V\mX^{\prime\top}\vbeta_j^\prime
    \right)
        \right)^\top
    \left(
    \sigma_r
    \left(\mW_V\mX^\top
    \vbeta_i
    \right)
        \right)
\\ & = 
d_s^2\mathbb{E}_{\vw \sim \mathcal{N}(\bm{0}, \bm{I} )}
        \left(
    \sigma_r
    \left(
    \vw^\top
    \mX^{\prime\top}\bm{1}_{d_s}
    \right)
        \right)
    \left(
    \sigma_r
    \left(
    \vw^\top
    \mX^\top
    \bm{1}_{d_s}
    \right)
        \right)\,,
\end{split}
\label{equ:ntkpart1}
\end{equation}
where the second equality uses the law of large numbers. 
Secondly, we analyze $\mW_Q$:
\begin{equation*}
\begin{split}
    \frac{
    \partial
    f(\mX)
    }
    {
    \partial W_Q^{(p,q)}
    }=
    \tau_0 \tau_1
    \sum_{i=1}^{d_s}
    \left(
    \vw_O \circ 
    \dot{\sigma_r}
    \left(\mW_V\mX^\top
    \vbeta_i
    \right)
        \right)^\top
    \mW_V\mX^\top
    \left(
      \text{diag}(\vbeta_i)-
        \vbeta_i \vbeta_i^\top
    \right)
    \mX \mW_K^\top \ve_p \ve_q^\top \mX^{(i,:)\top}\,.
\end{split}
\end{equation*}
Thus:
\begin{equation*}
\begin{split}
    \frac{
    \partial
    f(\mX)
    }
    {
    \partial \mW_Q
    }=
    \tau_0 \tau_1
    \sum_{i=1}^{d_s}
    \mW_k \mX^\top 
        \left(
      \text{diag}(\vbeta_i)-
        \vbeta_i \vbeta_i^\top
    \right) \mX \mW_V^\top
        \left(
    \vw_O \circ 
    \dot{\sigma_r}
    \left(\mW_V\mX^\top
    \vbeta_i
    \right)\right)\mX^{(i,:)}\,.
\end{split}
\end{equation*}
The inner product of the gradient is:
\begin{equation*}
\begin{split}
&
    \lim _{d_m \rightarrow \infty}\left\langle\frac{\partial f(\mX)}{\partial \mW_Q},\frac{\partial f(\mX^{\prime})}{\partial \mW_Q}\right\rangle  
\\ &= 
\lim _{d_m \rightarrow \infty}
\tau_0^2 \tau_1^2
\sum_{i=1, j=1}^{d_s}
\text{Trace}
\left(
    \mW_k \mX^\top 
        \left(
      \text{diag}(\vbeta_i)-
        \vbeta_i \vbeta_i^\top
    \right) \mX \mW_V^\top
        \left(
    \vw_O \circ 
    \dot{\sigma_r}
    \left(\mW_V\mX^\top
    \vbeta_i
    \right)\right)\mX^{(i,:)}
    \right. 
\\&
\qquad \qquad  \qquad  \qquad  \qquad
\left.
\mX^{(j,:)\top}
      \left(
    \vw_O \circ 
    \dot{\sigma_r}
    \left(\mW_V\mX^{\prime\top}
    \vbeta_j^\prime
    \right)\right)^\top \mW_V\mX^{\prime\top}
        \left(
      \text{diag}(\vbeta_j^\prime)-
        \vbeta_j^\prime \vbeta_j^{\prime\top}
    \right) 
    \mX^\prime \mW_k ^\top 
\right)
\\ &= 
\lim _{d_m \rightarrow \infty}
\sum_{i=1, j=1}^{d_s}
\mX^{(i,:)}\mX^{(j,:)\top}
\langle
\tau_0 \tau_1
\left(
    \mX^\top 
        \left(
      \text{diag}(\vbeta_i)-
        \vbeta_i \vbeta_i^\top
    \right) \mX \mW_V^\top
        \left(
    \vw_O \circ 
    \dot{\sigma_r}
    \left(\mW_V\mX^\top
    \vbeta_i
    \right)\right)
    \right., 
\\&
\qquad \qquad  \qquad  \qquad  \qquad
\left.
\tau_0 \tau_1
\mW_k ^\top \mW_k 
\mX^{\prime\top}
        \left(
      \text{diag}(\vbeta_j^\prime)-
        \vbeta_j^\prime \vbeta_j^{\prime\top}
    \right) 
    \mX^{\prime}
\mW_V^\top
      \left(
    \vw_O \circ 
    \dot{\sigma_r}
    \left(\mW_V\mX^{\prime\top}
    \vbeta_j^\prime
    \right)\right)
\right)
\rangle\,.
\end{split}
\end{equation*}
For the first term of the inner product, we have:
\begin{equation*}
    \begin{split}
 &\lim _{d_m \rightarrow \infty}   
 \tau_0 \tau_1 \mX^\top 
        \left(
      \text{diag}(\vbeta_i)-
        \vbeta_i \vbeta_i^\top
    \right) \mX \mW_V^\top
        \left(
    \vw_O \circ 
    \dot{\sigma_r}
    \left(\mW_V\mX^\top
    \vbeta_i
    \right)\right)
\\&=
\lim _{d_m \rightarrow \infty}
\mX^\top 
        \left(
      \text{diag}(\vbeta_i)-
        \vbeta_i \vbeta_i^\top
    \right) \mX
\begin{bmatrix} 
\tau_0 \tau_1
\sum_{k=1}^{d_m}
W_V^{(k,1)} w_O^{(k)} \dot{\sigma_r}
\left(\mW_V\mX^\top
    \vbeta_i
    \right)^{(k)} 
 \\ \vdots \\
 \tau_0 \tau_1
\sum_{k=1}^{d_m}
W_V^{(k,d)} w_O^{(k)} \dot{\sigma_r}
\left(\mW_V\mX^\top
    \vbeta_i
    \right)^{(k)} 
\end{bmatrix}
\\&=
\mX^\top 
\left(
      \text{diag}{(\bm{1}_{d_s})}-
        \bm{1}_{d_s} \bm{1}_{d_s}^\top \right) \mX
\begin{bmatrix} 
0
 \\ \vdots \\
0
\end{bmatrix}
=0\,,
    \end{split}
\end{equation*}
where the second equality is by the law of large numbers and $\mathbb{E} w =0$ for a random variable $w \sim \mathcal{N}(0,1)$. 
Thus:
$
    \lim _{d_m \rightarrow \infty}\left\langle\frac{\partial f(\mX)}{\partial \mW_Q},\frac{\partial f(\mX^{\prime})}{\partial \mW_Q}\right\rangle = 0
$. Similarly, 
$
    \lim _{d_m \rightarrow \infty}\left\langle\frac{\partial f(\mX)}{\partial \mW_K},\frac{\partial f(\mX^{\prime})}{\partial \mW_K}\right\rangle = 0
$.
Lastly, we analyze $\mW_V$:
\begin{equation*}
\begin{split}
    \frac{\partial f(\mX)}{\partial W_V^{(p,q)}} 
 & =  \tau_1\sum_{i=1}^{d_s}
     \left(
    \vw_O \circ 
    \dot{\sigma_r}
    \left(\mW_V\mX^\top
    \vbeta_i
    \right)
        \right)^\top\ve_p \ve_q^\top
        \mX^\top \vbeta_i.
\end{split}
\end{equation*}
Thus:
\begin{equation*}
\begin{split}
    \frac{\partial f(\mX)}{\partial \mW_V}
=
    \tau_1\sum_{i=1}^{d_s}
        \left(
    \vw_O \circ 
    \dot{\sigma_r}
    \left(\mW_V\mX^\top
    \vbeta_i
    \right)
        \right)
         \vbeta_i^\top\mX\,.
\end{split}
\end{equation*}
The inner product of the gradient is:
\begin{equation}
\begin{split}
&
    \lim _{d_m \rightarrow \infty}\left\langle\frac{\partial f(\mX)}{\partial \mW_V},\frac{\partial f(\mX^{\prime})}{\partial \mW_V}\right\rangle  
\\ &= 
\lim _{d_m \rightarrow \infty}
\tau_1^2
\sum_{i=1, j=1}^{d_s}
\text{Trace}{\left(
    \left(
    \vw_O \circ 
    \dot{\sigma_r}
    \left(\mW_V\mX^\top
    \vbeta_i
    \right)
        \right) \vbeta_i^\top\mX\mX^{\prime\top}\vbeta_j^\prime
        \left(
    \vw_O \circ 
    \dot{\sigma_r}
    \left(\mW_V\mX^{\prime\top}\vbeta_j^\prime
    \right)
        \right)^\top
\right)}
\\ & =
\lim _{d_m \rightarrow \infty}
\tau_1^2
\sum_{i=1, j=1}^{d_s}
        \left(
    \vw_O \circ 
    \dot{\sigma_r}
    \left(\mW_V\mX^{\prime\top}\vbeta_j^\prime
    \right)
        \right)^\top
    \left(
    \vw_O \circ 
    \dot{\sigma_r}
    \left(\mW_V\mX^\top\vbeta_i
    \right)
        \right)
        \vbeta_i^\top\mX\mX^{\prime\top}
                    \vbeta_j^\prime
\\&=
\lim _{d_m \rightarrow \infty}
\tau_1^2
\sum_{i=1, j=1}^{d_s}
\sum_{k=1}^{d_m}
(w_O^{(k)})^2 
\dot{\sigma_r}
    \left(\mW_V^{(k,:)}\mX^{\prime\top}\vbeta_j^\prime
    \right)
\dot{\sigma_r}
    \left(\mW_V^{(k,:)}\mX^\top\vbeta_i
    \right)
        \vbeta_i^\top\mX\mX^{\prime\top}
                  \vbeta_j^\prime
\\ & = 
d_s^2
\mathbb{E}_{\vw \sim \mathcal{N}(\bm{0}, \bm{I} )}
        \left(
    \dot{\sigma_r}
    \left(
    \vw^\top
    \mX^{\prime\top}\bm{1}_{d_s}
    \right)
        \right)
    \left(
    \dot{\sigma_r}
    \left(
    \vw^\top
    \mX^\top
    \bm{1}_{d_s}
    \right)
        \right)
\left(
\bm{1}_{d_s}^\top
        \mX
        \mX^{\prime\top}        \bm{1}_{d_s}
\right)\,,
\end{split}
\end{equation}
where the last equality uses the law of large numbers. 
\end{proof}
\subsection{NTK minimum eigenvalue}
\begin{lemma}
\label{lemma:phistar_lambdamin}
Given $\bm{\Phi}$ defined in \cref{equ:definitiontheta} and $\bm\Phi^\star$ defined in~\cref{thm:ntk}, denote $\lambda_*= \lambda_{\min}(\bm{\Phi}^{\star}\bm{\Phi}^{\star \top})$, and suppose the width satisfies 
$
d_m = \Omega(\frac{N^2  \sqrt{\log(2d^2N^2/\delta)}
}{\lambda_*^2})$
, then with probability at least $1-\delta$, one has 
$\norm{\bm\Phi \bm\Phi^\top-
\bm\Phi^\star \bm\Phi^{\star\top}}_\mathrm{F}
\leq \frac{\lambda_*}{4}
$ and $
\lambda_{\min}(\bm{\Phi} \bm{\Phi}^{\top}) \geq \frac{3\lambda_*}{4}
$.
\end{lemma}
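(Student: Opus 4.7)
The plan is to exploit the fact that under the NTK scaling $\tau_0 = d_m^{-1}$, the pre-softmax input in the definition of $\vbeta_{1,n}$ concentrates around $0$ as $d_m$ grows, so that each $\vbeta_{1,n}$ concentrates around the uniform vector $\frac{1}{d_s}\bm{1}_{d_s}$, driving $\bm{\Phi}$ towards its limit $\bm{\Phi}^{\star}$. The final spectral statement then follows from Weyl's inequality
\[
\lambda_{\min}(\bm{\Phi}\bm{\Phi}^{\top}) \;\ge\; \lambda_{\min}(\bm{\Phi}^{\star}\bm{\Phi}^{\star\top}) - \|\bm{\Phi}\bm{\Phi}^{\top} - \bm{\Phi}^{\star}\bm{\Phi}^{\star\top}\|_2 \;\ge\; \lambda_* - \tfrac{\lambda_*}{4} = \tfrac{3\lambda_*}{4},
\]
once the Frobenius deviation is controlled, so the core of the argument is to establish the bound $\|\bm{\Phi}\bm{\Phi}^{\top} - \bm{\Phi}^{\star}\bm{\Phi}^{\star\top}\|_{\mathrm{F}} \le \lambda_*/4$.

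\textbf{Step 1 (concentration of the softmax input).} For each pair $(n,j) \in [N]\times[d_s]$, the $j$-th entry of the pre-softmax input for sample $n$ equals $\tau_0 \big\langle \mW_Q \mX_n^{(1,:)\top}, \mW_K \mX_n^{(j,:)\top}\big\rangle = \frac{1}{d_m}\sum_{k=1}^{d_m} u_k v_k$, where the pairs $(u_k,v_k)$ are i.i.d.\ with $u_k, v_k$ independent, zero-mean Gaussians of variances $\|\mX_n^{(1,:)}\|_2^2$ and $\|\mX_n^{(j,:)}\|_2^2$ (bounded by $C_x^2$ by \cref{assumption:distribution_1}). Each product $u_k v_k$ is sub-exponential (\cref{sec:appendix_subexp}), so by Bernstein's inequality and a union bound over the $Nd_s$ entries, with probability at least $1-\delta/2$,
\[
\max_{n\in[N],\,j\in[d_s]} \big| \tau_0\,\mX_n^{(1,:)} \mW_Q^\top \mW_K \mX_n^{(j,:)\top} \big| \;\le\; c\, C_x^2 \sqrt{\tfrac{\log(N d_s/\delta)}{d_m}}.
\]

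\textbf{Step 2 (softmax Lipschitzness).} Since $\sigma_s(\bm{0}) = \frac{1}{d_s}\bm{1}_{d_s}$ and the Jacobian of softmax is $\mathrm{diag}(\vv) - \vv\vv^\top$ (\cref{lemma:jacobiansoftmax}) which has spectral norm at most $1$, a mean-value argument yields
\[
\Big\|\vbeta_{1,n} - \tfrac{1}{d_s}\bm{1}_{d_s}\Big\|_2 \;\le\; \big\|\tau_0\, \mX_n^{(1,:)}\mW_Q^\top \mW_K \mX_n^\top\big\|_2 \;\le\; c'\,\sqrt{d_s}\, C_x^2\sqrt{\tfrac{\log(N d_s/\delta)}{d_m}} =: \varepsilon_n,
\]
and on the good event the same $\varepsilon := \max_n \varepsilon_n$ works uniformly.

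\textbf{Step 3 (Frobenius bound).} Writing $\Delta_n := \vbeta_{1,n} - \frac{1}{d_s}\bm{1}_{d_s}$ and $\mM_{ij} := \mX_i \mX_j^\top$, a direct expansion gives
\[
[\bm{\Phi}\bm{\Phi}^\top]^{(ij)} - [\bm{\Phi}^\star\bm{\Phi}^{\star\top}]^{(ij)} = \tfrac{1}{d_s}\bm{1}_{d_s}^\top \mM_{ij} \Delta_j + \tfrac{1}{d_s}\Delta_i^\top \mM_{ij} \bm{1}_{d_s} + \Delta_i^\top \mM_{ij} \Delta_j.
\]
Using $\|\mM_{ij}\|_2 \le d_s C_x^2$ (via \cref{assumption:distribution_1}), each entry is bounded by $3 d_s C_x^2 \varepsilon$, so that
\[
\|\bm{\Phi}\bm{\Phi}^\top - \bm{\Phi}^\star\bm{\Phi}^{\star\top}\|_{\mathrm{F}} \;\le\; 3 N d_s C_x^2\, \varepsilon \;\le\; c''\,\frac{N\,\sqrt{\log(Nd_s/\delta)}}{\sqrt{d_m}},
\]
absorbing dimension constants. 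Imposing the assumed lower bound $d_m = \Omega(N^2 \sqrt{\log(2d^2 N^2/\delta)}/\lambda_*^2)$ makes the right-hand side at most $\lambda_*/4$.

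\textbf{Step 4 (conclusion).} Weyl's inequality then gives $\lambda_{\min}(\bm{\Phi}\bm{\Phi}^\top) \ge \lambda_* - \lambda_*/4 = 3\lambda_*/4$, completing both claims.

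\textbf{Main obstacle.} The delicate part is Step~1: although each softmax entry is sub-exponential with well-behaved parameters, we need the bound to hold \emph{uniformly} over all $Nd_s$ entries of all $N$ samples, so the Bernstein constants and the union-bound logarithmic factor must propagate correctly through Steps~2--3 to match the required width $\Omega(N^2/\lambda_*^2)$ up to logs, rather than a loose $\Omega(N^4/\lambda_*^2)$ that a naive entry-wise $\ell_\infty$ bound on $\bm{\Phi}\bm{\Phi}^\top - \bm{\Phi}^\star\bm{\Phi}^{\star\top}$ would yield.
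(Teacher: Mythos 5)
Your proposal is correct and follows essentially the same strategy as the paper's proof: (i) reduce the entrywise deviation of $\bm\Phi\bm\Phi^\top$ from $\bm\Phi^\star\bm\Phi^{\star\top}$ to the deviation $\|\vbeta_{1,n}-\tfrac{1}{d_s}\bm 1_{d_s}\|_2$; (ii) control that deviation by the softmax Lipschitz property and a Bernstein/sub-exponential concentration of the pre-softmax input under the $\tau_0=d_m^{-1}$ scaling; (iii) aggregate into a Frobenius bound; (iv) apply Weyl. The one genuine difference is \emph{where} the concentration is set up: you concentrate the scalar bilinear form $\tau_0\,\mX_n^{(1,:)}\mW_Q^\top\mW_K\mX_n^{(j,:)\top}$ directly (union bound over $Nd_s$ pre-softmax entries, then a Jacobian-based Lipschitz bound $\|\mathrm{diag}(\vv)-\vv\vv^\top\|_2\le 1$), whereas the paper first concentrates each of the $d^2$ entries of $\tau_0\mW_Q^\top\mW_K$ around zero, passes to its Frobenius norm, and then applies \cref{lemma:lipsoftmax}, which carries an extra $C_x^2 d_s$ from the data factors. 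Your route is a bit more direct and avoids the intermediate $\|\tau_0\mW_Q^\top\mW_K\|_{\mathrm F}$ step (saving some dimension factors in $d$ at the cost of a union bound in $N d_s$ instead of $d^2$), but both give the same $\widetilde\Omega(N^2/\lambda_*^2)$ scaling after absorbing constants and logs. Note that your "main obstacle" is indeed benign: both your derivation and the paper's actually produce a $\log$ rather than $\sqrt{\log}$ in the width requirement, a discrepancy with the lemma statement that is immaterial at the stated level of precision.
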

\begin{proof}
In the following content, the variable with $^\star$ indicates the corresponding variable with infinite width $d_m$.
According to the definition of $\bm\Phi$ in~\cref{equ:definitiontheta} and $\bm\Phi^\star$ in~\cref{thm:ntk}, for each entry in $\bm\Phi \bm\Phi^\top$ and $\bm\Phi^\star \bm\Phi^{\star\top}$, we have
\begin{equation}
\begin{split}
&
    |
    (\bm\Phi \bm\Phi^\top - \bm\Phi^\star \bm\Phi^{\star\top} )^{(n,r)}
    | 
=
    |
    \vbeta_{1,n}^\top\mX_n
    \mX_r^\top \vbeta_{1,r}-
    \vbeta_{1,n}^{\star\top}\mX_n
    \mX_r^\top \vbeta_{1,r}^{\star}    
    |
\\&\leq
    \norm{
    \mX_n \mX_r^\top (
    \vbeta_{1,n}-\vbeta_{1,n}^\star)}_2
    +
        \norm{
    \mX_n \mX_r^\top (
    \vbeta_{1,r}-\vbeta_{1,r}^\star)}_2
\leq
    C_x^2 d_s(
    \norm{\vbeta_{1,n}-\vbeta_{1,n}^\star}_2+
        \norm{\vbeta_{1,r}-\vbeta_{1,r}^\star}_2)\,.
\end{split}
\label{equ:betabound}
\end{equation}
Next, we will bound $\norm{
\vbeta_{1,n}-\vbeta_{1,n}^\star}_2$,
\begin{equation}
\begin{split} 
&\norm{\vbeta_{1,n}-\vbeta_{1,n}^\star}_2
=
\norm{
    \sigma_s\left( 
    \tau_0
    \mX_n^{(1,:)}\mW_Q^\top \mW_K\mX_n^\top
    \right)^\top
    -
    \sigma_s\left( 
    \tau_0
    \mX_n^{(1,:)}\mW_Q^{\star\top} \mW_K^\star\mX_n^\top
    \right)^\top
}_2
\\&\leq
    2 C_x^2 d_s \norm{\tau_0\mW_Q^\top \mW_K-\tau_0\mW_Q^{\star\top} \mW_K^\star}_\mathrm{F}
    \mbox{
    \quad 
   ( By \cref{lemma:lipsoftmax})\,.
    }
\end{split}
\end{equation}
Let first consider the absolute value of each element of $\tau_0\mW_Q^\top \mW_K-\tau_0\mW_Q^{\star\top} \mW_K^\star$, i.e.,
    $$
    |\tau_0\mW_Q^\top \mW_K-\tau_0\mW_Q^{\star\top} \mW_K^\star|^{(i,j)}
    =
    |
    \tau_0  \sum_{q=1}^{d_m}
    W_Q^{(q,i)} W_k^{(q,j)}
    -
    \tau_0  \sum_{q=1}^{d_m}
    W_Q^{\star(q,i)} W_k^{\star(q,j)}
    |.
    $$ Since $W_Q^{(q,i)} W_k^{(q,j)} \sim SE(\sqrt{2}\eta_{QK}, \sqrt{2}\eta_{QK})$ is sub-exponential random variable, by \cref{lemma_subexponential_scaled} and \cref{lemma_subexponential_sum}, $\tau_0  \sum_{q=1}^{d_m}
    W_Q^{(q,i)} W_k^{(q,j)}\sim 
    SE (\tau_0 \eta_{QK} \sqrt{2 d_m}, \sqrt{2} \tau_0 \eta_{QK})
    $,
    by Bernstein’s inequality, when $d_m \geq 2\log(2/\delta)$, the following inequality holds with probability at least $1-\delta$:
    \begin{equation}
      | \tau_0  \sum_{q=1}^{d_m}
    W_Q^{(q,i)} W_k^{(q,j)}
        -
    \tau_0  \sum_{q=1}^{d_m}
    W_Q^{\star(q,i)} W_k^{\star(q,j)}
    | \leq 
    2 \tau_0 \eta_{QK} \sqrt{d_m \log{(2/\delta)}} \,.
    \end{equation}
    Substituting $\delta = \frac{\delta^\prime}{d^2}$ and applying union bound, we can obtain that when  $d_m \geq 2\log{\frac{2d^2}{\delta^\prime}}$,  with probability at least $1-\delta^\prime$:
\begin{equation}
    \norm{\vbeta_{1,n}-\vbeta_{1,n}^\star}_2
    \leq
    4  
      \tau_0 \eta_{QK}  C_x^2 d_s d
    \sqrt{d_m  \log( 2d^2/\delta^\prime)}\,.
\end{equation}
Substituting back to \cref{equ:betabound}, the following inequality holds with the same width requirement and probability. 
$ |
    (\bm\Phi \bm\Phi^\top - \bm\Phi^\star \bm\Phi^{\star\top} )^{(n,r)}
    |  
    \leq
    8 \tau_0 \eta_{QK} C_x^4 d_s   d
    \sqrt{
    d_m \log( 2d^2/\delta^\prime)}
    $
Applying the union bound over the index $(n,r)$ for $n \in [N]$ and $r \in [N]$, and substituting $\delta^\prime = \frac{\delta^{\prime \prime}}{N^2}$, we obtain that when the width $d_m \geq 2\log{\frac{2d^2N^2}{\delta^{\prime\prime}}}$, the following inequality holds with probability at least $1-\delta^{\prime\prime}$:
$$
\norm{\bm\Phi \bm\Phi^\top-
\bm\Phi^\star \bm\Phi^{\star\top}}_\mathrm{F} =
\sqrt
{
\sum_{n=1}^N
\sum_{r=1}^N
 | 
     (\bm\Phi \bm\Phi^\top - \bm\Phi^\star \bm\Phi^{\star\top} )^{(n,r)}
 |^2
 }
\leq
    8 N \tau_0 \eta_{QK} C_x^4 d_s d  \sqrt{d_m  \log( 2d^2N^2/\delta^{\prime\prime})}
    \,.
$$
In the case of LeCun initialization
when
$
d_m = \Omega(\frac{N^2  \sqrt{\log(2d^2N^2/\delta^{\prime \prime})}
}{\lambda_0^2})$
, 
$\norm{\bm\Phi \bm\Phi^\top-
\bm\Phi^\star \bm\Phi^{\star\top}}_\mathrm{F}
\leq \frac{\lambda_*}{4}$. 
Lastly, one has:
$$\lambda_{\min}{(\bm\Phi \bm\Phi^\top)}
\geq
\lambda_*
-
\norm{\bm\Phi \bm\Phi^\top-
\bm\Phi^\star \bm\Phi^{\star\top}}_2
\geq
\lambda_*
-
\norm{\bm\Phi \bm\Phi^\top-
\bm\Phi^\star \bm\Phi^{\star\top}}_\mathrm{F}
\geq
\frac{3\lambda_*}{4}.
$$
where the first inequality is by Weyl's inequality.
One can easily check that the same result also holds for He initialization and NTK initialization.
\end{proof}

\begin{lemma}[]
\label{lemma:lambda_0}
Given the $\bm\Phi^\star$ defined in~\cref{thm:ntk}, 
then when  $N \geq \Omega(d^4)$, with probability at least $1-e^{-d}$ one has: 
$
\lambda_{\min}(\bm{\Phi}^{\star}\bm{\Phi}^{\star \top})
\ge \Theta(N/d)\, $.
\end{lemma}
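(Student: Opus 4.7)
The plan is to reduce the bound to a matrix concentration argument on the $d \times d$ dual Gram matrix, since $\bm\Phi^{\star\top}\bm\Phi^\star$ shares its nonzero spectrum with the $N \times N$ matrix $\bm\Phi^\star \bm\Phi^{\star\top}$ (and since $N \geq \Omega(d^4) \gg d$, the claim on $\lambda_{\min}$ can only be meaningful for this nonzero part, because $\bm\Phi^\star \bm\Phi^{\star\top}$ is rank deficient). Writing
\begin{equation*}
\bm\Phi^{\star\top}\bm\Phi^\star \;=\; \sum_{n=1}^N \vv_n\vv_n^\top, \qquad \vv_n := \tfrac{1}{d_s}\mX_n^\top \bm{1}_{d_s} \in \R^d,
\end{equation*}
we obtain a sum of i.i.d.\ rank-one PSD matrices, each bounded in operator norm by $\|\vv_n\|_2^2 \leq C_x^2/d_s$ by \cref{assumption:distribution_1}.

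Next, I would invoke Tropp's Matrix Chernoff inequality: for i.i.d.\ PSD matrices $\vM_n$ with $\|\vM_n\|_2 \leq R$ almost surely and $\mu := \lambda_{\min}(\E[\vM_n]) > 0$,
\begin{equation*}
\Pr\!\bigl[\lambda_{\min}\!\bigl(\tfrac{1}{N}\sum_n \vM_n\bigr) \leq (1-\epsilon)\mu\bigr] \;\leq\; d \exp\!\bigl(-c\,\epsilon^2 N\mu / R\bigr).
\end{equation*}
Under the data distribution implicit in the paper (i.i.d.\ samples with approximately isotropic token averages), the population second moment $\bm\Sigma := \E[\vv_n\vv_n^\top]$ is positive definite with $\lambda_{\min}(\bm\Sigma) = \Theta(1/d)$, which is the natural scale for $d$-dimensional mean vectors of norm $\Theta(1)$. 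Plugging in $R = \Theta(1)$, $\mu = \Theta(1/d)$, and $\epsilon = 1/2$ collapses the tail bound to $d\exp(-c'N/d)$, which is at most $e^{-d}$ as soon as $N \gtrsim d^2$; the stated $N \geq \Omega(d^4)$ absorbs logarithmic slack and any constant factors hidden in $R$ and $\mu$.

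The main obstacle is certifying $\lambda_{\min}(\bm\Sigma) = \Theta(1/d)$, which is not stated as an explicit hypothesis of the lemma. A sufficient route is to combine \cref{assumption:distribution_fullrank} with a mild isotropy condition on the token embeddings (guaranteeing $\E[\mX_n^\top \mX_n] \succeq c\, \bm I_d$ for some $c > 0$), after which averaging $\mX_n^\top \bm{1}_{d_s}$ and rescaling by $1/d_s$ yields exactly the $\Theta(1/d)$ scale. Once this population lower bound is pinned down, the remaining steps — applying Matrix Chernoff, matching exponents against the target probability $e^{-d}$, and transferring the conclusion back from $\bm\Phi^{\star\top}\bm\Phi^\star$ to the nonzero spectrum of $\bm\Phi^\star\bm\Phi^{\star\top}$ — are mechanical.
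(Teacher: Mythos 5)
Your core strategy matches the paper's in spirit: pass to the $d\times d$ Gram matrix $\bm\Phi^{\star\top}\bm\Phi^\star = \sum_{n}\vv_n\vv_n^\top$ and apply a concentration bound for its smallest eigenvalue. (Your remark that the $N\times N$ matrix is rank-deficient, so the claim must refer to the smallest \emph{nonzero} eigenvalue, is actually more careful than the paper's own proof, which writes $\lambda_{\min}(\bm\Phi^\star\bm\Phi^{\star\top}) = \lambda_{\min}(\bm\Phi^{\star\top}\bm\Phi^\star)$ without comment.) Where you diverge is in the execution, and the ``obstacle'' you identify is precisely what the paper handles through a hypothesis that you had no way of seeing from the lemma statement.

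The paper does not run Matrix Chernoff from scratch. It whitens: letting $\mZ^\star := \bm\Phi^\star\bm\Sigma^{-1/2}$, it uses
\begin{equation*}
\lambda_{\min}(\bm\Phi^{\star\top}\bm\Phi^\star) \;=\; \lambda_{\min}\!\left(\bm\Sigma^{1/2}\mZ^{\star\top}\mZ^\star\bm\Sigma^{1/2}\right) \;\ge\; \lambda_{\min}(\mZ^{\star\top}\mZ^\star)\,\lambda_{\min}(\bm\Sigma)\,,
\end{equation*}
and then invokes a black-box result (Proposition~4 of \citet{zhu2022generalization}) for sample Gram matrices with isotropic rows, which delivers $\lambda_{\min}(\mZ^{\star\top}\mZ^\star) \ge N/d - \mathcal{O}(N^{2/3}d^{1/3})$ with probability $1-e^{-d}$. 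The $N\ge\Omega(d^4)$ threshold is what makes the leading term $N/d$ dominate $N^{2/3}d^{1/3}$; it is not driven by the tail probability, which as you note would already be $e^{-d}$ at a weaker overparameterization. So a hand-rolled Chernoff argument such as yours could in principle land a cleaner width requirement, which is a legitimate (if different) route.

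On the missing hypothesis: the paper introduces \cref{assumption:distribution_3} in the appendix, which posits that $\bm\Sigma := \E[\vx\vx^\top]$ with $\vx = \sum_{i=1}^{d_s}(\mX^{(i,:)})^\top$ is positive definite, $\lambda_{\min}(\bm\Sigma)\ge C_\Sigma>0$. This is a genuine additional assumption, not derivable from \cref{assumption:distribution_1,assumption:distribution_fullrank}, exactly as you suspected. The whitening step is what lets the paper get by with only this constant lower bound rather than the exact $\Theta(1/d)$ isotropy scale your direct argument plugs into Matrix Chernoff: after whitening, the rows of $\mZ^\star$ are isotropic by construction, and the cited concentration result is applied to that normalized object. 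If you instead wanted to prove the lemma under your stated hypothesis, you should also correct a small slip: $\|\vv_n\|_2\le C_x$ (not $C_x/\sqrt{d_s}$) under \cref{assumption:distribution_1}, since $\|\mX_n^\top\bm 1_{d_s}\|_2 \le \sqrt{d_s}\|\mX_n\|_2 \le d_sC_x$; this still gives $R=\Theta(1)$, so the final conclusion is unaffected. Modulo importing the paper's assumption (or your own isotropy hypothesis in its place), your Chernoff route is sound and arguably more self-contained than the paper's reliance on an external proposition.
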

\begin{proof}
Firstly, let us define
$\mZ^\star = \bm\Phi^\star \bm\Sigma^{-1/2}$, then:
\begin{equation*}
\begin{split}  
&\lambda_{\min}({\bm\Phi^\star
    \bm\Phi^{\star\top}})
=
\lambda_{\min}({
    \bm\Phi^{\star\top}
    \bm\Phi^\star})
=
\lambda_{\min}({
\bm\Sigma^{1/2}
    \mZ^{\star\top}
    \mZ^\star
    \bm \Sigma^{1/2 \top}})
\\&\ge
\lambda_{\min}({
    \mZ^{\star\top}
    \mZ^\star})
\lambda_{\min}({\bm\Sigma})
\ge
\lambda_{\min}({
    \mZ^{\star\top}
    \mZ^\star})
C_\Sigma \,,
\end{split}
\end{equation*}
where the last inequality is by \cref{assumption:distribution_3}).
Note that we can reformulate $\mZ^\star$ by plugging $\bm\Phi^\star$ as follows:
$$
    \mZ^\star 
    =\bm\Phi^\star 
    \bm\Sigma^{-1/2}
    =
    \begin{bmatrix} 
    \frac{1}{d_s} 
    \sum_{i=1}^{d_s}
\mX_1^{(i,:)}\bm\Sigma^{-1/2}
\\ \vdots \\
    \frac{1}{d_s} 
    \sum_{i=1}^{d_s}
\mX_N^{(i,:)}\bm\Sigma^{-1/2}
\end{bmatrix}
\,.
$$
Recall that in \cref{assumption:distribution_3}, we define the random vector $\vx = \mX_n^{(i,:)\top}$ 
and write the covariance matrix for each feature of $\vx$ as $\bm\Sigma = \mathbb{E} [
\vx
\vx^\top
]$.
Here we further define the random vector 
$\vz=\bm\Sigma^{-1/2}\vx$
. Clearly, the covariance matrix for each feature of $\vz$ is $\bm\Sigma_z = \mathbb{E} [
\vz
\vz^\top
]= \mathbb{I}_{d}$. Therefore, by Proposition 4 of~\citet{zhu2022generalization}), we have
$
\lambda_{\min}({
    \mZ^{\star\top}
    \mZ^\star}) \ge \Theta(N/d)$, which finishes the proof.

\end{proof}

\begin{lemma}
\label{lemma:lambda_star}
Suppose the number of samples $N \geq \Omega(d^4)$ and the width $d_m = \tilde{\Omega}(N^2 \lambda_*^{-2})$, where $\lambda_*= \lambda_{\min}(\bm{\Phi}^{\star}\bm{\Phi}^{\star \top})$, then under \cref{assumption:distribution_3}, with probability at least $1-\delta-e^{-d}$, one has $\lambda_0=\evmin{\E_{\vw\distas{}\mathcal{N}(0,\eta_{V} \mathbb{I}_{d})} [\sigma_r(
\bm{\Phi} \vw)\sigma_r(
\bm{\Phi} \vw)^T]} \ge \Theta(\eta_V N/d)$.
\end{lemma}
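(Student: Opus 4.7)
The plan is to combine three earlier ingredients via a union bound: (i) a Hermite-expansion lower bound for the ReLU expectation matrix in terms of $\bm\Phi\bm\Phi^\top$, (ii) \cref{lemma:phistar_lambdamin} which transfers a lower bound from $\bm\Phi^\star\bm\Phi^{\star\top}$ to $\bm\Phi\bm\Phi^\top$ when $d_m$ is sufficiently large, and (iii) \cref{lemma:lambda_0} which lower bounds $\lambda_*=\lambda_{\min}(\bm\Phi^\star\bm\Phi^{\star\top})$ itself under \cref{assumption:distribution_3} and $N=\Omega(d^4)$.

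First, following exactly the Hermite-expansion argument reproduced in \cref{equ:hermite} from the proof of \cref{lemma:lambda_star_news} (and ultimately from \cite[Lemma 5.3]{nguyen2021tight}), I would write
\begin{equation*}
\lambda_0 \;=\; \lambda_{\min}\!\Big(\E_{\vw\sim\mathcal N(0,\eta_V \bm I_d)}\!\big[\sigma_r(\bm\Phi\vw)\sigma_r(\bm\Phi\vw)^{\!\top}\big]\Big) \;\geq\; \eta_V\,\mu(\sigma_r)^2\,\lambda_{\min}(\bm\Phi\bm\Phi^{\!\top}),
\end{equation*}
where $\mu(\sigma_r)>0$ is the first Hermite coefficient of ReLU. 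This reduces the task to a purely deterministic-type lower bound on $\lambda_{\min}(\bm\Phi\bm\Phi^{\!\top})$.

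Second, I would invoke \cref{lemma:phistar_lambdamin}: the hypothesized width $d_m=\tilde\Omega(N^2\lambda_*^{-2})$ matches the requirement there (absorbing the $\sqrt{\log(2d^2N^2/\delta)}$ factor into the $\tilde\Omega$ notation), so with probability at least $1-\delta$ we have $\lambda_{\min}(\bm\Phi\bm\Phi^{\!\top})\geq 3\lambda_*/4$. Third, I would invoke \cref{lemma:lambda_0}: under \cref{assumption:distribution_3} and $N\geq\Omega(d^4)$, with probability at least $1-e^{-d}$ we have $\lambda_*\geq \Theta(N/d)$. Chaining these two high-probability bounds gives $\lambda_{\min}(\bm\Phi\bm\Phi^{\!\top})\geq \Theta(N/d)$ with probability at least $1-\delta-e^{-d}$ by a union bound, and substituting back into the Hermite bound yields $\lambda_0\geq \Theta(\eta_V N/d)$.

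There is no serious analytic obstacle here, since the three ingredients are already proved earlier; the only thing to be careful about is bookkeeping. In particular, \cref{lemma:phistar_lambdamin} is stated with $\lambda_*$ on the right-hand side of the width condition, but the conclusion $\lambda_{\min}(\bm\Phi\bm\Phi^{\!\top})\geq 3\lambda_*/4$ is still useful only once $\lambda_*$ itself is known to be at least $\Theta(N/d)$ through \cref{lemma:lambda_0}; one should therefore condition on the good event from \cref{lemma:lambda_0} first, then apply \cref{lemma:phistar_lambdamin} with $\lambda_*\geq\Theta(N/d)$, so that the width hypothesis $d_m=\tilde\Omega(N^2\lambda_*^{-2})$ is simultaneously compatible with both statements. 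The $1-e^{-d}$ term comes purely from the randomness in \cref{lemma:lambda_0} while $1-\delta$ comes from the Bernstein-type concentration inside \cref{lemma:phistar_lambdamin}; the randomness in $\vw$ does not contribute, since the expectation has already been taken.
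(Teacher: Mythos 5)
Your proposal matches the paper's proof essentially verbatim: both start from the Hermite-expansion bound $\lambda_0 \geq \eta_V \mu(\sigma_r)^2 \lambda_{\min}(\bm\Phi\bm\Phi^{\!\top})$, then apply \cref{lemma:phistar_lambdamin} to transfer the bound from $\bm\Phi^\star\bm\Phi^{\star\top}$ to $\bm\Phi\bm\Phi^{\!\top}$, then apply \cref{lemma:lambda_0} to lower bound $\lambda_*$ itself, and finish with a union bound over the two failure events. Your bookkeeping caveat about conditioning on the event from \cref{lemma:lambda_0} first is a sound clarification that the paper leaves implicit, but there is no substantive difference in the argument.
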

\begin{proof}
By the Hermite expansion of $\sigma_r 
$
, one has:
\begin{equation*}
\begin{split}
\lambda_0
\geq 
\eta_V
\mu(\sigma_r)^{2}\lambda_{\min}(\bm{\Phi} \bm{\Phi}^{\top})\,,
\end{split}
\end{equation*}
where $\mu(\sigma_r)$ is the 1-st Hermite coefficient of ReLU satisfying $\mu(\sigma_r) >0$. 
By \cref{lemma:phistar_lambdamin},
, when the width satisfies
$d_m = 
\tilde{\Omega}(N^2 \lambda_* ^{-2})$, then with probability at least $1-\delta$, one has $\lambda_{\min}(\bm{\Phi} \bm{\Phi}^{\top}) \ge
\frac{3}{4}
\lambda_*
\,.
$
Furthermore, by \cref{lemma:lambda_0}, when  $N \geq \Omega(d^4)$ w.p. at least $1-e^{-d}$ one has $
\lambda_*
= \Theta(N/d)\,. $
Thus, with probability at least $1-\delta-e^{-d}$, one has:
$$
\lambda_0
\ge
    \frac{3}{4}
    \eta_V
    \mu(\sigma_r)^{2}
    \lambda_*
\ge
\Theta(\eta_V N/d)).
$$
\end{proof}

Below, we provide a lower bound for the minimum eigenvalue of NTK, which plays a key role in analyzing convergence, generalization bounds, and memorization capacity~\citep{arora2019fine,nguyen2021tight,montanari2022interpolation,bombari2022memorization}.

To prove this, we need the following assumption.

\begin{assumption}
\label{assumption:distribution_3}
Let $\vx = \sum_{i=1}^{d_s} (\mX^{(i,:)})^\top$,
$\bm\Sigma = \mathbb{E} [
\vx \vx^\top
]$, then we assume 
that $\bm\Sigma$ is positive definite, i.e.,
$\lambda_{\min}{(\bm\Sigma)} \ge C_\Sigma$ for some positive constant $C_\Sigma$.

\end{assumption}
\textbf {Remark:} 
 This assumption implies that the covariance matrix along each feature dimension is positive definite. In statistics and machine learning~\citep{liu2021kernel,liang2020just,vershynin12,hastie2022surprises}, the covariance of $\vx$ is frequently assumed to be an identity matrix or positive definite matrix.

\begin{lemma}[Minimum eigenvalue of limiting NTK]
\label{lemma:lambdamin_infinite}
Under \cref{assumption:distribution_1,assumption:distribution_3} and scaling $\tau_0 = d_m^{-1}$, when $N \geq \Omega(d^4)$, the minimum eigenvalue of $\mK$ can be lower bounded with probability at least $1-e^{-d}$ as:
$
    \lambda_{\min}(\mK) \geq 
    \mu
    (\sigma_r)^{2}
    \Omega(N/d)\,,
$
where $\mu(\sigma_r)$ represents the first Hermite coefficient of the ReLU activation function.
\end{lemma}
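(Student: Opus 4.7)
The plan is to exploit the decomposition of the limiting NTK $\mK$ given in \cref{thm:ntk} as a sum of two terms, both of which are positive semidefinite, and then lower bound $\lambda_{\min}(\mK)$ by the minimum eigenvalue of just the first summand. Concretely, since
\[
\mK = d_s^2\, \mathbb{E}_{\vw \sim \mathcal{N}(\bm 0, \bm I)}\!\left[\sigma_r(\bm\Phi^\star \vw)\sigma_r(\bm\Phi^\star \vw)^\top\right] + d_s^2\, \mathbb{E}_{\vw \sim \mathcal{N}(\bm 0, \bm I)}\!\left[\dot\sigma_r(\bm\Phi^\star \vw)\dot\sigma_r(\bm\Phi^\star \vw)^\top\right]\!\circ\!\bigl(\bm\Phi^\star\bm\Phi^{\star\top}\bigr),
\]
and the second term is a Hadamard product of two PSD matrices (hence PSD by the Schur product theorem), we immediately obtain
\[
\lambda_{\min}(\mK) \;\geq\; d_s^2 \cdot \lambda_{\min}\!\Bigl(\mathbb{E}_{\vw \sim \mathcal{N}(\bm 0, \bm I)}\!\left[\sigma_r(\bm\Phi^\star \vw)\sigma_r(\bm\Phi^\star \vw)^\top\right]\Bigr).
\]

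Next I would expand the ReLU activation in the Hermite basis, exactly as in the proof of \cref{lemma:lambda_star}. Keeping only the first Hermite coefficient $\mu(\sigma_r)>0$ and using that all other contributions produce PSD matrices, we get
\[
\mathbb{E}_{\vw}\!\left[\sigma_r(\bm\Phi^\star \vw)\sigma_r(\bm\Phi^\star \vw)^\top\right] \;\succeq\; \mu(\sigma_r)^2 \,\bm\Phi^\star \bm\Phi^{\star\top},
\]
so that $\lambda_{\min}(\mK) \geq d_s^2\, \mu(\sigma_r)^2\, \lambda_{\min}(\bm\Phi^\star \bm\Phi^{\star\top})$. Note that here we use $\bm\Phi^\star$, the \emph{limiting} version, rather than the finite-width $\bm\Phi$, so no concentration step (as in \cref{lemma:phistar_lambdamin}) is needed and there is no extra $\delta$-term in the probability.

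Finally, I invoke \cref{lemma:lambda_0} which, under \cref{assumption:distribution_3} and $N \geq \Omega(d^4)$, gives $\lambda_{\min}(\bm\Phi^\star \bm\Phi^{\star\top}) \geq \Theta(N/d)$ with probability at least $1-e^{-d}$. Chaining the inequalities yields
\[
\lambda_{\min}(\mK) \;\geq\; d_s^2\, \mu(\sigma_r)^2 \cdot \Omega(N/d) \;=\; \mu(\sigma_r)^2 \cdot \Omega(N/d),
\]
absorbing the $d_s^2$ constant into the $\Omega$ notation. The only potentially delicate step is the Hermite-based PSD ordering in the middle paragraph: one must justify carefully that dropping all higher-order Hermite terms preserves the inequality and that $\mu(\sigma_r)^2>0$ for ReLU (which is standard, e.g.\ $\mu(\sigma_r)=1/2$). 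The other two steps are essentially direct citations to \cref{thm:ntk} and \cref{lemma:lambda_0}, so the proof is short and mostly bookkeeping.
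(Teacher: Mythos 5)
Your proposal follows essentially the same route as the paper's proof: drop the second (PSD) summand of $\mK$ from \cref{thm:ntk}, lower bound the remaining term via the Hermite expansion of $\sigma_r$ keeping only the first coefficient $\mu(\sigma_r)$, and then invoke \cref{lemma:lambda_0} for $\lambda_{\min}(\bm\Phi^\star\bm\Phi^{\star\top}) \geq \Theta(N/d)$ under \cref{assumption:distribution_3} and $N \geq \Omega(d^4)$. The only slight difference is cosmetic — you explicitly justify discarding the second term via the Schur product theorem and track the $d_s^2 \geq 1$ prefactor before absorbing it, whereas the paper drops both without comment — so the argument is correct and matches the paper's.
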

\begin{proof}
By the Hermite expansion of $\sigma_r$
, we have:
\begin{equation*}
\begin{split}
&
    \lambda_{\min}(\mK) > \lambda_{\min}(
    \mathbb{E}_{\vw \sim \mathcal{N}(\bm{0}, \bm{I} )}
    \left(
    \sigma_r
    \left(
    \bm{\Phi}^\star
    \vw
    \right)
    \sigma_r
    \left(
    \bm{\Phi}^\star
    \vw
    \right)^\top
        \right)
)
\\& =
\lambda_{\min}\bigg(\sum_{s=0}^{\infty}\mu_{s}(\sigma_r)^{2}\bigcirc_{i=1}^{s}(\bm{\Phi}^\star \bm{\Phi} ^{\star\top})\bigg) \quad \text{\citep[Lemma D.3]{nguyen2020global}}
\\& \geq \mu(\sigma_r)^{2}
\lambda_{\min}(\bm{\Phi}^\star \bm{\Phi}^{\star\top})\,.
\end{split}
\end{equation*}
Note that when $N \geq \Omega(d^4)$, based on \cref{lemma:lambda_0} and the fact that  
$\bm{\Phi}^{\star} \bm{\Phi}^{\star\top}$ and $\bm{\Phi}^{\star\top}\bm{\Phi}^\star$ share the same non-zero eigenvalues, 
 with probability at least $1-e^{-d}$ one has:
\begin{equation*}
    \lambda_{\min}(\mK) \geq \mu(\sigma_r)^{2}(\frac{N}{d}-9N^{2/3}d^{1/3}) = \mu(\sigma_r)^{2}\Theta(N/d)\,,
\end{equation*}
which completes the proof.
\end{proof}
{\bf Remark:} The minimum eigenvalue of the NTK plays an important role in the global convergence, similar to $\alpha$ in \cref{thm:general}. 
By defining $\alpha\triangleq\svmin{\mF_{\mathrm{pre}}^0}$, under the over-parameterized regime with $d_m \ge N$, we have $\alpha^2 = \lambda_{\min}(({\mF_{\mathrm{pre}}^0)^\top \mF_{\mathrm{pre}}^0})$, which is the exact minimum eigenvalue of the empirical NTK with respect to the weight in the output layer.
\subsection{Proof of \texorpdfstring{\cref{thm:appendix_hessian}}{Lg}
}
\label{sec:appendix_hessian}
\begin{proof}
Before starting the proof, we introduce some useful lemmas that are used to analyze the randomness of initialized weight.

\begin{lemma}
\label{lemma:wl2}
Given an initial weight vector $\vw \in \R^{d_m}$ where each element is sampled independently from $\mathcal{N}(0, 1)$, for any $\tilde \vw$ such that $\|\tilde \vw-\vw\|_2 \le R$, with probability at least $1-2\exp(-{d_m}/{2})$, one has:
\begin{equation}
    \|\tilde \vw\|_2
\le 
    R+3\sqrt{d_m}\,.
    \end{equation}
\end{lemma}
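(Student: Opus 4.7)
The plan is to reduce the statement to a standard concentration bound on the Euclidean norm of an isotropic Gaussian vector, combined with the triangle inequality. First, by the triangle inequality,
\[
\|\tilde{\vw}\|_2 \;\le\; \|\tilde{\vw}-\vw\|_2 + \|\vw\|_2 \;\le\; R + \|\vw\|_2,
\]
so it suffices to prove that $\|\vw\|_2 \le 3\sqrt{d_m}$ with probability at least $1-2\exp(-d_m/2)$.

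To handle $\|\vw\|_2$, I would view $\vw$ as a $d_m \times 1$ Gaussian matrix and invoke \cref{thm:inequality_initialization} (the Vershynin/Corollary 5.35 bound already recalled in \cref{sec:appendix_usefullemma}) with $d_1 = d_m$, $d_2 = 1$, and $\zeta = \sqrt{d_m}$. This yields
\[
\|\vw\|_2 \;\le\; \sqrt{d_m} + 1 + \sqrt{d_m} \;\le\; 3\sqrt{d_m},
\]
valid for $d_m \ge 1$, with probability at least $1 - 2\exp(-d_m/2)$. Equivalently, one could directly appeal to \cref{lemma:gaussl2} with $d = 1$, which gives exactly $\|\vw\|_2 \le 3\sqrt{d_m}$ with the same probability.

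Combining these two steps on the same high-probability event gives $\|\tilde{\vw}\|_2 \le R + 3\sqrt{d_m}$, as claimed. There is no real obstacle here: the only subtlety is that the bound must hold uniformly over all $\tilde{\vw}$ in the radius-$R$ ball around $\vw$, but this uniformity is immediate from the triangle-inequality step since the Gaussian concentration event concerns only $\vw$ itself and the deterministic bound $\|\tilde{\vw}-\vw\|_2 \le R$ is assumed for every admissible $\tilde{\vw}$.
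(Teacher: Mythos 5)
Your proposal is correct and follows essentially the same route as the paper: the triangle inequality $\|\tilde{\vw}\|_2 \le \|\tilde{\vw}-\vw\|_2 + \|\vw\|_2$ combined with the Gaussian norm bound $\|\vw\|_2 \le 3\sqrt{d_m}$ from \cref{lemma:gaussl2} (itself an instance of \cref{thm:inequality_initialization} with $\zeta=\sqrt{d_m}$). Your extra care in spelling out the $d_1=d_m$, $d_2=1$ instantiation and the uniformity over the radius-$R$ ball is fine but adds nothing beyond the paper's two-line argument.
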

\begin{proof}[Proof of \cref{lemma:wl2}]
By triangle inequality and \cref{lemma:gaussl2}:
    \begin{equation}
    \|\tilde \vw\|_2
\le 
    \|\tilde \vw -\vw \|_2+    \|\vw\|_2
\le 
    R+3\sqrt{d_m}\,.
    \end{equation}
\end{proof}
Now we are ready to start our proof, in the following content, we avoid the tilde symbol ($\widetilde{\cdot}$) over the weight for simplicity.
Because we aim to study the effect of width ($d_m$), to simplify the proof, we set the embedding dimension of the input as one, we can write the output layer of the network into the following form:
\begin{equation}
\begin{split}
 f(\vx)
=\tau_0
  \sum_{i=1}^{d_s}
   \sigma_r \left(
 \sigma_s
    \left(
    \tau_1
    x^{(i)}
    \vw_Q^\top
    {\vw_K {\vx}^\top}
    \right)
    \left(
    \vw_V \vx{^\top}
    \right)
        \right)^\top   
        \vw_O
\end{split}
\end{equation}
The Hessian matrix $\mH$ of the network can be written as the following structure:
\begin{equation}\label{eq:hessianmatrix}
    \mH = \left(\begin{array}{cccc}
    \mH_{Q,Q} & \mH_{Q,K} & \mH_{Q,V} & \mH_{Q,O}\\
    \mH_{K,Q} & \mH_{K,K} & \mH_{K,V} & \mH_{K,O}\\
    \mH_{V,Q} & \mH_{V,K} & \mH_{V,V} & \mH_{V,O}\\
    \mH_{O,Q} & \mH_{O,K} & \mH_{O,V} & \mH_{O,O}\\
    \end{array}
    \right)\,,
\end{equation}
where we partition the Hessian $\mH$ of the network to each Hessian block e.g., $\mH_{Q,Q}=
    \frac{
    \partial^2 f}
    {
    \partial \vw_Q\partial \vw_Q
    },
\mH_{Q,K}=
    \frac{
    \partial^2 f}
    {
    \partial \vw_Q\partial \vw_K
    }
$ 
, etc.
Based on the triangle inequality of the norm, the spectral norm of the Hessian $\mH$ can be upper bounded by the sum of the spectral norm of each Hessian block.
We start by analyzing 
$
\mH_{Q,Q}. 
$ The time step $t$ is hidden for simplification.
\begin{equation}
\begin{split}
    \frac{
    \partial
    f(\vx)
    }
    {
    \partial w_Q^{(j)}
    }
&=
    \tau_0 \tau_1
    \sum_{i=1}    ^{d_s}
    {
   x^{(i)} w_K^{(j)} 
   \vw_V^\top
   \left(
       \vw_O
           \circ
       \dot{\sigma_r}\left( 
\vw_V\vx^\top\vbeta_i
    \right)
    \right)
   \vx^\top 
        \left(
        \text{diag}(\vbeta_i)-
        \vbeta_i \vbeta_i^\top
    \right)
    \vx
    }\,,
\end{split}
\label{eq:gradient_WQ}
\end{equation}
where the Jacobian of Softmax is given by \cref{lemma:softmaxdetivate}.

Next, we calculate the second-order derivative. Each element of the Hessian $\mH_{Q,Q}$ is
\begin{equation*}
\begin{split}
H_{Q,Q}^{(j,p)}=
    \frac{
    \partial^2 f(\vx)}
    {
    \partial w_Q^{(j)}\partial w_Q^{(p)}
    }
=    \tau_0^2 \tau_1
    \sum_{i=1}    ^{d_s}
    w_K^{(j)} w_K^{(p)}
    (x^{(i)})^2 
       \vw_V^\top
   \left(
       \vw_O
           \circ
       \dot{\sigma_r}\left( 
\vw_V\vx^\top\vbeta_i
    \right)
    \right)
    \vx^\top
    \bm \Gamma_{i}
    \vx\,,
\end{split}
\end{equation*}

where we denote by $\bm \Gamma_{i}=\bigg\{
\text{diag}\left(\vbeta_i \circ \vx - \vbeta_i \vbeta_i{^\top} \vx \right)
    -
    \left(
\text{diag}\left(\vbeta_i\right)-\vbeta_i \vbeta_i{^\top}
    \right)
    \vx \vbeta_i{^\top}
    -
    \vbeta_i \vx^\top
    \left(
\text{diag}\left(\vbeta_i\right)-\vbeta_i \vbeta_i{^\top}
    \right) \vx
\bigg\}\,, 
$
$\circ$ symbolizes Hadamard product.
According to initialization of $\vw_O$, $\vw_V$, $\vw_Q$, $\vw_K$ and~\cref{lemma:wl2}, with probability at least $1 - 8e^{-d_m/2}$, we have
\begin{equation*}
\begin{split}
    & \left \| \vw_V \right \|_2 \leq 3\sqrt{\eta_V d_m}+R,\quad \left \| \vw_O \right \|_2 \leq 3\sqrt{\eta_O d_m} +R, \\
    & \left \| \vw_Q \right \|_2 \leq 3\sqrt{\eta_{Q} d_m}+R,\quad\left \| \vw_K \right \|_2 \leq 3\sqrt{\eta_{K} d_m}+R\,.
\end{split}
\end{equation*}

The spectral norm of $\mH_{Q,Q}$ can be bounded with probability at least $1 - 8e^{-d_m/2}$:
\begin{equation*}
\begin{split}
\| \mH_{Q,Q}\|_2 
&= 
    \| 
\tau_0^2 \tau_1
    \sum_{i=1}    ^{d_s}
    (x^{(i)})^2 
       \vw_V^\top
   \left(
       \vw_O
           \circ
       \dot{\sigma_r}\left( 
\vw_V\vx^\top\vbeta_i
    \right)
    \right)
    \vx^\top
    \bm \Gamma_{i}
    \vx
    \vw_K \vw_K^\top
    \|_2 
\\ & \leq 
           |  \tau_0^2 \tau_1
    \sum_{i=1}    ^{d_s}
    (x^{(i)})^2 
       \vw_V^\top
   \left(
       \vw_O
           \circ
       \dot{\sigma_r}\left( 
\vw_V\vx^\top\vbeta_i
    \right)
    \right)
    \vx^\top
    \bm \Gamma_{i}
    \vx |
    \|
    \vw_K \vw_K^\top\|_2
    \quad \text{(Homogeneity of norm)}
\\ & \leq
    |     
    \tau_0^2 \tau_1
    \sum_{i=1}    ^{d_s}
    (x^{(i)})^2 
    \vx^\top
    \bm \Gamma_{i}
    \vx
    |
   \|
    \vw_O\|_2
   \|
    \vw_V\|_2
    \|\vw_K\|_2^2
    \quad \text{(Cauchy–Schwarz inequality)}
\\ & \leq
    \tau_0^2 \tau_1
    |        \sum_{i=1}    ^{d_s}
    (x^{(i)})^2
    \vx^\top
    \bm \Gamma_{i}
    \vx |
    \left(3\sqrt{\eta_{O}d_m}+R \right) 
        \left(3\sqrt{\eta_{V}d_m}+R \right) 
    (3\sqrt{\eta_{Q}d_m}+R)  (3\sqrt{\eta_{K}d_m}+R)
    \quad \text{(\cref{lemma:wl2})}
\\ &=
 \mathcal{O}(1/{\sqrt{d_m}})\,,
\end{split}
\end{equation*}
where the last equality is by the bound of $\norm{\bm \Gamma_{i}}_2$, by triangle inequality:
\begin{equation}
\begin{split}
\label{equ:bound_gamma}
\norm{\bm \Gamma_{i}}_2
&\leq
\norm{\text{diag}\left(\vbeta_i \circ \vx - \vbeta_i \vbeta_i{^\top} \vx \right)
    }_2
+
    \norm{
    \left(
\text{diag}\left(\vbeta_i\right)-\vbeta_i \vbeta_i{^\top}
    \right)
    \vx \vbeta_i{^\top}}_2
+
\norm{
    \vbeta_i \vx^\top
    \left(
\text{diag}\left(\vbeta_i\right)-\vbeta_i \vbeta_i{^\top}
    \right) \vx
    }_2\,.
\end{split}
\end{equation}
For the first part of \cref{equ:bound_gamma}, we have 
\begin{equation*}
\begin{split}
&\norm{\text{diag}\left(\vbeta_i \circ \vx - \vbeta_i \vbeta_i{^\top} \vx \right)
    }_2 
    \leq
\norm{\text{diag}\left(\vbeta_i \circ \vx - \vbeta_i \vbeta_i{^\top} \vx \right)
    }_2 
\leq
\norm{\left(\vbeta_i \circ \vx - \vbeta_i \vbeta_i{^\top} \vx \right)
    }_{\infty}
\\& \leq
\norm{\left(\vbeta_i \circ \vx \right)
    }_{\infty}
    + \norm{\vbeta_i \vbeta_i{^\top} \vx}_{\infty}   
\leq
\norm{ \vx
    }_{\infty}
    + |\vbeta_i{^\top} \vx|
\leq
\norm{ \vx
    }_2
    + \norm{\vbeta_i}_2\norm{ \vx}_2
\leq 2C_x\,.
\end{split}
\end{equation*}
For the second part of \cref{equ:bound_gamma}, we have 
\begin{equation*}
\begin{split}
&\norm{
    \left(
\text{diag}\left(\vbeta_i\right)-\vbeta_i \vbeta_i{^\top}
    \right)
    \vx \vbeta_i{^\top}}_2
\leq
\left(
\norm{\vbeta_i}_\infty+
\norm{
\vbeta_i \vbeta_i{^\top}
    }_2 
    \right)
\norm{\vx \vbeta_i{^\top}}_2
\\&\leq
2
\norm{\vx \vbeta_i{^\top}}_2
\leq
2
\norm{\vx}_2
\norm{\vbeta_i{^\top}}_2
\leq
2C_x\,.
\end{split}
\end{equation*}
For the third part of \cref{equ:bound_gamma}, we have 
\begin{equation*}
\begin{split}
&\norm{
    \vbeta_i \vx^\top
    \left(
\text{diag}\left(\vbeta_i\right)-\vbeta_i \vbeta_i{^\top}
    \right) \vx
    }_2
\leq
\norm{
    \vbeta_i \vx^\top}_2
    \norm{
    \left(
\text{diag}\left(\vbeta_i\right)-\vbeta_i \vbeta_i{^\top}
    \right) }_2 \norm{\vx}_2\,.
\end{split}
\end{equation*}
Next, we analyze the Hessian 
$\mH_{Q,K}$, where each element is:
\begin{equation*}
\begin{split}
H_{Q,K}^{(j,p)}=
    \frac{
    \partial^2 f}
    {
    \partial w_Q^{(j)}\partial w_K^{(p)}
    }
=
     \frac{
    \partial^2 f(\vx)}
    {
    \partial w_Q^{(j)}\partial w_Q^{(p)}
    }
=    \tau_0^2 \tau_1
    \sum_{i=1}    ^{d_s}
    w_K^{(j)} w_Q^{(p)}
    (x^{(i)})^2 
       \vw_V^\top
   \left(
       \vw_O
           \circ
       \dot{\sigma_r}\left( 
\vw_V\vx^\top\vbeta_i
    \right)
    \right)
    \vx^\top
    \bm \Gamma_{i}
    \vx\,.
\end{split}
\end{equation*}
Due to the symmetry, similar to $\mH_{Q,Q}$, the spectral norm of $\mH_{Q,K}$ can be bounded with probability at least $1 - 8e^{-d_m/2}$: 
$
\| \mH_{Q,K}\|_2 =
\mathcal{O}(1/{\sqrt{d_m})}
$.

Next, we analyze the Hessian
$\mH_{Q,V}$, where each element is:
\begin{equation*}
\begin{split}
H_{Q,V}^{(j,p)}=
    \frac{
    \partial^2 f}
    {
    \partial w_Q^{(j)}\partial w_V^{(p)}
    }
=
\tau_0 \tau_1
    \sum_{i=1}    ^{d_s}
    w_K^{(j)}
    x^{(i)} 
       w_O^{(p)}
       \dot{\sigma_r}\left( 
w_V^{(p)}\vx^\top\vbeta_i
    \right)
    \vx^\top
    \left(
    \text{diag}{(\vbeta_i)}-
    \vbeta_i \vbeta_i^\top
    \right)
    \vx\,.
\end{split}
\end{equation*}
The spectral norm of $\mH_{Q,V}$ can be bounded with probability at least $1 - 8e^{-d_m/2}$:
\begin{equation*}
\begin{split}
\| \mH_{Q,V}\|_2 
&= 
    \| 
\tau_0 \tau_1
    \sum_{i=1}    ^{d_s}
    x^{(i)} 
       \dot{\sigma_r}\left( 
w_V^{(p)}\vx^\top\vbeta_i
    \right)
    \vx^\top
    \left(
    \text{diag}{(\vbeta_i)}-
    \vbeta_i \vbeta_i^\top
    \right)
    \vx
    \vw_K {\vw_O}^\top
    \|_2 
\\ & 
\leq
    | 
\tau_0 \tau_1
    \sum_{i=1}    ^{d_s}
    x^{(i)} 
       \dot{\sigma_r}\left( 
w_V^{(p)}\vx^\top\vbeta_i
    \right)
    \vx^\top
    \left(
    \text{diag}{(\vbeta_i)}-
    \vbeta_i \vbeta_i^\top
    \right)
    \vx
    |
    \|
    \vw_K {\vw_O}^\top
    \|_2 
    \quad \text{(Homogeneity of norm)}
\\ & \leq
\tau_0 \tau_1
    \sum_{i=1}    ^{d_s}
    |x_i|
    \|
    \text{diag}{(\vbeta_i)}-
    \vbeta_i \vbeta_i^\top
    \|_2
        \|\vx\|_2^2
    \|\vw_K\|_2
    \|\vw_O\|_2
    \quad \text{(Cauchy–Schwarz inequality)}
\\ & \leq
 \tau_0 \tau_1
    \sum_{i=1}    ^{d_s}
    |x_i|
    \|
    \text{diag}{(\vbeta_i)}-
    \vbeta_i \vbeta_i^\top
    \|_2
        \|\vx\|_2^2
        \left(3\sqrt{\eta_{QK}d_m}+R \right) \left(3\sqrt{\eta_{O}d_m}+R \right)
    \quad \text{(\cref{lemma:wl2})}
\\ &=
 \mathcal{O}(1/{d_m})
 \,,
\end{split}
\end{equation*}
where the last step is by Weyl’s inequality and the range of the output of Softmax (from zero to one):
\begin{equation*}
\left(
\min_j(\beta_{i}^{(j)})+\| \bm \vbeta_i \|_2^2
\right)^2
\leq  
\|
    \text{diag}(\vbeta_i)-   \vbeta_i \vbeta_i^\top \|_2^2 \leq 
\left(
\max_j(\beta_{i}^{(j)}) + \| \bm \vbeta_i \|_2^2
\right)^2
= \mathcal{O}(1)\,.
\end{equation*}
It suffices to bound  $\|\vbeta_i\|_2$, i.e.,
$\|\sigma_s\left(x^{(i)} \vx \vw_Q^\top\vw_K\right)\|_2$.
Since the range of each element of the output of Softmax is from $0$ to $1$ and the sum of is one,
$\|\vbeta_i\| \leq 1
$.

Next, we analyze the Hessian
$\mH_{Q,O}$, where each element is:
\begin{equation*}
\begin{split}
H_{Q,O}^{(j,p)}=
    \frac{
    \partial^2 f}
    {
    \partial w_Q^{(j)}\partial w_V^{(p)}
    }
=
\tau_0 \tau_1
    \sum_{i=1}    ^{d_s}
    w_K^{(j)}
    x^{(i)} 
       w_V^{(p)}
       \dot{\sigma_r}\left( 
w_V^{(p)}\vx^\top\vbeta_i
    \right)
    \vx^\top
    \left(
    \text{diag}{(\vbeta_i)}-
    \vbeta_i \vbeta_i^\top
    \right)
    \vx\,.
\end{split}
\end{equation*}
The spectral norm of $\mH_{Q,V}$ can be bounded with probability at least $1 - 8e^{-d_m/2}$:
\begin{equation*}
\begin{split}
\| \mH_{Q,O}\|_2 
&= 
    \| 
\tau_0 \tau_1
    \sum_{i=1}    ^{d_s}
    x^{(i)} 
       \dot{\sigma_r}\left( 
w_V^{(p)}\vx^\top\vbeta_i
    \right)
    \vx^\top
    \left(
    \text{diag}{(\vbeta_i)}-
    \vbeta_i \vbeta_i^\top
    \right)
    \vx
    \vw_K {\vw_V}^\top
    \|_2 
\\ & 
\leq
    | 
\tau_0 \tau_1
    \sum_{i=1}    ^{d_s}
    x^{(i)} 
       \dot{\sigma_r}\left( 
w_V^{(p)}\vx^\top\vbeta_i
    \right)
    \vx^\top
    \left(
    \text{diag}{(\vbeta_i)}-
    \vbeta_i \vbeta_i^\top
    \right)
    \vx
    |
    \|
    \vw_K {\vw_V}^\top
    \|_2 
    \quad \text{(Homogeneity of norm)}
\\ & \leq
\tau_0 \tau_1
    \sum_{i=1}    ^{d_s}
    |x_i|
    \|
    \text{diag}{(\vbeta_i)}-
    \vbeta_i \vbeta_i^\top
    \|_2
        \|\vx\|_2^2
    \|\vw_K\|_2
    \|\vw_V\|_2
    \quad \text{(Cauchy–Schwarz inequality)}
\\ & \leq
 \tau_0 \tau_1
    \sum_{i=1}    ^{d_s}
    |x_i|
    \|
    \text{diag}{(\vbeta_i)}-
    \vbeta_i \vbeta_i^\top
    \|_2
        \|\vx\|_2^2
        \left(3\sqrt{\eta_{QK}d_m}+R \right) \left(3\sqrt{\eta_{V}d_m}+R \right)
    \quad \text{(\cref{lemma:wl2})}
\\ &=
 \mathcal{O}(1/{d_m})
 \,.
\end{split}
\end{equation*}

Next, we analyze the Hessian
$\mH_{O,V}$.
Each element of $\mH_{O,V}$ is:
\begin{equation*}
\begin{split}
H_{OV}^{(p,j)}=
    \frac{
    \partial^2 f(\vx)}
    {
\partial w_O^{(p)}    \partial w_V^{(j)}
    }
=   \tau_1
    \sum_{i=1}    ^{d_s}  \dot{\sigma_r}\left(
    w_V^{(p)}
    \vbeta_i^\top \vx
    \right)
    \vx^\top \vbeta_i
    \mathbbm{1}_{\{j=p\}}
    \,.
\end{split}
\end{equation*}
Note that $\mH_{O,V}$ is a diagonal matrix. Consequently, the spectral norm of $\mH_{O,V}$ is:
\begin{equation*}
\|\mH_{O,V}\|_2 = 
\max_{j\in [d_m]} |H_{O,V}^{(jj)}| 
\leq  |  \tau_1  \sum_{i=1}    ^{d_s}
    \vx^\top \vbeta_i| 
\leq \tau_1
      \sum_{i=1}    ^{d_s}
    \|\vx\|_2
    \|\vbeta_i\|_2
=\mathcal{O}(1/{d_m})\,.
\end{equation*}

Next, since each element of $\mH_{O,O}$ and $\mH_{V,V}$ is zero, the corresponding spectral norm is zero.
Lastly, due to the symmetry of $\vw_K$ and $\vw_Q$, we can obtain the same bound for the corresponding Hessian block. Thus, the spectral norm of Hessian $\mH$ is upper bounded by $\mathcal{O}(1/{\sqrt{d_m}})$, which completes the proof.

\end{proof}

\begin{figure*}[!tbh]
\centering
    \subfloat[LeCun initialization, $\gamma=1$.]{\label{f1_com}\includegraphics[width=0.3\textwidth]{figures/f1sqrtk.pdf}}
    \subfloat[He initialization, $\gamma=1$.]{\label{f2_com}\includegraphics[width=0.3\textwidth]{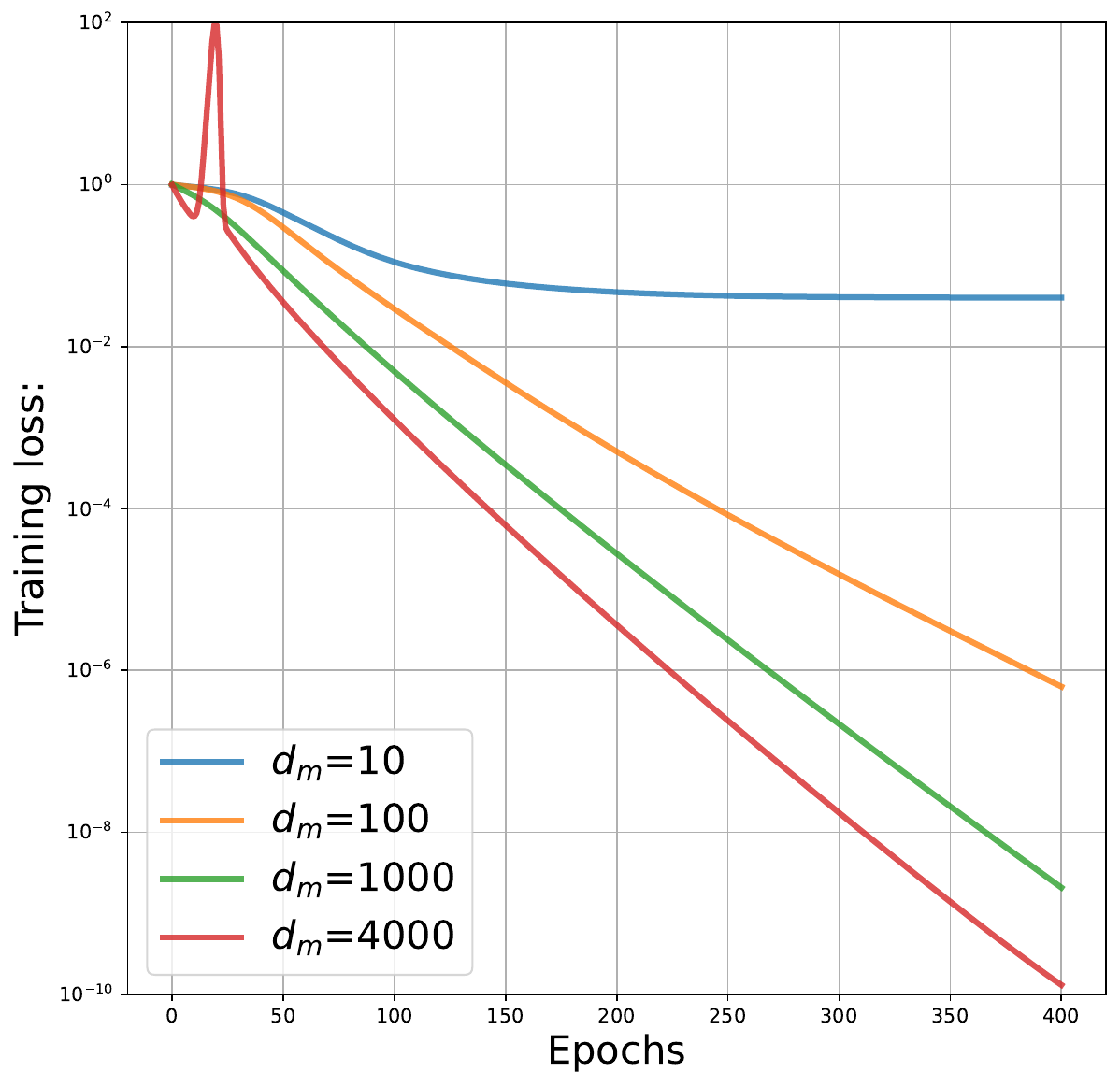}}
\\
    \subfloat[NTK initialization, $\gamma=1$.]
{\label{f3_com}\includegraphics[width=0.3\textwidth]{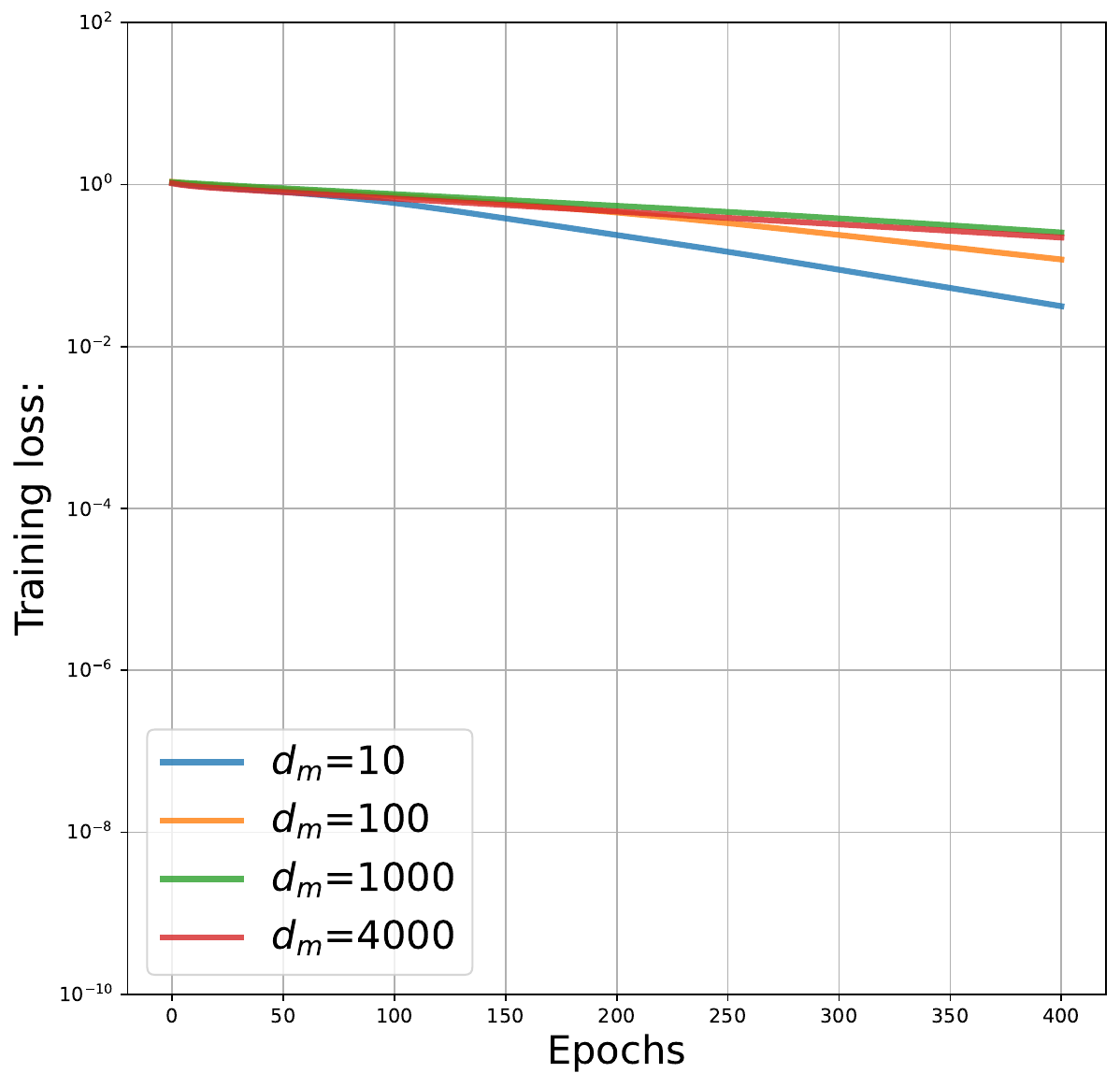}}
    \subfloat[NTK initialization, $\gamma=10$.]
{\label{f4_com}\includegraphics[width=0.3\textwidth]{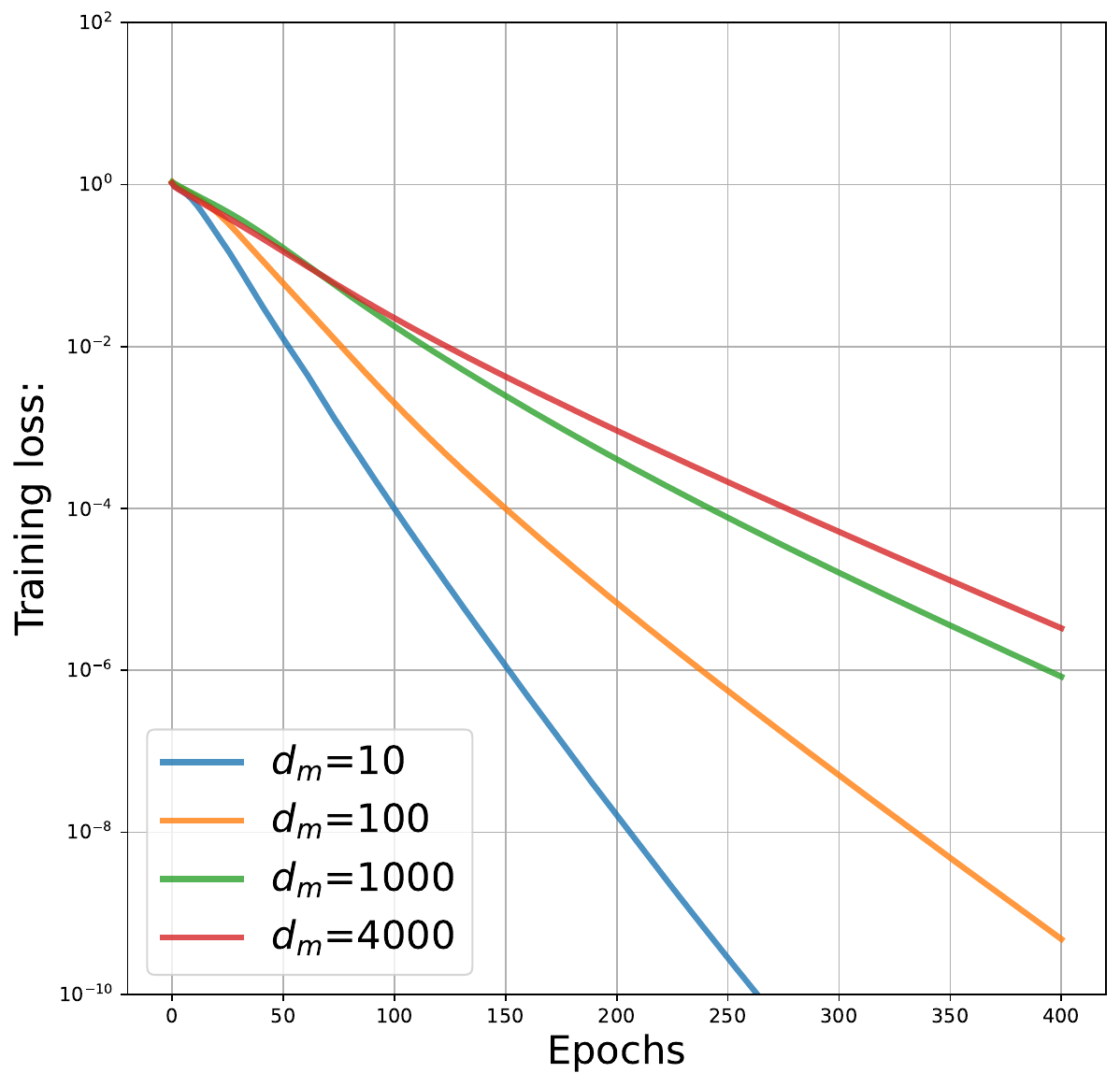}}
\caption{Comparison of convergence curve for different initialization schemes. The convergence speed under Lecun/He initialization is generally faster than that of NTK initialization with the same step size.}
\label{fig:convergenceDifferentInit}
\end{figure*}
\section{Additional details on experiments}
\label{sec:appendix_exp}
\subsection{Additional result in \cref{sec:exp_randomdata}}
\vspace{-2mm}
\label{sec:appendix_exp_differentinit}
In this section, we present the convergence curve for different initialization schemes following the setup in \cref{sec:exp_randomdata}. The results in \cref{f1_com,f2_com,f3_com} show that when using the same step size $\gamma=1$ for LeCun and He initialization exhibits similar behavior while NTK initialization leads to slower convergence. This is consistent with our theoretical finding. We also depict the result with larger step size $\gamma=10$ for NTK initialization in \cref{f4_com}, following our analysis in \cref{sec:appendix_discussion}.
\begin{wrapfigure}{r}{0.35\textwidth}
    \begin{center}
\includegraphics[width=0.35\textwidth]{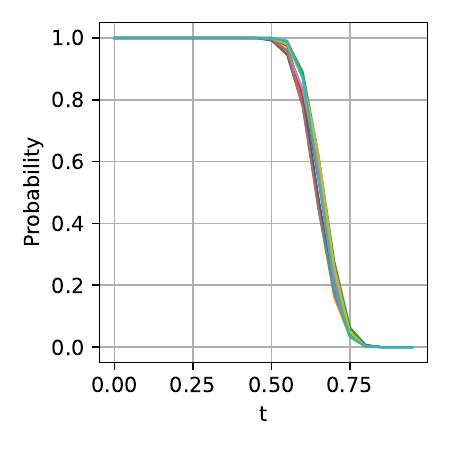}
        \vspace{-10mm}
        \caption{Verification of \cref{assumption:distribution_4} on MNIST dataset.}
    \label{fig:assumption_mnist}
    \vspace{-2mm}
    \end{center}
\end{wrapfigure}
\subsection{Additional result and set-up in 
\cref{sec:exprealworld}
}
\label{sec:appendix_exp_assumption}
We have validated \cref{assumption:distribution_4} on language data in the main paper. Here we will validate it on image dataset. Specifically, we choose MNIST dataset and consider the embedding in the ViT as $\mX$, then we plot $\mathbb{P}\left(\left|\left\langle \mX_n^\top \mX_{n}, \mX_{n'}^\top \mX_{n'} \right\rangle\right| \geq t\right)$ as $t$ increases in \cref{fig:assumption_mnist}, where we could see exponential decay. 
The architecture of ViT is the same as in \cref{sec:exprealworld}.

Additionally, in \cref{fig:testacc}, we can see that $d_m^{-1/2} $setting achieves faster convergence in training loss and higher test accuracy.
In \cref{fig:reg}, we conduct the experiment on MNIST with regression task by using the same architecture. We take two classes $\{0,1\}$ in MNIST for binary classification, the output of our regression is taken by $\text{sign}(f)$, and the training error is given by $\sum_{n=1}^N [\text{sign}(f(\mX_n)) - y_n]^2$. We can see that the convergence speed of $d_m^{-1/2}$ is faster than that of $d_m^{-1}$. Overall, these separation results on both classification and regression tasks provide good justification for the $d_m^{-1/2}$ scaling.
\begin{figure}[!t]
\centering\includegraphics[width=0.6\textwidth]{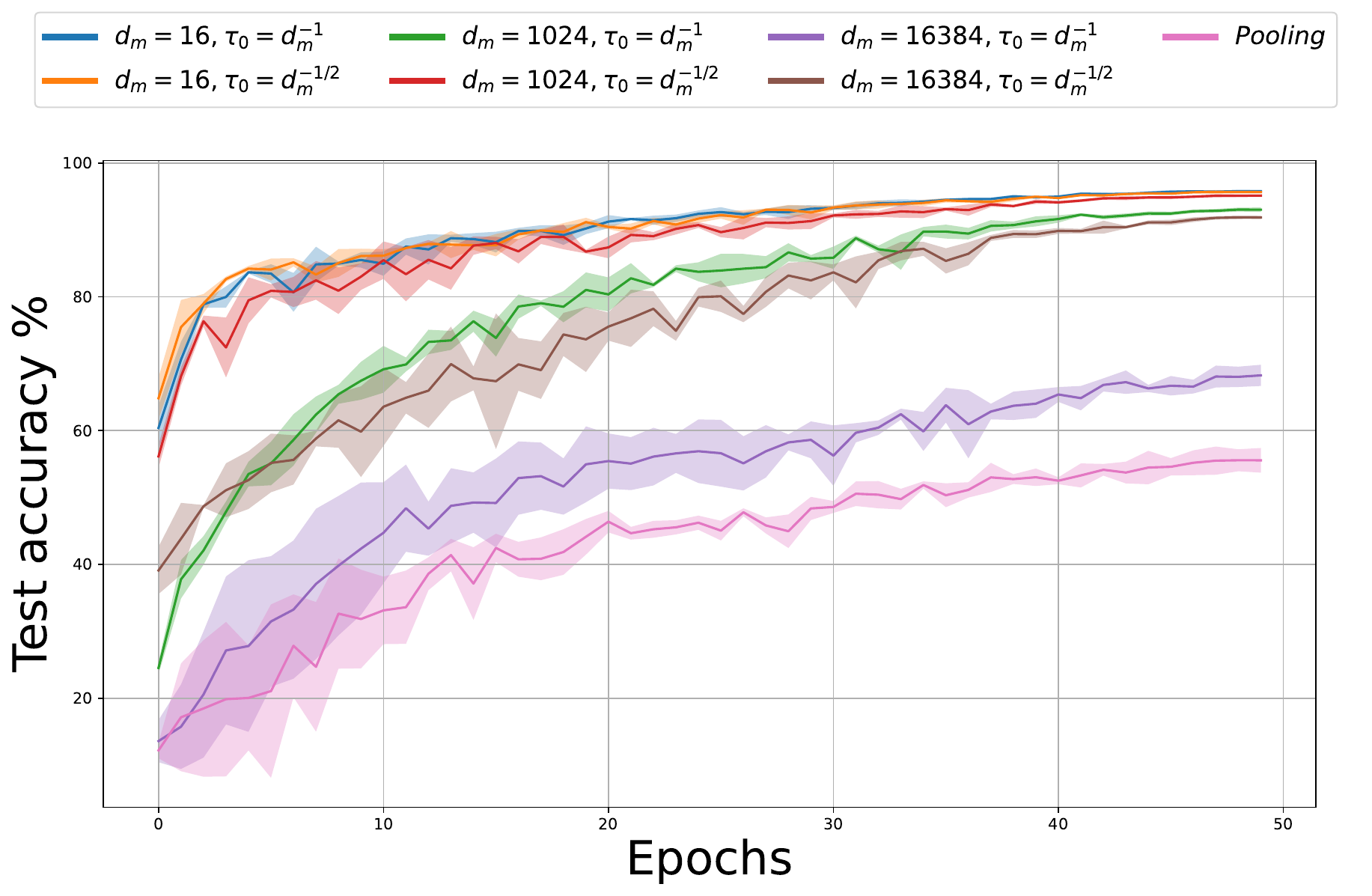}
\caption{
Test accuracy of classification task on MNIST dataset.}
\label{fig:testacc}
\end{figure}
\begin{figure}[!t]
\centering\includegraphics[width=0.6\textwidth]{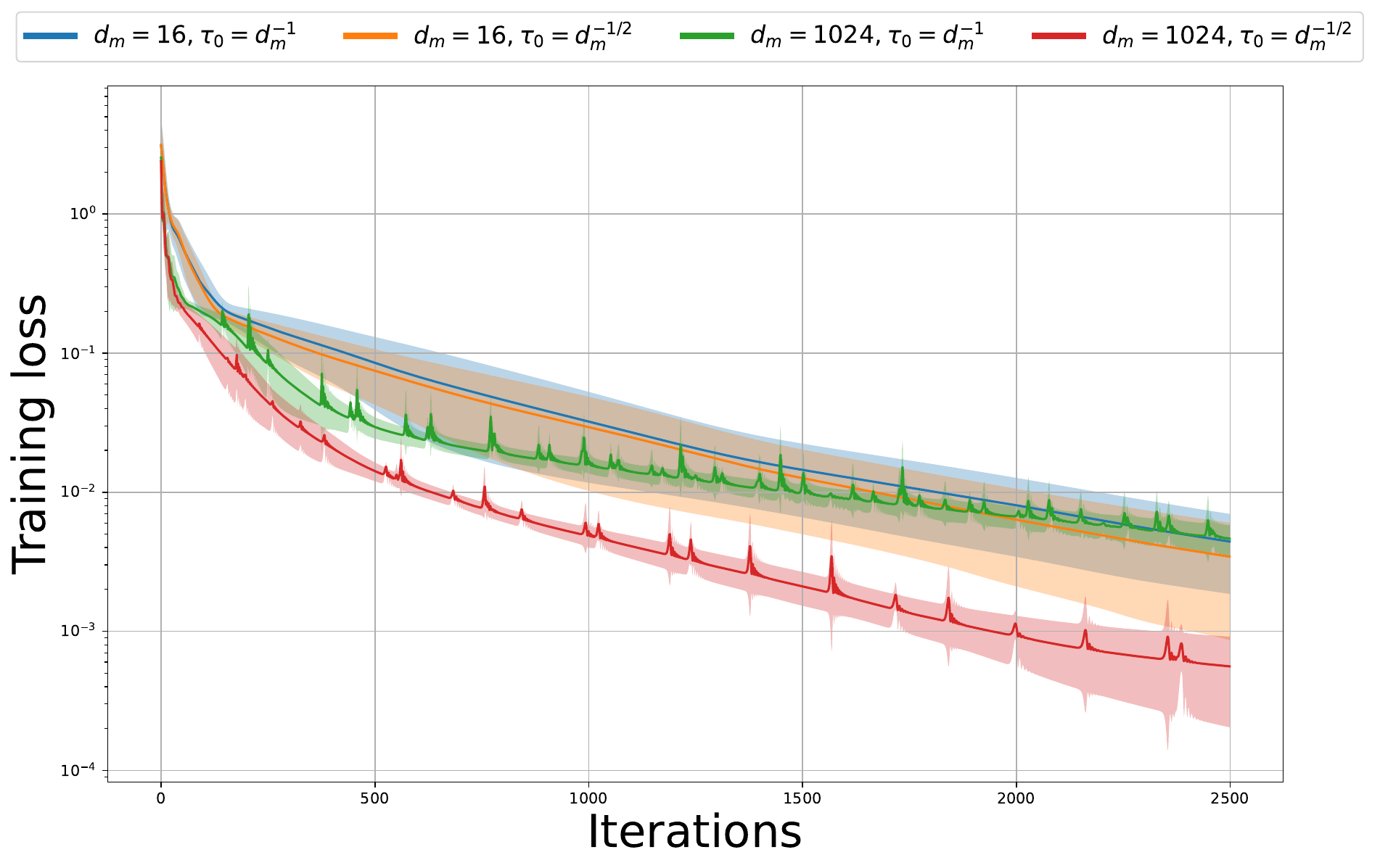}
\caption{ Regression task (with MSE loss) on MNIST dataset.}
\label{fig:reg}
\end{figure}

\label{sec:appendix_exp_spec}

 \begin{figure*}[!t]
    \subfloat[Convergence curve.]{\label{f1}\includegraphics[width=0.3\textwidth]{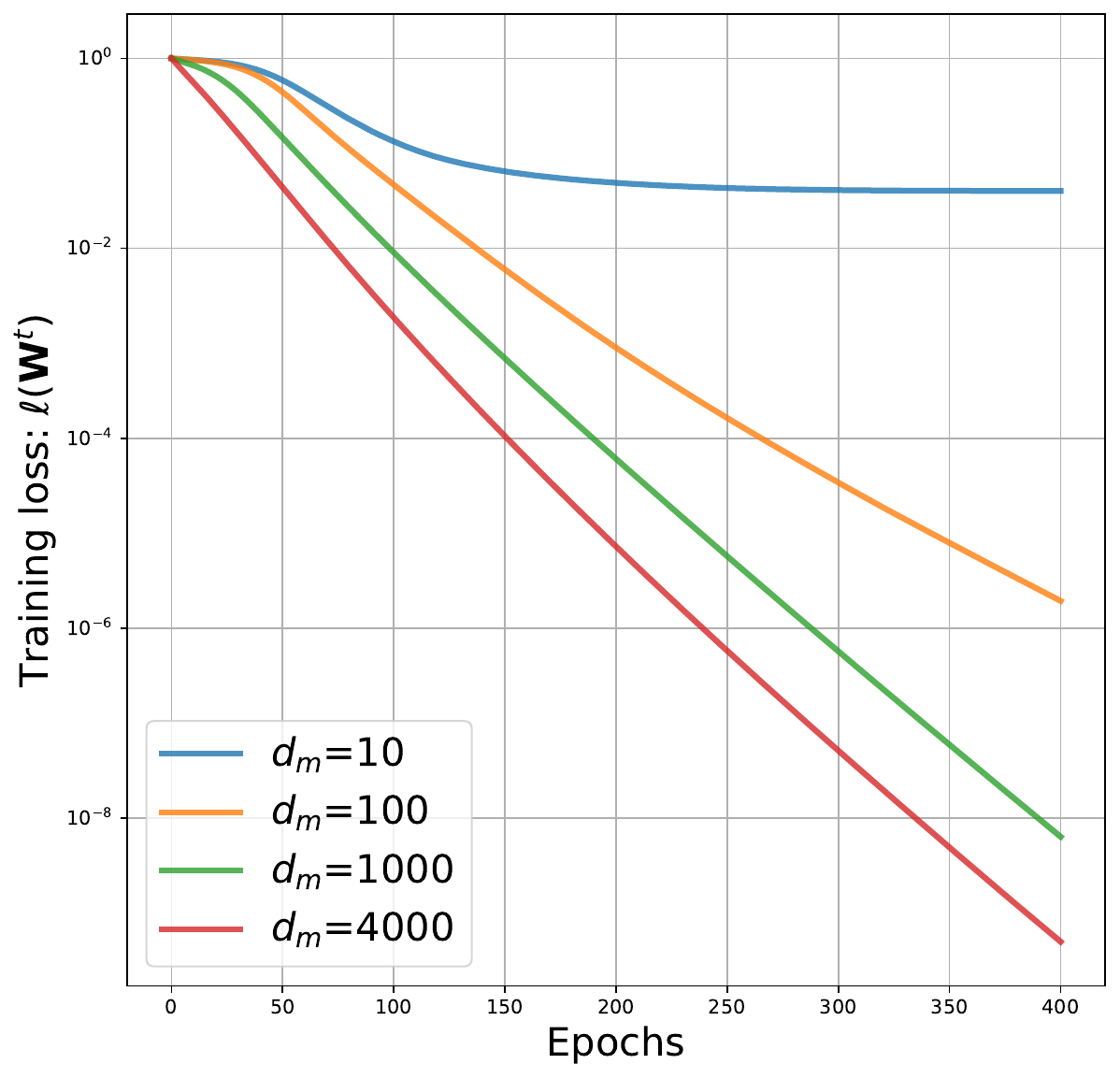}}
\vspace{0.4mm}
    \subfloat[Weight movement.]{\label{f2}\includegraphics[width=0.3\textwidth]{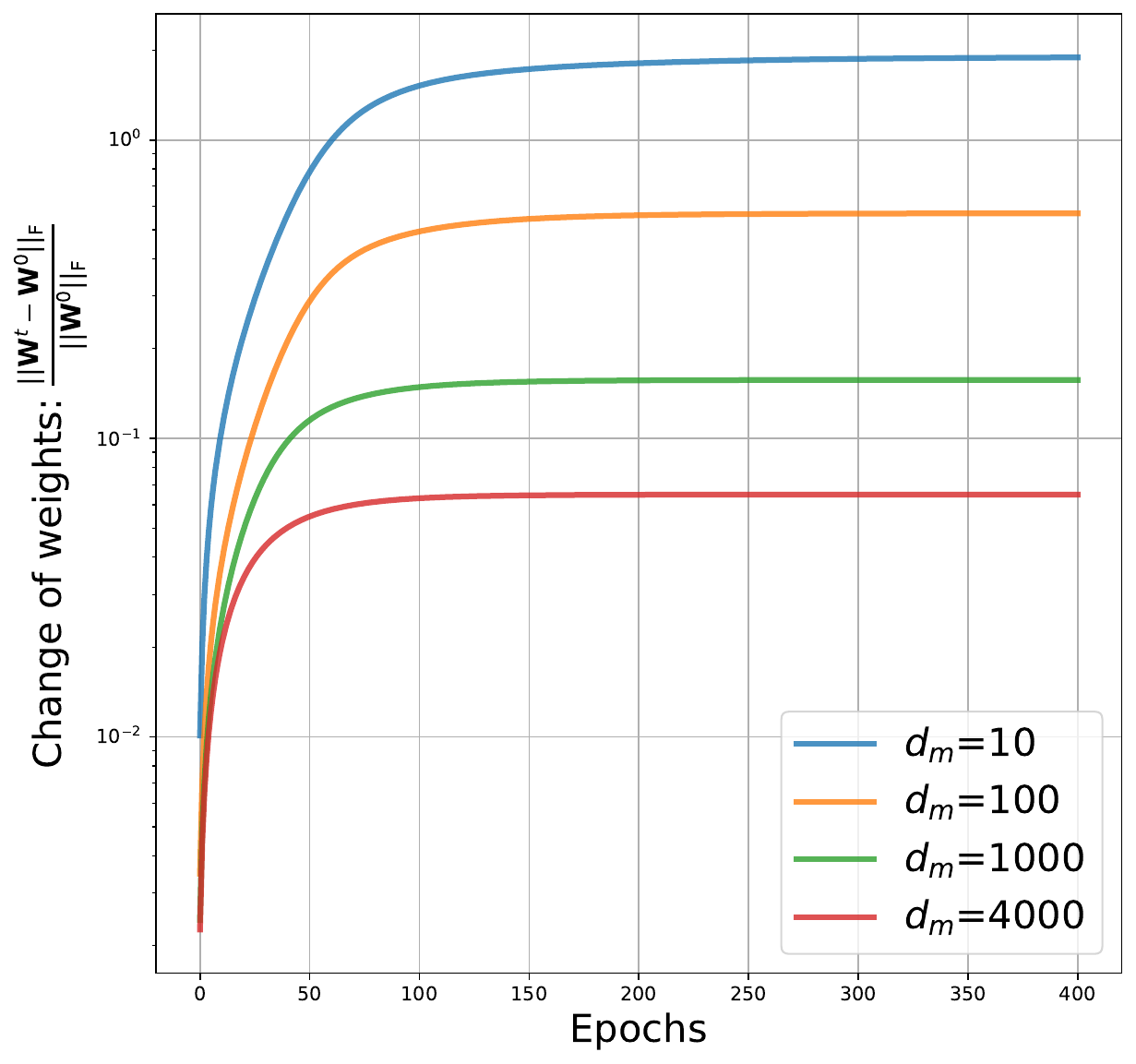}}
    \subfloat[Kernel distance.]
    {\label{f3}\includegraphics[width=0.35\textwidth]{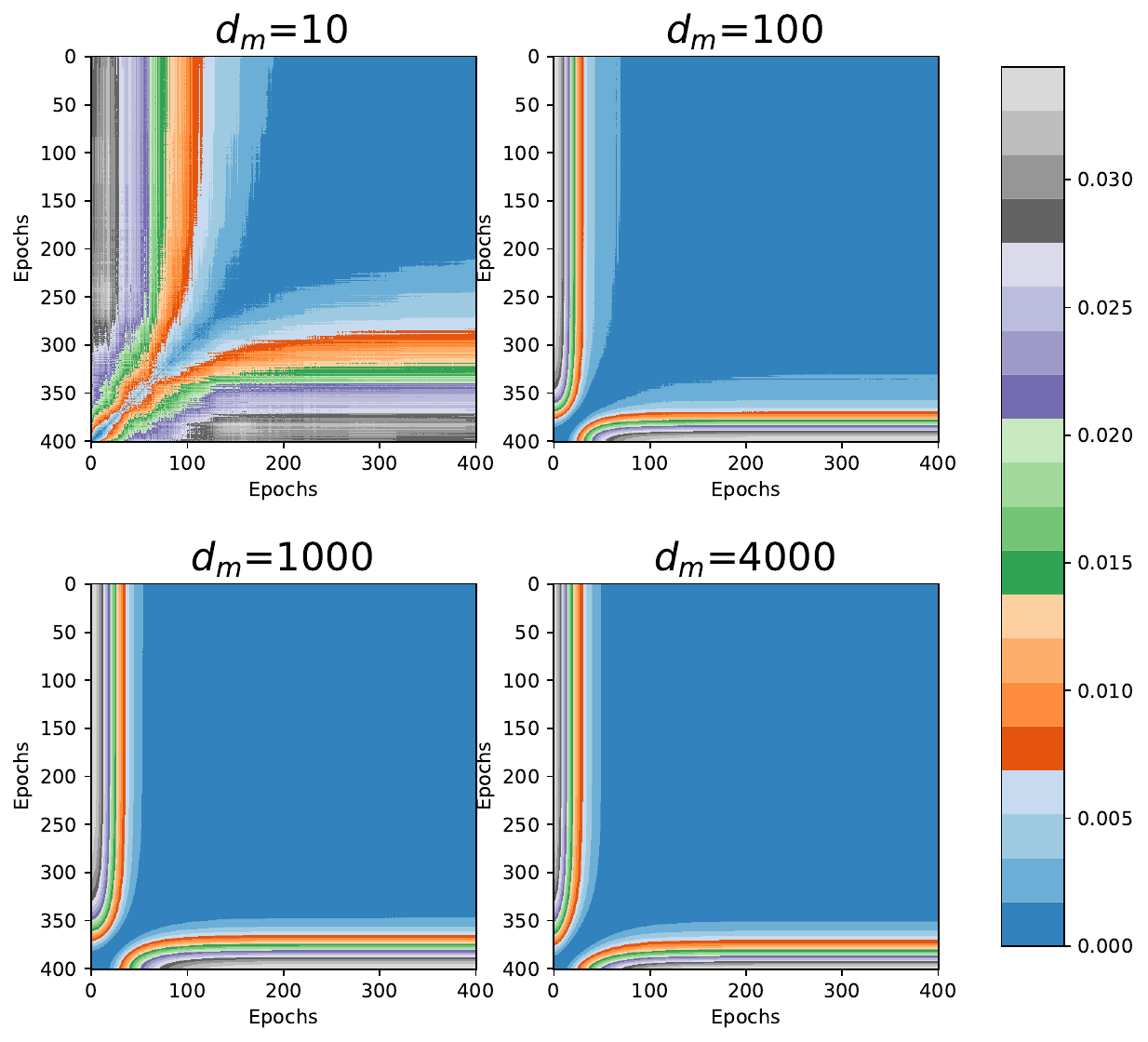}}
\vspace{-1mm}
\caption{
Experimental validation of the theoretical results on Transformers with $\tau_0 = d_m^{-1}$ scaling trained on synthetic data. (a) Linear convergence. (b) Rate of change of the weights during training. Observe that the weights change very slowly after the $50^{\text{th}}$ epoch. (c) Evolution of the NTK during the training. The result mirrors the plot (b) and demonstrates how the kernel varies significantly at the beginning of the training and remains approximately constant later. As the width increases, the NTK becomes more stable.
}
\label{fig:convergencelecun}
\end{figure*}

\section{Limitation}
\label{sec:appendix_limitation}
Firstly, in this work, we prove the global convergence guarantee for \san{} under different initialization schemes. Nevertheless, we only consider the gradient descent training under squared loss. Our result does not cover SGD training and other loss functions, e.g., hinge loss and cross-entropy loss. Secondly, the model architecture being analyzed in this work is the encoder of \san{} which is widely used for regression problems or image classification problems. We do not consider the entire \san{} including the decoder, which is designed for sequence-to-sequence modeling. Lastly, our model does not cover the deep Transformer. We believe our analytic framework paves a way for further analyzing large-scale Transformer.

\section{Societal impact}
\label{sec:appendix_societal}
This work offers a convergence analysis for the Transformer, which is a core element within contemporary large language models.
Our theoretical analysis with realistic initializations and scaling schemes lays a theoretical
foundation for the interested practitioner in the ML community to further study other priorities of Transformer such as generalization and in-context learning. We do not expect any negative societal bias from this work.

\end{document}